\DeclareSymbolFont{extraup}{U}{zavm}{m}{n}
\DeclareMathSymbol{\varheart}{\mathalpha}{extraup}{86}
\DeclareMathSymbol{\vardiamond}{\mathalpha}{extraup}{87}
\numberwithin{equation}{section}
\theoremstyle{plain}
\newtheorem{theorem}{Theorem}[section]
\newtheorem{proposition}[theorem]{Proposition}
\newtheorem{lemma}[theorem]{Lemma}
\newtheorem{corollary}{Corollary}[section]
\newtheorem{definition}{Definition}[section]
\newtheorem{remark}{Remark}[section]
\long\def\comment#1{}
\newcommand{\defn}{:\,=}
\newcommand{\id}{\mathsf{id}}
\newcommand{\dimone}{n_1}
\newcommand{\dimtwo}{n_2}
\newcommand{\maxdimonetwo}{\left(\dimone \lor \dimtwo \right)}
\newcommand{\csst}{\mathbb{C}_{\mathsf{SST}}}
\newcommand{\Cbiso}{\mathbb{C}_{\mathsf{BISO}}}
\newcommand{\bl}{\mathsf{bl}}
\newcommand{\blref}{\mathsf{bl_{ref}}}
\newcommand{\Pzero}{\mathbb{P}_{0}}
\newcommand{\Pone}{\mathbb{P}_{1}}
\newcommand{\Pzerotil}{\widetilde{\mathbb{P}}_{0}}
\newcommand{\Ponetil}{\widetilde{\mathbb{P}}_{1}}
\newcommand{\taulem}{\chi}
\newcommand{\set}{\ensuremath{\mathcal{S}}}
\newcommand{\minradius}{\ensuremath{\mathbf{r}}}
\newcommand{\E}{\ensuremath{{\mathbb{E}}}}
\newcommand{\1}{\ensuremath{{\sf (i)}}}
\newcommand{\2}{\ensuremath{{\sf (ii)}}}
\newcommand{\3}{\ensuremath{{\sf (iii)}}}
\DeclareMathOperator{\modd}{mod}
\newcommand{\NORMAL}{\ensuremath{\mathcal{N}}}
\newcommand{\BER}{\ensuremath{\mbox{\sf Ber}}}
\newcommand{\BIN}{\ensuremath{\mbox{\sf Bin}}}
\newcommand{\thetastar}{\ensuremath{\theta^*}}
\newcommand{\thetahat}{\ensuremath{\widehat{\theta}}}
\newcommand{\Mhat}{\ensuremath{\widehat{M}}}
\newcommand{\Mstar}{\ensuremath{M^*}}
\newcommand{\Mtilde}{\ensuremath{\widetilde{M}}}
\newcommand{\CRL}{\ensuremath{\mathsf{CRL}}}
\newcommand{\Mhatls}{\ensuremath{\widehat{M}}_{{\sf LS}}}
\newcommand{\real}{\ensuremath{\mathbb{R}}}
\newcommand{\pihat}{\ensuremath{\widehat{\pi}}}
\newcommand{\pihatref}{\ensuremath{\widehat{\pi}_{\sf ref}}}
\newcommand{\pihattds}{\ensuremath{\widehat{\pi}_{\sf tds}}}
\newcommand{\pihatftds}{\ensuremath{\widehat{\pi}_{{\sf tds}}}}
\newcommand{\sigmahat}{\ensuremath{\widehat{\sigma}}}
\newcommand{\sigmahattds}{\ensuremath{\widehat{\sigma}_{\sf tds}}}
\newcommand{\sigmahatftds}{\ensuremath{\widehat{\sigma}_{\sf tds}}}
\newcommand{\sigmahatpre}{\ensuremath{\widehat{\sigma}_{\sf pre}}}
\newcommand{\EE}{\ensuremath{\mathbb{E}}}
\newcommand{\symgp}{\mathfrak{S}}
\newcommand{\vars}{\zeta}
\newcommand{\maxvarone}{(\vars^2 \lor 1)}
\newcommand{\varplusone}{(\vars + 1)}
\newcommand{\bfone}{\mathbf{1}}
\DeclareMathOperator{\trace}{\mathsf{tr}}
\DeclareMathOperator{\Poi}{\mathsf{Poi}}
\DeclareMathOperator{\var}{\mathsf{var}}
\newcommand{\Cperm}{\mathbb{C}_{\mathsf{Perm}}^{\mathsf{r},\mathsf{c}}}
\newcommand{\Cpermr}{\mathbb{C}_{\mathsf{Perm}}^{\mathsf{r}}}
\newcommand{\Cskew}{\mathbb{C}_{\mathsf{skew}}}
\newcommand{\BL}{\mathsf{BL}}
\newcommand{\TV}{\mathsf{TV}}
\newcommand{\cE}{\mathcal{E}}
\newcommand{\bllarge}{\mathsf{BL}^{\mathbb{L}}}
\newcommand{\blsmall}{\mathsf{BL}^{\mathbb{S}}}
\newcommand{\blone}{\BL^{(1)}}
\newcommand{\bltwo}{\BL^{(2)}}
\newcommand{\blt}{\BL^{(t)}}
\newcommand{\blocksize}{\dimtwo \sqrt{\frac{\dimone}{N} \log (\dimone \dimtwo) }}
\newcommand{\order}{\ensuremath{\mathcal{O}}}
\newcommand{\ordertil}{\ensuremath{\widetilde{\order}}}
\newcommand{\Gdist}{\mathbb{G}}
\newcommand{\Bdist}{\mathbb{V}}
\newcommand{\cond}{\, | \,}
\newcommand{\cC}{\mathcal{C}}
\newcommand{\cB}{\mathcal{B}}
\newcommand{\cQ}{\mathcal{Q}}
\newcommand{\diam}{\operatorname*{\mathsf{diam}}}
\begin{document}

\begin{center}

{\bf{\LARGE{Towards Optimal Estimation of Bivariate Isotonic Matrices
      with Unknown Permutations}}}



\vspace*{.2in}

{\large{
\begin{tabular}{ccc}
Cheng Mao$^\star$ & Ashwin Pananjady$^\dagger$ & Martin
J. Wainwright$^{\dagger, \ddagger}$
\end{tabular}
}}
\vspace*{.2in}

\begin{tabular}{c}
Department of Statistics and Data Science, Yale University$^\star$ \\
Department of Electrical Engineering and Computer Sciences, UC Berkeley$^\dagger$ \\
Department of Statistics, UC Berkeley$^\ddagger$
\end{tabular}

\vspace*{.2in}

\today

\end{center}
\vspace*{.2in}

\begin{abstract}

 Many applications, including rank aggregation, crowd-labeling, and
  graphon estimation, can be modeled in terms of a bivariate isotonic
  matrix with unknown permutations acting on its rows and/or
  columns. We consider the problem of estimating an unknown matrix in
  this class, based on noisy observations of (possibly, a subset of)
  its entries. We design and analyze polynomial-time algorithms that
  improve upon the state of the art in two distinct metrics, showing,
  in particular, that minimax optimal, computationally efficient
  estimation is achievable in certain settings.
Along the way, we prove matching upper and lower bounds on the minimax
radii of certain cone testing problems, which may be of independent
interest. 
\end{abstract}


\section{Introduction}

Structured matrices with unknown permutations acting on their rows and
columns arise in multiple applications, including estimation from
pairwise comparisons~\cite{BraTer52,ShaBalGunWai17} and
crowd-labeling~\cite{DawSke79,ShaBalWai16}. Traditional parametric
models (e.g.,~\cite{BraTer52,Luc59,Thu27,DawSke79}) assume that these
matrices are obtained from rank-one or rank-two matrices via a known link
function. Aided by tools such as maximum likelihood estimation and
spectral methods, researchers have made significant progress in
studying both statistical and computational aspects of these
parametric
models~\cite{HajOhXu14,RajAga14,Shaetal16,NegOhSha16,ZhaCheDenJor16,
  GaoZho13, GaoLuZho16, KarOhSha11-b, LiuPenIhl12, DasDasRas13,
  GhoKalMcA11} and their low-rank
generalizations~\cite{RajAga16,NegOhTheXu17,
  KarOhSha11}.~\nocite{LeeSha17, CheSuh15, ParNeeZhaSanDhi15}

On the other hand, evidence from empirical studies suggests that
real-world data is not always well-described by such parametric
models~\cite{McLLuc65,BalWil97}. With the goal of increasing model
flexibility, a recent line of work has studied the class of
\emph{permutation-based} models~\cite{Cha15,
  ShaBalGunWai17,ShaBalWai16}.  Rather than imposing parametric
conditions on the matrix entries, these models impose only shape
constraints on the matrix, such as monotonicity, before unknown
permutations act on the rows and columns of the matrix.  On one hand, this more
flexible class reduces modeling bias compared to its parametric
counterparts while, perhaps surprisingly, producing models that can be
estimated at rates that differ only by logarithmic factors from the
classical parametric models. On the other hand, these advantages of
permutation-based models are accompanied by significant computational
challenges. The unknown permutations make the parameter space highly
non-convex, so that efficient maximum likelihood estimation is
unlikely. Moreover, spectral methods are often sub-optimal in
approximating shape-constrained sets of
matrices~\cite{Cha15,ShaBalGunWai17}. Consequently, results from many
recent papers show a non-trivial statistical-computational gap in
estimation rates for models with latent
permutations~\cite{ShaBalGunWai17,ChaMuk16,ShaBalWai16,FlaMaoRig16,PanWaiCou17}.

\paragraph{Related work}
While the primary motivation of our work comes from non-parametric
methods for aggregating pairwise comparisons, we begin by discussing a
few other lines of related work. The current paper lies at the
intersection of shape-constrained estimation and latent permutation
learning. Shape-constrained estimation has long been a major topic in
non-parametric statistics, and of particular relevance to our work is
the estimation of a bivariate isotonic matrix without latent
permutations~\cite{ChaGunSen18}. There, it was shown that the minimax
rate of estimating an $\dimone \times \dimtwo$ matrix from noisy
observations of all its entries is $\widetilde \Theta((\dimone
\dimtwo)^{-1/2})$. The upper bound is achieved by the least squares
estimator, which is efficiently computable due to the convexity of the
parameter space.

Shape-constrained matrices with permuted rows or columns also arise in
applications such as seriation~\cite{FogJenBacdAs13,FlaMaoRig16},
feature matching~\cite{ColDal16}, and graphon
estimation~\cite{BicCheLev11, ChaAir14, GaoLuZho15, KloTsyVer17}. In
particular, the monotone subclass of the statistical seriation
model~\cite{FlaMaoRig16} contains $n \times n$ matrices that have
increasing columns, and an unknown row permutation. Flammarion et
al.~\cite{FlaMaoRig16} established the minimax rate $\widetilde
\Theta(n^{-2/3})$ for estimating matrices in this class and proposed a
computationally efficient algorithm with rate $\ordertil
(n^{-1/2})$. For the subclass of such matrices where in addition, the
rows are also monotone, the results of the current paper improve the
two rates to $\widetilde \Theta (n^{-1})$ and $\ordertil (n^{-3/4})$
respectively.

Graphon estimation has seen its own extensive literature, and we only
list those papers that are most relevant to our setting. In essence,
these problems involve non-parametric estimation of a bivariate
function $f$ from noisy observations of $f(\xi_i, \xi_j)$ with the
design points $\{\xi_i\}_{i = 1}^n$ drawn i.i.d. from some
distribution supported on the interval $[0, 1]$. In contrast to
non-parametric estimation, however, the design points in graphon
estimation are unobserved, which gives rise to the underlying latent
permutation. Modeling the function $f$ as monotone recovers the model
studied in this paper, but other settings have been studied by various
authors: notably those where the function $f$ is
Lipschitz~\cite{ChaAir14}, block-wise constant~\cite{BicCheLev11,
  GaoLuZho15, KloTsyVer17} (also known as the stochastic block
model~\cite{Abb17}), or with $f$ satisfying other smoothness
assumptions~\cite{WolOlh13, GaoLuZho15, BorChaCohGan15}. There are
many interesting statistical-computational gaps also known to exist in
many of these problems.






Another related model in the pairwise comparison literature is that of
noisy sorting~\cite{BraMos08}, which involves a latent permutation
but no shape-constraint. In this prototype of a permutation-based
ranking model, we have an unknown, $n \times n$ matrix with constant
upper and lower triangular portions whose rows and columns are acted
upon by an unknown permutation. The hardness of recovering any such
matrix in noise lies in estimating the unknown permutation. As it
turns out, this class of matrices can be estimated efficiently at
minimax optimal rate $\widetilde \Theta(n^{-1})$ by multiple
procedures: the original work by Braverman and Mossel~\cite{BraMos08}
proposed an algorithm with time complexity $\order(n^c)$ for some
unknown and large constant $c$, and recently, an $\ordertil(n^2)$-time
algorithm was proposed by~Mao et al.~\cite{MaoWeeRig17}.  These
algorithms, however, do not generalize beyond the noisy sorting class,
which constitutes a small subclass of an interesting class of matrices
that we describe next.

The most relevant body of work to the current paper is that on
estimating matrices satisfying the \emph{strong stochastic
  transitivity} condition, or SST for short. This class of matrices
contains all $n \times n$ bivariate isotonic matrices with unknown
permutations acting on their rows and columns, with an additional
skew-symmetry constraint. The first theoretical study of these
matrices was carried out by Chatterjee~\cite{Cha15}, who showed that
a spectral algorithm achieved the rate $\ordertil(n^{-1/4})$ in the
normalized, squared Frobenius norm. Shah et al.~\cite{ShaBalGunWai17} then
showed that the minimax rate of estimation is given by $\widetilde
\Theta (n^{-1})$, and also improved the analysis of the spectral
estimator of Chatterjee to obtain the computationally
efficient rate $\ordertil (n^{-1/2})$. In follow-up
work~\cite{ShaBalWai16-2}, they also showed a second $\CRL$ estimator
based on the Borda count that achieved the same rate, but in
near-linear time. In related work, Chatterjee and
Mukherjee~\cite{ChaMuk16} analyzed a variant of the $\CRL$ estimator,
showing that for sub-classes of SST matrices, it achieved rates that
were faster than $\order(n^{-1/2})$. In a complementary direction, a
superset of the current authors~\cite{PanMaoMutWaiCou17} analyzed the
estimation problem under an observation model with structured missing
data, and showed that for many observation patterns, a variant of the
$\CRL$ estimator was minimax optimal.

Shah et al.~\cite{ShaBalWai16-2} also showed that conditioned on the
planted clique conjecture, it is impossible to improve upon a certain
notion of adaptivity of the $\CRL$ estimator in polynomial time.  Such
results have prompted various authors~\cite{FlaMaoRig16,
  ShaBalWai16-2} to conjecture that a similar
statistical-computational gap also exists when estimating SST matrices
in the Frobenius norm.

In our own preliminary work~\cite{mao18breakingcolt}, we announced
progress on the aforementioned statistical-computational gap. In
particular, we claimed a computationally efficient algorithm that for
$n \times n$ matrices, attained the rate $\ordertil(n^{-3/4})$ in the
squared Frobenius error in the full observation setting. This result
was stated in the extended abstract~\cite{mao18breakingcolt}, but not
proved there.  The current manuscript contains a superset of results
presented in the abstract~\cite{mao18breakingcolt}; in particular,
Theorem~\ref{thm:fast-tds} of the current manuscript significantly
extends Theorem 1 of the abstract~\cite{mao18breakingcolt}, and the
proof of~\cite[Theorem 1]{mao18breakingcolt} appears for the first
time in the current manuscript as a corollary of the proof of
Theorem~\ref{thm:fast-tds}.

It is also worth mentioning that estimation of such matrices has also
been studied in other metrics motivated by ranking applications.  One
such metric is the \emph{max-row-norm}, to be defined precisely in
Section~\ref{sec:err-met}; it has been studied in the paper
papers~\cite{ChaMuk16, ShaBalWai16-2} and quantifies a notion of
(matrix-weighted) distance between permutations. This metric has also
been used more recently to learn mixtures of such rankings, and a
characterization of the fundamental limits of estimation in this
metric has been acknowledged to be an important problem for ranking
applications~\cite{shah2018learning}.

\paragraph{Contributions}
In this paper, we study the problem of
estimating a bivariate isotonic matrix with unknown permutations
acting on its rows and columns, given noisy, (possibly) partial observations of its entries; this matrix class strictly contains the SST
model~\cite{Cha15,ShaBalGunWai17} for ranking from pairwise
comparisons.
We also study a sub-class of such matrices motivated by applications in crowd-labeling, which consists of bivariate isotonic matrices with one unknown permutation acting on its rows.

We begin by characterizing, in both the Frobenius and max-row-norm metrics,
the fundamental limits of estimation of both classes of matrices; the former result significantly generalizes those obtained by Shah et al.~\cite{ShaBalGunWai17}. In particular, our results hold for arbitrary matrix dimensions and sample sizes, and also extend results of Chatterjee, Guntuboyina and Sen~\cite{ChaGunSen18} for estimating the sub-class of bivariate isotonic matrices without unknown permutations. We then present computationally efficient algorithms for estimating both classes of matrices; these algorithms are novel in the sense that they are neither spectral in nature, nor simple variations of the Borda count estimator that was previously employed. 
They are also tractable in practice and show significant improvements over state-of-the-art estimators; Figure~\ref{fig:plot} presents such a comparison for our algorithm specialized to SST matrices with (roughly) one observation per entry.

These algorithms are analyzed in both the Frobenius and max-row-norm error metrics. In particular, we show that our algorithms attain minimax-optimal rates of estimation for the class $\Cperm$ in the max-row-norm, and our characterization additionally sheds light on the limitations of existing analyses in the literature. 
The rates attained by the algorithm in the Frobenius error match those announced in the abstract~\cite{mao18breakingcolt} for square matrices with partial observations. Additionally, we also show that these algorithms are minimax-optimal when the number of observations grows to be sufficiently large; notably, this stands in stark contrast to existing computationally efficient algorithms, which are not minimax-optimal in \emph{any} regime of the problem. Also notable is that the proof of our results provided in the supplementary material---which simultaneously covers both the partial observation and large sample settings---contains (as a corollary) the first proof of~\cite[Theorem 1]{mao18breakingcolt}.
\begin{figure}[ht]
\centering
\begin{minipage}[c]{.5\linewidth}
\includegraphics[clip, trim=11cm 6.5cm 6.2cm 6.3cm, width=\linewidth]{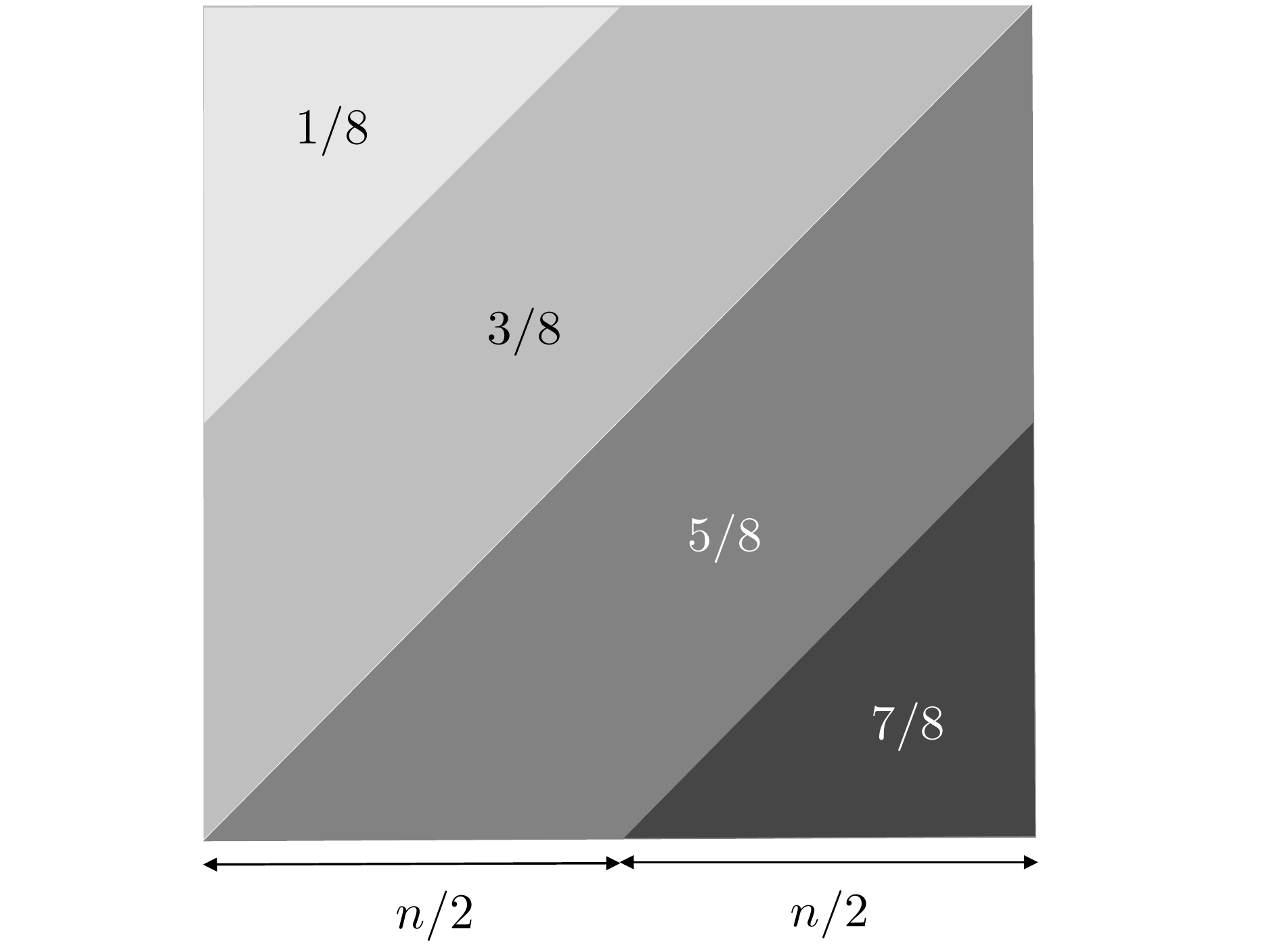}
\end{minipage}
\begin{minipage}[c]{.48\linewidth}
\includegraphics[clip, trim=1.4cm 6.45cm 2.4cm 6.7cm, width=\linewidth]{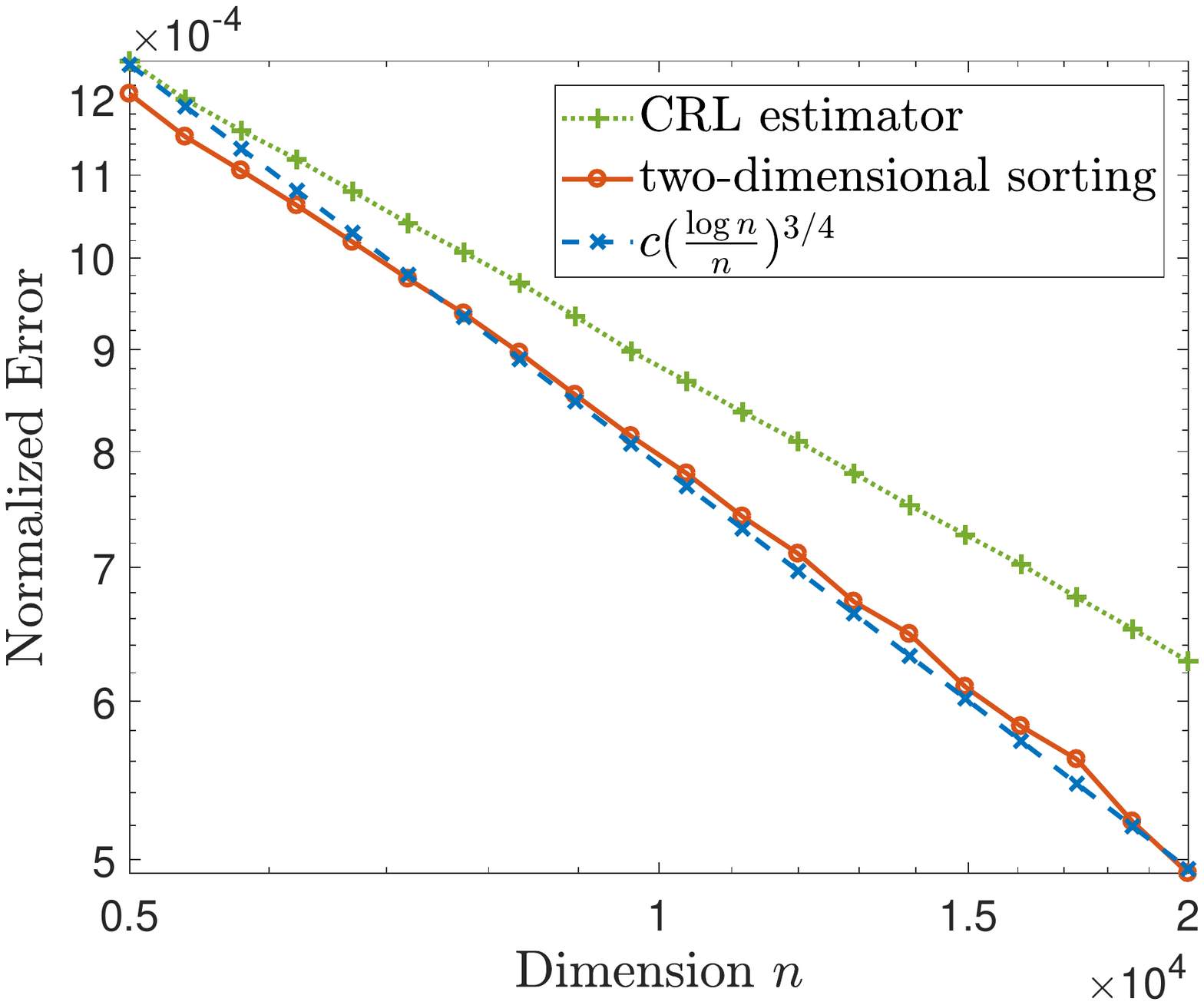}
\end{minipage} 
\caption{\textbf{Left:} A bivariate isotonic matrix; the ground truth
  $M^* \in [0, 1]^{n \times n}$ is a row and column permuted version
  of such a matrix.  \textbf{Right:} A log-log plot of the rescaled
  squared Frobenius error \mbox{$\frac{1}{n^2} \|\Mhat - M^*\|_F^2$}
  versus the matrix dimension $n$.  For each value of the dimension,
  the error is averaged over $10$ experiments each using $n^2$
  Bernoulli observations, and the estimator $\Mhat$ is either the
  two-dimensional sorting estimator that we introduce in
  Section~\ref{sec:tds}, or the $\CRL$ estimator from past
  work~\cite{ShaBalWai16-2}.
}
\label{fig:plot}
\end{figure}




\paragraph{Organization}

In Section~\ref{sec:setup}, we formally introduce our estimation
problem, and describe in detail how it is connected to applications in crowd-labeling and ranking from pairwise
comparisons. Section~\ref{sec:mainresults} contains precise statements and
discussions of our main results, and we provide proofs of our main results in Appendix~\ref{sec:proofs} in the supplementary material. 

\paragraph{Notation}
For a positive integer $n$, let $[n] \defn \{1, 2, \ldots, n\}$. For a
finite set $S$, we use $|S|$ to denote its cardinality. For two
sequences $\{a_n\}_{n=1}^\infty$ and $\{b_n\}_{n=1}^\infty$, we write
$a_n \lesssim b_n$ if there is a universal constant $C$ such that $a_n
\leq C b_n$ for all $n \geq 1$. The relation $a_n \gtrsim b_n$ is
defined analogously.  We use $c, C, c_1, c_2, \dots$ to denote
universal constants that may change from line to line.  We use
$\BER(p)$ to denote the Bernoulli distribution with success
probability $p$, the notation $\BIN(n,p)$ to denote the binomial
distribution with $n$ trials and success probability $p$, and the
notation $\Poi(\lambda)$ to denote the Poisson distribution with
mean $\lambda > 0$.  Given a matrix $M \in \real^{\dimone \times
  \dimtwo}$, its $i$-th row is denoted by $M_i$. 
Let $\symgp_n$ denote
the set of all permutations $\pi: [n] \to [n]$. Let $\id$ denote the
identity permutation, where the dimension can be inferred from
context.


\section{Background and problem setup} \label{sec:setup}

In this section, we present the relevant technical background and
notation on permutation-based models, and introduce the observation
model and error metrics of interest. We also elaborate on how exactly these
models arise in practice.

\subsection{Matrix models}

Our main focus is on designing efficient algorithms for estimating a
bivariate isotonic matrix with unknown permutations acting on its rows
and columns. Formally, we define $\Cbiso$ to be the class of matrices
in $[0,1]^{\dimone \times \dimtwo}$ with non-decreasing rows and
non-decreasing columns. For readability and without loss of generality, we assume frequently (in particular, everywhere except for Proposition~\ref{prop:meta} and Section~\ref{sec:ordered-col}) that
$\dimone \geq \dimtwo$; our results can be
straightforwardly extended to the other case. Given a matrix $M \in
\real^{\dimone \times \dimtwo}$ and permutations $\pi \in
\symgp_{\dimone}$ and $\sigma \in \symgp_{\dimtwo}$, we define the
matrix $M(\pi,\sigma) \in \real^{\dimone \times \dimtwo}$ by
specifying its entries as
\begin{align*}
\left[ M(\pi,\sigma)\right]_{i, j} = M_{\pi(i), \sigma(j)} \text{ for
} i \in [\dimone], j \in [\dimtwo].
\end{align*}
Also define the class  $\Cbiso(\pi, \sigma) \defn \{M(\pi, \sigma): M \in \Cbiso\}$
as the set of matrices that are bivariate isotonic when
viewed along the row permutation $\pi$ and column permutation
$\sigma$, respectively.

The classes of matrices that we are interested in estimating are given
by
\begin{align*}
\Cperm \defn \bigcup_{\substack{\pi \in \symgp_{\dimone} \\ \sigma \in
    \symgp_{\dimtwo}}} \Cbiso(\pi, \sigma), \quad \text{ and its
  subclass } \quad \Cpermr \defn \bigcup_{\pi \in \symgp_{\dimone}}
\Cbiso(\pi, \id).
\end{align*}
The former class contains bivariate isotonic matrices with
both rows and columns permuted, and the latter contains those with
only rows permuted.

\subsection{Observation models} \label{sec:obs}

In order to study estimation from noisy observations of a matrix $M^*$
in either of the classes $\Cperm$ or $\Cpermr$, we suppose that $N$
noisy entries are sampled independently and uniformly with replacement
from all entries of $M^*$.  This sampling model is popular in the
matrix completion literature, and is a special case of the \emph{trace
  regression model}~\cite{NegWai10b,KolLouTsy11}. It has also been
used in the context of permutation-based models by Mao et
al.~\cite{MaoWeeRig17} to study the noisy sorting class.

More precisely, let $E^{(i,j)}$ denote the $\dimone \times \dimtwo$
matrix with $1$ in the $(i,j)$-th entry and $0$ elsewhere, and suppose
that $X_\ell$ is a random matrix sampled independently and uniformly
from the set \mbox{$\{E^{(i,j)}: i \in [\dimone], \, j \in [\dimtwo]\}$.} We
observe $N$ independent pairs $\{(X_\ell,
y_\ell)\}_{\ell=1}^N$ from the model
\begin{align} \label{eq:model}
y_\ell = \trace(X_\ell^\top M^*) + z_\ell,
\end{align}
where the observations are contaminated by independent, zero-mean,
sub-exponential noise $z_\ell$ with parameter $\vars$, that is, 
\begin{align}
\E \exp ( s z_\ell ) \le \exp ( \vars^2 s^2 ) \quad \text{ for all } s \text{ such that } |s| \le 1/\vars . 
\label{eq:sub-exp-cond}
\end{align}
Note that if \eqref{eq:sub-exp-cond} holds for all $s \in \real$, then $z_\ell$ is called sub-Gaussian, which is a stronger condition. 
We assume for convenience that an upper bound on the parameter
$\vars$ is known to our estimators; this assumption is mild since for many
standard noise distributions, such an upper bound is either immediate (in the case of any bounded distribution) or the
standard deviation of the noise is a proxy, up to a universal constant factor, for the parameter $\vars$ (in the case of the Gaussian or Poisson noise models) and can be estimated very accurately by a variety of methods\footnote{For instance, one could implement one of many consistent estimators for $M^*$ to obtain a matrix $\Mhat$, and use the quantity $ \{ \frac{1}{N} \sum_{\ell=1}^N [ y_\ell - \trace (X_\ell^\top \Mhat) ]^2 \}^{1/2}$ as an estimate of the standard deviation.}.

It is important to note at this juncture that although the observation model~\eqref{eq:model} is motivated by the matrix completion literature, we make no assumptions of partial observability in our paper. In particular, our results hold for all tuples $(N, \dimone, \dimtwo)$, with the sample size $N$ allowed to grow larger than the effective dimension $\dimone \dimtwo$.

Besides the standard Gaussian observation model, in which 
\begin{subequations}
\begin{align}
z_{\ell} \stackrel{\mathrm{i.i.d.}}{\sim} \NORMAL(0, 1), \label{eq:Gaussnoise}
\end{align}
another noise model of interest is one which arises in
applications such as crowd-labeling and ranking from pairwise
comparisons.  Here, for every \mbox{$x \in \{E^{(i,j)}: i \in [\dimone], \, j \in [\dimtwo]\}$} and conditioned on $X_{\ell} = x$, 
our observations take the form
\begin{align}
y_\ell \sim \BER\big(\trace(x^\top M^*)\big), \label{eq:Bernoise}
\end{align}
\end{subequations}
and consequently, the sub-exponential parameter $\vars$ is bounded. For a discussion of other regimes of noise in a related matrix model, see Gao~\cite{Gao17}.

For analytical convenience, we employ the standard trick of Poissonization, whereby we assume throughout the paper that $N' = \Poi(N)$ random observations are drawn according to the trace regression model~\eqref{eq:model}, with the Poisson random variable drawn independently of everything else. Upper and lower bounds derived under this model carry over with loss of constant factors to the model with exactly $N$ observations; for a detailed discussion, see Appendix~\ref{app:poi}.


Now given $N' = \Poi(N)$ observations $\{(X_\ell, y_\ell)\}_{\ell=1}^{N'}$, let us define the matrix of observations $Y = Y \left(\{(X_\ell, y_\ell)\}_{\ell=1}^{N'}\right)$, with entry $(i, j)$ given by
\begin{align} \label{eq:obs-Y}
Y_{i,j} = \frac{\dimone \dimtwo}{ N } \sum_{\ell = 1}^{N'} y_\ell \, \bfone \{ X_\ell = E^{(i,j)} \} . 
\end{align}
In other words, we simply average the observations at each entry by the expected number of observations, 
so that
$
\E[Y] = M^*.
$
Moreover, we may write the model in the linearized form $Y = M^* + W$,
where $W$ is a matrix of additive noise having independent, zero-mean entries thanks to Poissonization.\footnote{See, e.g, Shah et al.~\cite{ShaBalGunWai17} for a justification of such a decomposition in the fully observed setting.}
To be more precise, we can decompose the noise at each entry as 
\begin{align*}
&W_{i,j} = Y_{i,j} - M^*_{i,j} \\
&= \frac{\dimone \dimtwo}{ N } \sum_{\ell = 1 }^{N'} z_\ell \cdot \bfone \{ X_\ell = E^{(i,j)} \}
+ M^*_{i,j} \frac{\dimone \dimtwo}{ N } \Big( \sum_{\ell = 1}^{N'} \bfone \{ X_\ell = E^{(i,j)} \} - \frac{N}{\dimone \dimtwo} \Big) . 
\end{align*} 
By Poissonization, the quantities $\sum_{\ell = 1}^{N'} \bfone \{ X_\ell = E^{(i,j)} \}$ for $(i, j) \in [\dimone] \times [\dimtwo]$ are i.i.d. $\Poi ( \frac{N}{\dimone \dimtwo} )$ random variables, so the second term above is simply the deviation of a Poisson variable from its mean. 
On the other hand, the first term is a normalized sum of independent sub-exponential noise. 
Therefore, this linearized and decomposed form of noise provides an amenable starting point for our analysis.

\subsection{Error metrics} \label{sec:err-met}

We analyze estimation of the matrix $M^*$ and the permutations $(\pi^*, \sigma^*)$ in two metrics. For a tuple of ``proper" estimates $(\Mhat, \pihat, \sigmahat)$, in that $\Mhat(\pihat, \sigmahat) \in \Cbiso(\pihat, \sigmahat)$ (and $\sigmahat = \id$ if we are estimating over the class $\Cpermr$), the normalized squared Frobenius error is given by the random variable 
\begin{align*}
\mathcal{F} \big( M^*, \Mhat(\pihat, \sigmahat) \big) = \frac{1}{\dimone \dimtwo} \big\| \Mhat(\pihat, \sigmahat) - M^* \big\|_F^2. 
\end{align*}

The max-row-norm approximation error of the estimate $\pihat$, on the other hand, is given by the random variable
\begin{align*}
\mathcal{R} (M^*, \pihat) &= \max_{i \in [\dimone]} \, \mathcal{R}_i (M^*, \pihat), \text{ where} \\
\mathcal{R}_i (M^*, \pihat) &= \frac{1}{\dimtwo} \big\| [M^*(\pi^*, \sigma^*)]_i - [M^*(\pihat, \sigma^*)]_i \big\|_2^2.
\end{align*}
As will be clear from the sequel, the quantity $\mathcal{R}_i (M^*, \pihat)$ arises as a natural consequence of our development; it represents the approximation error of the permutation estimate $\pihat$ on row $i$ of the matrix $M^*(\pi^*, \sigma^*)$.
%

When estimating over the class $\Cperm$, the max-column-norm error $\mathcal{C} (M^*, \sigmahat)$ is defined analogously as
$
\mathcal{C} (M^*, \sigmahat) = \max_{i \in [\dimtwo]} \, \mathcal{C}_i (M^*, \sigmahat)$,  where 
$
\mathcal{C}_i (M^*, \sigmahat) = \frac{1}{\dimone} \big\| [M^*(\pi^*, \sigma^*)]^i - [M^*(\pi^*, \sigmahat)]^i \big\|_2^2,
$
and we have used $M^i$ to denote the $i$th column of a matrix $M$.
However, since the error $\mathcal{C}$ can be shown to exhibit similar behavior to the error $\mathcal{R}$, it suffices to study the max-row-norm error $\mathcal{R}$ defined above.
The relation between the error metrics for a natural class of algorithms is shown in more rigorous terms by Proposition~\ref{prop:meta}. 

\subsection{Applications}
\label{sec:app}

The matrix models studied in this paper
arise in crowd-labeling and estimation from pairwise comparisons,
and can be viewed as generalizations of low-rank matrices of 
a particular type.  

Let us first describe their relevance to the crowd-labeling problem~\cite{ShaBalWai16}. Here, there is a set of
$\dimtwo$ questions of a binary nature; the true answers to these
questions can be represented by a vector $x^* \in \{0, 1\}^{\dimtwo}$,
and our goal is to estimate this vector by asking these questions to
$\dimone$ \emph{workers} on a crowdsourcing platform. Since workers have varying levels of expertise, it is important to \emph{calibrate} them, i.e., to obtain a good estimate of which workers are reliable and which are not. This is typically done by asking them a set of gold standard questions, which are expensive to generate, and sample efficiency is an extremely important consideration. Indeed, gold standard questions are carefully chosen to control for the level of difficulty and diversity~\cite{le2010ensuring, oleson2011programmatic}. Worker calibration is seen as a key step towards improving the quality of samples collected in crowdsourcing applications~\cite{rzeszotarski2011instrumenting, oyama2013accurate}.

Mathematically, we may model worker abilities
via the probabilities with which they
answer questions correctly, and collect these
probabilities within a matrix $M^* \in [0,1]^{\dimone \times
  \dimtwo}$. The entries of this matrix are latent, and must
  be learned from observing workers' answers to questions.
%
In the calibration problem, we know the answers to the questions; from these, we can estimate worker abilities and question difficulties, or more generally, the entries of
the matrix $M^*$. In many applications, we also have additional knowledge about gold standard questions; for instance, in addition to the true answers, we may also know the relative
difficulties of the questions themselves. 



%
Imposing sensible constraints on the matrix $M^*$ in these
applications goes back to classical work on the subject, with the
majority of models of a \emph{parametric} nature; for instance, the
Dawid-Skene model~\cite{DawSke79} is widely used in crowd-labeling
applications, and its variants have been analyzed by many authors
(e.g.,~\cite{KarOhSha11-b, LiuPenIhl12, DasDasRas13,
  GhoKalMcA11}). However, in a parallel line of work, generalizations
of the parametric Dawid-Skene model have been empirically evaluated on
a variety of crowd-labeling tasks~\cite{welinder2010multidimensional,
  whitehill2009whose}, and shown to achieve performance superior to
the Dawid-Skene model for many such tasks.  The permutation-based
model of Shah et al.~\cite{ShaBalWai16} is one such generalization,
and was proven to alleviate some important pitfalls of parametric
models from both the statistical and algorithmic standpoints.
Operationally, such a model assumes that workers have a total ordering
$\pi$ of their abilities, and that questions have a total ordering
$\sigma$ of their difficulties. The matrix $M^*$ is thus bivariate
isotonic when the rows are ordered in increasing order of worker
ability, and columns are ordered in decreasing order of question
difficulty.
However, since worker abilities and question difficulties are unknown
\emph{a priori}, the matrix of probabilities obeys the inclusion $M^*
\in \Cperm$.  In the particular case where we also know the relative
difficulties of the questions themselves, we may assume that the
column permutation is known, so that our estimation problem is now
over the class $\Cpermr$.

Let us now discuss the application to estimation from pairwise
comparisons. An interesting subclass of $\Cperm$ are those matrices
that are square ($\dimone = \dimtwo = n$), and also skew symmetric.
More precisely, let us define $\Cbiso'$ analogously to the class
$\Cbiso$, except with matrices having columns that are non-increasing
instead of non-decreasing. Also define the class
\begin{subequations}
\begin{align}
  \Cskew(n) & \defn \{M \in [0, 1]^{n \times n}: M + M^\top = 11^\top
  \},
\end{align}
as well as   the \emph{strong stochastic transitivity} class
\begin{align}
\csst(n) \defn \left( \bigcup_{\pi \in \symgp_{n}} \Cbiso'(\pi, \pi)
\right) \bigcap \Cskew(n).
\end{align}
\end{subequations}

The class $\csst(n)$ is useful as a model for estimation from pairwise
comparisons~\cite{Cha15, ShaBalGunWai17}, and was proposed as a
strict generalization of parametric models for this
problem~\cite{BraTer52, NegOhSha16, RajAga14}. In particular, given
$n$ items obeying some unknown underlying ranking $\pi$, entry $(i,
j)$ of a matrix $M^* \in \csst(n)$ represent the probability $\Pr(i
\succ j)$ with which item $i$ beats item $j$ in a comparison. The
shape constraint encodes the transitivity condition that for all
triples $(i, j, k)$ obeying $\pi(i) < \pi(j) < \pi(k)$, we must have
\begin{align*}
\Pr(i \succ k) \geq \max\{\Pr(i \succ j), \Pr(j \succ k)\}.
\end{align*}
For a more classical introduction to these models, see the
papers~\cite{Fis73, McLLuc65, BalWil97} and the references
therein. Our task is to estimate the underlying ranking from results
of passively chosen pairwise comparisons\footnote{Such a passive,
  simultaneous setting should be contrasted with the \emph{active}
  case (e.g.,~\cite{HecShaRamWai16, FalOrlPicSur17, AgaAgaAssKha17}),
  where we may sequentially choose pairs of items to compare depending
  on the results of previous comparisons.}  between the $n$ items, or
more generally, to estimate the underlying probabilities $M^*$ that
govern these comparisons\footnote{Accurate, proper estimates of $M^*$
  in the Frobenius error metric translate to accurate estimates of the
  ranking $\pi$ (see Shah et al.~\cite{ShaBalGunWai17}).}. In
particular, the underlying probabilities could be estimated globally,
as reflected in the Frobenius error $\mathcal{F}$, or locally, as
reflected in the max-row-norm error $\mathcal{R}$. In the latter case,
we require that for each $k$, an estimate of the $k$-th ranked item
must be ``close" to the $k$-th ranked item in ground truth. Here,
items $i$ and $j$ are said to be close if the vector of probabilities
with which item $i$ beats other items is similar to the vector of
probabilities with which item $j$ beats other items.  All results in
this paper stated for the more general matrix model $\Cperm$ apply to
the class $\csst(n)$ with minimal modifications.

The error metric $\mathcal{R}$ has also been used more recently to
learn mixtures of rankings from pairwise comparisons, and guarantees
in this norm have been established in this context for the singular
value thresholding estimator~\cite{shah2018learning}.  Thus, a
concrete theoretical study of the fundamental limits of estimation in
this metric is an important problem for these ranking applications,
and our work provides such an analysis for classes of
permutation-based ranking models. From a technical standpoint, prior
work has often bounded the Frobenius error $\mathcal{F}$ with the
metric $\mathcal{R}$ (see
equations~\eqref{eq:max-norms-r}--\eqref{eq:max-norms-c}), so studying
the error metric $\mathcal{R}$ allows us to better understand the
limitations of existing estimators in the $\mathcal{F}$ metric.


\section{Main results} \label{sec:mainresults}

In this section, we present precise statements of our main results. We
assume throughout this section (unless otherwise stated) that as per
the setup, we have $\dimone \geq \dimtwo$, and provide a summary of
our results---in Table~\ref{tab:results}---for the special case of
square, $n \times n$ matrices, with $N = n^2$. We emphasize that all
our results are significantly more general---for instance, our results
in the Frobenius error hold for all tuples $(N, \dimone,
\dimtwo)$---and Table~\ref{tab:results} captures only a small subset
of them. For example, it does not capture the aforementioned minimax
optimality of our efficient estimators in the Frobenius error when $N$
is large.  Let us first revisit the fundamental limits of estimation
in the Frobenius error, and prove lower bounds in the max-row-norm
error. We then introduce our algorithms in Section~\ref{sec:ex-algo}.

\begin{table}[ht]
\centering
\begin{tabular}{|c|c|c|c|c|}
\hline
\multicolumn{1}{|r|}{\begin{tabular}[c]{@{}r@{}}Model Class \\ $\rightarrow$\end{tabular}} & \multicolumn{2}{c|}{Class $\Cpermr$}                                                                                                                                                & \multicolumn{2}{c|}{Class $\Cperm$}                                                                                                                                                                               \\ \cline{2-5} 
\multicolumn{1}{|l|}{Metric $\downarrow$}                                                  & Lower bounds                                                                                          & Efficient alg.                                                      & Minimax risk                                                                     & Efficient alg.                                                                                                         \\ \hline
\begin{tabular}[c]{@{}c@{}} Frobenius \\ estimation \\ error\\ $\EE\left[ \mathcal{F} \right]$\end{tabular}                    & \begin{tabular}[c]{@{}c@{}}  $\Omega(n^{-1})$ \\  Theorem~\ref{thm:funlim} \\ (Minimax) \end{tabular} & \begin{tabular}[c]{@{}c@{}}$\widetilde{O}(n^{-3/4})$\\ Theorem~\ref{thm:ordered-col} \end{tabular} & \begin{tabular}[c]{@{}c@{}}$\widetilde{\Theta}(n^{-1})$\\ Theorem~\ref{thm:funlim}; \\ \cite{ShaBalGunWai17} \end{tabular}   & \begin{tabular}[c]{@{}c@{}}$\widetilde{O}(n^{-3/4})$\\ Theorem~\ref{thm:fast-tds}; \\ \cite{mao18breakingcolt} \end{tabular}
 \\ \hline
\begin{tabular}[c]{@{}c@{}}Max-row-norm\\ approximation\\ error\\ $\EE\left[ \mathcal{R} \right]$\end{tabular}   & \begin{tabular}[c]{@{}c@{}}$\Omega(n^{-3/4})$\\ Theorem~\ref{thm:rowlb} \\ (Local alg.) \end{tabular}                      & \begin{tabular}[c]{@{}c@{}}$\widetilde{O}(n^{-3/4})$\\ Theorem~\ref{thm:ordered-col} \end{tabular} & \begin{tabular}[c]{@{}c@{}}$\widetilde{\Theta}(n^{-1/2})$\\ Theorem~\ref{thm:rowlb} \end{tabular} & \begin{tabular}[c]{@{}c@{}}$\widetilde{O}(n^{-1/2})$\\ Theorem~\ref{thm:fast-tds}; \\ \cite{ShaBalWai16-2, ChaMuk16} \end{tabular}
                                                    \\ \hline
\end{tabular}
\caption{Estimation rates for each model class and metric for $n
  \times n$ matrices with Bernoulli or standard Gaussian observations
  of $n^2$ entries. The lower bound for the class $\Cpermr$ in the
  metric $\mathcal{R}$ holds for a class of natural ``local"
  algorithms introduced in Definition~\ref{def:pdd}; many algorithms
  that we are aware of (including our own) are covered by this
  class. \\ (Theorem numbers reference the present
  paper.) \vspace{-5mm}}
\label{tab:results}
\end{table}

\subsection{Statistical limits of estimation} \label{sec:funlim}
We begin by characterizing the fundamental limits of estimation under the trace regression observation model~\eqref{eq:model} with $N' = \Poi(N)$ observations. We define the least squares estimator over a closed set~$\mathbb{C}$ of $\dimone \times \dimtwo$ matrices as the projection
\begin{align*}
\Mhatls(\mathbb{C}, Y) \in \arg\min_{M \in \mathbb{C}} \|Y - M \|_F^2 .
\end{align*}
The projection is a non-convex problem when the class $\mathbb{C}$ is given by either the class $\Cperm$ or $\Cpermr$, and is unlikely to be computable exactly in polynomial time. However, studying this estimator allows us to establish a baseline that characterizes the best achievable statistical rate. 
In the following theorem, we characterizes the Frobenius risk of the least squares estimator, and also provide a minimax lower bound. These results hold for any sample size $N$ of the problem.
%
Also recall the shorthand $Y = Y \left(\{X_{\ell}, y_{\ell} \}_{\ell = 1}^{N'}\right)$, and let $\bar{\vars} \defn \vars \lor 1$ denote the proxy for the noise that accounts for missing data.
\begin{theorem} \label{thm:funlim}
(a) Suppose that $\dimtwo \leq \dimone$. 
There is an absolute constant $c_1 > 0$ such that for any matrix $M^* \in \Cperm$, we have 
\begin{align} \label{eq:ls-upper}
&\mathcal{F} \big( M^*, \Mhatls(\Cperm, Y) \big) \leq 
c_1 \bigg\{ 
\bar{\vars}^2 \frac{\dimone }{N} \log \dimone + \bar{\vars} \frac{\dimtwo}{N} (\log \dimone)^2   \\ 
& \qquad + \Big( \bar{\vars} \frac{ 1 }{\sqrt{N}} (\log \dimone )^2 \Big) \land
\Big( \bar{\vars}^2 \frac{ \dimtwo }{ N } \log \dimone \Big)^{2/3}  \land
\Big( \bar{\vars}^2 \frac{ \dimone \dimtwo }{ N } \log N \Big)  \bigg\} \land 1  \notag 
\end{align}
with probability at least $1 - \dimone^{-\dimone}$.

\noindent (b) Suppose that $\dimtwo \le \dimone$, and that $N' \sim \Poi(N)$ independent samples are drawn under the standard Gaussian observation
model~\eqref{eq:Gaussnoise} or the Bernoulli observation
model~\eqref{eq:Bernoise}. Then there exists an absolute constant $c_2 > 0$ such that
any
estimator $\Mhat$ satisfies
\begin{align}
  \label{eq:lower}
\sup_{M^* \in \Cpermr} \EE \left[ \mathcal{F}(M^*, \Mhat) \right]
\geq c_2 \left\{ \left[ \frac{\dimone}{N} + \left( \frac{1}{\sqrt{N} } \wedge \left(\frac{\dimtwo}{N} \right)^{2/3} \wedge \frac{\dimone \dimtwo}{N} \right) \right] \land 1 \right\}.
\end{align}
\end{theorem}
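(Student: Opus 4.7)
For the upper bound in part (a), my plan is to combine the basic inequality for the least squares estimator with a chaining argument tailored to the Poissonized observation model. Since $\Mhatls$ is the $\|\cdot\|_F$-projection of $Y$ onto $\Cperm$ and $M^* \in \Cperm$, the basic inequality yields $\|\Mhatls - M^*\|_F^2 \leq 2\langle W,\, \Mhatls - M^*\rangle$ for $W = Y - M^*$. Using the decomposition of $W$ into a sub-exponential piece (from the observation noise $z_\ell$) and a Poisson-deviation piece (from the random sampling) written out at the end of Section~\ref{sec:obs}, the task reduces to controlling $\sup_{M \in \Cperm} \langle W,\, M - M^*\rangle / \max(\|M - M^*\|_F, \delta)$ via a peeling/chaining argument. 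Writing $\Cperm = \bigcup_{(\pi,\sigma)} \Cbiso(\pi,\sigma)$, a union bound over the $(\dimone!)(\dimtwo!)$ permutation pairs contributes a logarithmic penalty of order $\log(\dimone!\,\dimtwo!) \lesssim \dimone \log \dimone$, producing the first term $\bar{\vars}^2 \dimone \log \dimone / N$ in~\eqref{eq:ls-upper}. For each fixed pair $(\pi,\sigma)$, the local complexity of the convex class $\Cbiso$ is controlled by the known polylogarithmic metric entropy estimates for bivariate isotonic matrices; carrying out the entropy integral against the two noise components yields the remaining terms, and the minimum structure reflects that different entropy bounds (the polylog Dudley integral for $\Cbiso$, the $\sigma^{4/3}$-type rate from vector-isotonic reductions, and the trivial bound from $\|M\|_F^2 \le \dimone \dimtwo$) are sharpest in different regimes of $(N, \dimone, \dimtwo)$.

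For the lower bound in part (b), my plan is to exhibit several hard sub-problems within $\Cpermr$, each witnessing one term inside the minimum, and apply Fano's inequality (or Assouad's lemma) to each before taking the maximum. The $\dimone / N$ term arises from a packing indexed by row permutations of a carefully chosen anchor matrix whose rows are designed so that any two distinct permutations produce a Frobenius separation proportional to $\dimone \delta^2$ while the informational content of $\dimone!$ permutations is $\sim \dimone \log \dimone$, so the Fano inequality forces $\delta^2 \gtrsim \dimone/N$. The $(\dimtwo/N)^{2/3}$ term comes from the classical minimax rate for isotonic vector estimation, realized inside $\Cpermr$ by taking matrices constant across rows but monotone across columns. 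The $1/\sqrt{N}$ term is the minimax rate for the bivariate isotonic sub-class (without unknown permutation), which is the partial-observation analogue of the result of~\cite{ChaGunSen18}, and the $\dimone \dimtwo / N$ term follows from a standard two-point Le Cam argument on a single entry and dominates when $N$ is very small. Both the Gaussian model~\eqref{eq:Gaussnoise} and the Bernoulli model~\eqref{eq:Bernoise} admit comparable KL estimates on matrices valued in $[0,1]$, so a single construction handles both noise models, yielding~\eqref{eq:lower}.

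The main obstacle in part (a) is executing the chaining carefully enough to extract the precise minimum in~\eqref{eq:ls-upper}: one must track the sub-exponential tails of partial observations (which give rise to the $\bar{\vars}$ versus $\bar{\vars}^2$ distinction), the Poisson-deviation contribution to $W$, and the regime-dependent truncation of the entropy integral for $\Cbiso$. In part (b), the subtle point is that all packings must live inside the smaller class $\Cpermr$ (not $\Cperm$), so the row-permutation anchor must have columns that are already monotone under $\pi = \id$; this can be arranged by taking the anchor to be a step matrix whose column profile is ordered and whose row-permutation-dependent perturbation preserves column monotonicity.
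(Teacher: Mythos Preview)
Your plan for part (a) is essentially the paper's: a variational/basic-inequality step, chaining over the convex class $\Cbiso$ localized at radius $t$, and a union bound over the $\dimone!\,\dimtwo!$ permutations (which the paper absorbs into the metric-entropy bound for $\Cperm$). The paper executes the chaining via Dirksen's mixed-tail generic chaining, bounding both the $\gamma_2$ functional in $\|\cdot\|_F$ and the $\gamma_1$ functional in $\|\cdot\|_\infty$; the latter is what handles the sub-exponential/Poisson part of $W$ and produces the $\bar\vars\,\dimtwo(\log\dimone)^2/N$ term. Your sketch is compatible with this, though you should be explicit that an $\ell_\infty$-entropy bound for $\Cbiso$ is needed in addition to the $\ell_2$ one.

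In part (b), however, two of your sub-constructions do not deliver the claimed rates.

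For the $(\dimtwo/N)^{2/3}$ term, matrices constant across rows and monotone across columns reduce the problem to a \emph{single} univariate isotonic regression in dimension $\dimtwo$ with effective per-value noise variance $\dimtwo/N$ (after averaging over the $\dimone$ identical rows). The resulting lower bound on $\mathcal F$ is only $N^{-2/3}$, not $(\dimtwo/N)^{2/3}$. The correct picture is the opposite one: the $(\dimtwo/N)^{2/3}$ rate corresponds to $\dimtwo$ decoupled univariate isotonic problems, one per column, each of length $\dimone$. The paper gets all three $\Cbiso$ terms from a single Assouad construction on a $k_1\times k_2$ block grid inside $\Cbiso$, choosing $(k_1,k_2)$ equal to $(\lfloor N^{1/4}\rfloor,\lfloor N^{1/4}\rfloor)$, $(\lfloor (N/\dimtwo)^{1/3}\rfloor,\dimtwo)$, and $(\dimone,\dimtwo)$ for the three regimes.

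For the $\dimone\dimtwo/N$ term, a two-point Le Cam argument perturbing a single entry by $\delta$ contributes only $\delta^2/(\dimone\dimtwo)$ to $\mathcal F$ while the KL is of order $N\delta^2/(\dimone\dimtwo)$, so Le Cam yields merely $\mathcal F\gtrsim 1/N$. To obtain $\dimone\dimtwo/N$ (which is the binding term when $N$ is \emph{large}, not small) you must perturb all $\dimone\dimtwo$ entries independently, i.e.\ use Assouad with the full grid $k_1=\dimone$, $k_2=\dimtwo$.
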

When interpreted in the context of square matrices under partial
observations, our result should be viewed as paralleling that of Shah
et al.~\cite{ShaBalGunWai17}. In addition, however, the result also
provides a generalization in several directions.  First, the upper
bound holds under the general sub-exponential noise model.  Second,
the lower bound holds for the class $\Cpermr$, which is strictly
smaller than the class $\Cperm$.  Third, and more importantly, we
study the problem for arbitrary tuples $(N, \dimone, \dimtwo)$, and
this allows us to uncover interesting phase transitions in the rates
that were previously unobserved\footnote{The regime $N \geq n_1 n_2$  
is interesting for the problems of ranking and crowd-labeling that 
motivate our work, since it is pertinent to compare items or ask workers 
to answer questions multiple times in order to reduce the noisiness of the gathered data. From a theoretical standpoint, the large $N$ regime isolates multiple non-asymptotic behaviors on the way to asymptotic consistency as $N \to \infty$, and is related in spirit to recent work on studying the high signal-to-noise ratio regime in ranking models~\cite{Gao17}, where phase transitions were also observed.}; these are discussed in more detail
below.

Via the inclusion $\Cpermr \subseteq \Cperm$, we observe that for both classes $\Cpermr$ and $\Cperm$, the upper and lower bounds match up to logarithmic factors in all regimes of $(N, \dimone, \dimtwo)$ under the standard Gaussian or Bernoulli noise model. Such a poly-logarithmic gap between the upper and lower bounds is related to corresponding gaps that exist in upper and lower bounds on the metric entropy of bounded bivariate isotonic matrices~\cite{ChaGunSen18}. Closing this gap is known to be an important problem in the study of these shape-constrained objects (see, e.g.~\cite{gao2007entropy}).

Let us now interpret the theorem in more detail. 
First, note that the risk of any proper estimator 
is bounded by $1$ since the entries of $M^*$ are bounded in the interval $[0, 1]$. We thus focus on the remaining terms of the bound.
Up to a poly-logarithmic factor, the upper bound can be simplified to 
\mbox{$
\bar{\vars}^2 \frac{\dimone }{N} + \big( \bar{\vars} \frac{ 1 }{\sqrt{N}} \big) \land
\big( \bar{\vars}^2 \frac{ \dimtwo }{ N } \big)^{2/3}  \land
\big( \bar{\vars}^2 \frac{ \dimone \dimtwo }{ N } \big) . 
$}
The first term $\bar{\vars}^2 \frac{\dimone }{N}$ is due to the unknown permutation on the rows (which dominates the unknown column permutation since $\dimone \geq \dimtwo$). 
The remaining terms, corresponding to the minimum of three rates, stem from estimating the underlying bivariate isotonic matrix, so we make a short digression to state a corollary specialized to this setting. Recall that $\Cbiso$ was used to denote the class of $\dimone \times \dimtwo$ bivariate isotonic matrices. 
Owing to the convexity of the set $\Cbiso$, the least squares estimator $\Mhatls(\Cbiso, Y)$ is computable efficiently~\cite{BriDykPilRob84, KynRaoSac15}. 
Moreover, we use the shorthand notation 
\begin{align}
\label{eq:shorthand}
&\vartheta(N, \dimone, \dimtwo, \vars) \defn \bar{\vars} \frac{\dimtwo}{N} (\log \dimone)^2 + \\
&\qquad \qquad \Big( \bar{\vars} \frac{ 1 }{\sqrt{N}} (\log \dimone )^2 \Big) \land
\Big( \bar{\vars}^2 \frac{ \dimtwo }{ N } \log \dimone \Big)^{2/3}  \land
\Big( \bar{\vars}^2 \frac{ \dimone \dimtwo }{ N } \log N \Big) . \notag
\end{align}

\begin{corollary} \label{cor:biso}
(a) Suppose that $\dimtwo \leq \dimone$. Then there is a universal constant $c_1 > 0$ such that for any matrix $M^* \in \Cbiso$, we have
\begin{align*}
\mathcal{F} \big( M^*, \Mhatls(\Cbiso, Y) \big) 
\leq c_1 \vartheta(N, \dimone, \dimtwo, \vars) \wedge 1 
\end{align*}
with probability at least $1 - \dimone^{- 3 \dimone}$. 

\noindent (b) Suppose that $\dimtwo \le \dimone$, and that $N' \sim \Poi(N)$ independent samples are drawn under the standard Gaussian observation
model~\eqref{eq:Gaussnoise} or the Bernoulli observation
model~\eqref{eq:Bernoise}. Then there exists an absolute constant $c_2 > 0$ such that
any
estimator $\Mhat$ satisfies
\begin{align*}
\sup_{M^* \in \Cbiso} \EE \left[ \mathcal{F}(M^*, \Mhat) \right] \geq c_2 \left\{ \frac{ 1 }{\sqrt{N}} \land
\Big( \frac{ \dimtwo }{ N } \Big)^{2/3}  \land
\frac{ \dimone \dimtwo }{ N }  \wedge 1 \right\}  .
\end{align*}
\end{corollary}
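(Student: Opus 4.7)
The approach is to bootstrap from Theorem~\ref{thm:funlim}, exploiting that $\Cbiso$ is a strictly smaller---and crucially, convex---subclass of $\Cpermr$.

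For part~(a), since $\Cbiso$ is closed and convex, $\Mhatls(\Cbiso, Y)$ is the Euclidean projection of $Y$ onto $\Cbiso$, so the standard basic inequality for least squares yields $\| \Delta \|_F^2 \leq 2 \langle W, \Delta \rangle$ for $\Delta = \Mhatls(\Cbiso, Y) - M^*$ and $W = Y - M^*$. The plan is to bound the right-hand side by a localized supremum of an empirical process over $\Cbiso$. The proof of Theorem~\ref{thm:funlim}(a) already carries out this analysis for the larger class $\Cperm$, where the contributions decompose naturally as (i) the local complexity of a single class $\Cbiso(\pi, \sigma)$, which produces exactly the rate $\vartheta$, plus (ii) a union bound over the $\dimone! \, \dimtwo!$ permutations, which is what produces the extra $\bar{\vars}^2 \tfrac{\dimone}{N} \log \dimone$ term in Theorem~\ref{thm:funlim}(a). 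Specializing to $\Cbiso$ simply eliminates the second contribution. The local complexity computation invokes the metric entropy bounds of Chatterjee--Guntuboyina--Sen for $\Cbiso$, combined with a chaining argument against the noise matrix $W$ (which inherits sub-exponential tails jointly from the observation noise $z_\ell$ and the Poisson sampling fluctuations); this produces the three rates inside the minimum of $\vartheta$ as well as the additive boundedness term $\bar{\vars} \tfrac{\dimtwo}{N} (\log \dimone)^2$.

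For part~(b), the plan is to prove three separate minimax lower bounds, one for each term inside the minimum, via distinct packings of $\Cbiso$ under the Poisson observation model; the overall bound is then their maximum. The parametric rate $\tfrac{\dimone \dimtwo}{N}$ follows from a two-point Le Cam argument between the constant matrices $M^* \equiv 1/2$ and $M^* \equiv 1/2 + \delta$, both of which are trivially bivariate isotonic. The rate $(\tfrac{\dimtwo}{N})^{2/3}$ is obtained by restricting to the sub-class of matrices that are constant along rows, i.e., $M^*_{i,j} = v_j$ for a monotone vector $v \in [0,1]^{\dimtwo}$; the problem then reduces to classical 1-D isotonic regression on $\dimtwo$ parameters with effective per-coordinate sample size $N/\dimtwo$, and a standard Fano-type hypercube packing of monotone staircase vectors delivers the rate. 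Finally, the rate $\tfrac{1}{\sqrt{N}}$ follows from a genuinely bivariate hypercube construction: partition $[\dimone] \times [\dimtwo]$ into diagonal strips on which $M^*$ is perturbed by sign-controlled amounts that preserve monotonicity along both axes, then apply Assouad's lemma after computing the Poisson-model KL divergences. The bound saturates at $1$ because the entries are confined to $[0,1]$.

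The main obstacle is part~(b) for the two shape-dependent rates, and in particular the $\tfrac{1}{\sqrt{N}}$ bound: the hypercube must simultaneously respect bivariate monotonicity, achieve maximal pairwise Frobenius separation, and keep pairwise KL divergences small under the Poisson sampling model, and these three constraints interact nontrivially. By contrast, part~(a) should follow by direct extraction from the proof of Theorem~\ref{thm:funlim}(a), with only cosmetic modifications needed to track the improved probability bound $1 - \dimone^{-3\dimone}$.
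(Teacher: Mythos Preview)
Part~(a) is correct and matches the paper: the proof of Theorem~\ref{thm:funlim}(a) goes through Lemma~\ref{lem:shah}, which already has a separate clause~(b) for $\Cbiso$ that drops the permutation union-bound term, and applying it with the same $(\alpha,\beta)$ yields the claim at probability $1-\dimone^{-3\dimone}$.

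Part~(b), however, has genuine gaps in your first two constructions. A two-point Le~Cam argument between the constant matrices $1/2$ and $1/2+\delta$ distinguishes only a \emph{single} degree of freedom: the per-sample KL is of order $\delta^2$, so with $N$ samples you are forced to take $\delta^2 \asymp 1/N$, giving a normalized Frobenius lower bound of order $1/N$, not $\dimone\dimtwo/N$. Likewise, restricting to matrices constant along rows, $M^*_{i,j}=v_j$, reduces to 1-D isotonic regression on $\dimtwo$ parameters with effective per-coordinate noise variance $\tau^2\asymp \dimtwo/N$; the standard staircase packing then gives the normalized rate $\tau^{4/3}\dimtwo^{-2/3}=(1/N)^{2/3}$, which is strictly weaker than the target $(\dimtwo/N)^{2/3}$ whenever $\dimtwo>1$. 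Taking the maximum of your three constructions therefore yields only $1/\sqrt{N}$ in the regime $N\le \dimtwo^4$ and nothing sharp beyond that.

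The paper avoids this by using a \emph{single} Assouad hypercube over a $k_1\times k_2$ block grid on $[\dimone]\times[\dimtwo]$, with a monotone baseline $\lambda\,\frac{u+v-1}{k_1+k_2}$ that guarantees bivariate isotonicity for every sign pattern. The resulting inequality~\eqref{eq:assouad-lb} holds for all $(k_1,k_2,\lambda)$, and the three rates fall out by specializing: $k_1=k_2=\lfloor N^{1/4}\rfloor$ gives $1/\sqrt{N}$; $k_2=\dimtwo$, $k_1=\lfloor(N/\dimtwo)^{1/3}\rfloor$ gives $(\dimtwo/N)^{2/3}$ (note this packing varies along \emph{both} axes, which is why your row-constant reduction misses it); and $k_1=\dimone$, $k_2=\dimtwo$ with $\lambda$ shrinking gives $\dimone\dimtwo/N$. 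Your construction~3, if carried out as a full grid with free $(k_1,k_2)$ rather than just ``diagonal strips'' aimed at $1/\sqrt{N}$, would in fact subsume the other two and fix the proof on its own.
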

 
%
%
Corollary~\ref{cor:biso} should be viewed as paralleling the results
of Chatterjee et al.~\cite{ChaGunSen18} (see, also, Han et al.~\cite{han2019}) 
under a slightly different
noise model, while providing some notable extensions once
again. Firstly, we handle sub-exponential noise; secondly, the bounds
hold for all tuples $(N, \dimone, \dimtwo)$ and are optimal up to a
logarithmic factor provided the sample size $N$ is sufficiently
large. In more detail, the non-parametric rate $\bar{\vars} \frac{ 1
}{\sqrt{N}}$ was also observed in Theorems~2.1 and 2.2 of the
paper~\cite{ChaGunSen18} provided in the fully observed setting ($N =
\dimone \dimtwo$), with the lower bound additionally requiring that
the matrix was not extremely skewed. In addition to this rate, we also
isolate two other interesting regimes when $N \geq \dimtwo^2$ and
$1/\sqrt{N}$ is no longer the minimizer of the three terms above. The
first of these regimes is the rate $( \bar{\vars}^2 \frac{ \dimtwo }{
  N } )^{2/3}$, which is also non-parametric; notably, it corresponds
to the rate achieved by decoupling the structure across columns and
treating the problem as $\dimtwo$ separate isotonic regression
problems~\cite{NemPolTsy85, Zha02}. This suggests that if the matrix
is extremely skewed or if $N$ grows very large, monotonicity along the
smaller dimension is no longer as helpful; the canonical example of
this is when $\dimtwo = 1$, in which case we are left with the
(univariate) isotonic regression problem\footnote{Indeed, in a similar regime, Chatterjee et al.~\cite{ChaGunSen18} show in their Theorem 2.3 that the upper bound is achieved by an estimator that performs univariate regression along each row, followed by a projection onto the set of BISO matrices. On the other hand, our results establish near-optimal minimax rates in a unified manner through metric entropy estimates, so that the upper bounds are sharp in all regimes simultaneously, and hold for the least squares estimator under sub-exponential noise.}.  The final rate
$\bar{\vars}^2 \frac{ \dimone \dimtwo }{ N }$ is parametric and
comparatively trivial, as it can be achieved by an estimator that
simply averages observations at each entry. This suggests that when
the number of samples grows extremely large, we can ignore all
structure in the problem and still be optimal at least in a minimax
sense.
Let us now return to a discussion of Theorem~\ref{thm:funlim}. To further clarify the rates and transitions between them, we simplify the discussion by focusing on two regimes of matrix dimensions.

\paragraph{Example 1: $\dimtwo \leq \dimone \leq \dimtwo^2$} Here, by treating $\vars$ as a constant, we may simplify the minimax rate (up to logarithmic factors) as 
\begin{align} \label{eq:lower-simple-1}
\inf_{\Mhat} \sup_{M^* \in \Cperm} \EE \left[ \mathcal{F}(M^*, \Mhat) \right] \asymp
\begin{cases}
1 & \text{ for } N \le \dimone \\
\frac{\dimone}{N} & \text{ for } \dimone \le N \le \dimone^2 \\
\frac{1}{\sqrt{N} } & \text{ for } \dimone^2 \le N \le \dimtwo^4 \\
(\frac{\dimtwo}{N})^{2/3} & \text{ for } \dimtwo^4 \le N \le \dimone^3 \dimtwo \\
\frac{\dimone \dimtwo}{N} & \text{ for } N \ge \dimone^3 \dimtwo
\end{cases} 
\end{align}
which delineates five distinct regimes depending on the sample size $N$.
The first regime is the trivial rate. 
The second regime $\dimone \le N \le \dimone^2$ is when the error due to the latent permutation dominates, while the third regime $\dimone^2 \le N \le \dimtwo^4$ corresponds to when the hardness of the problem is dominated by the structure inherent to bivariate isotonic matrices. For $N$ larger than $\dimtwo^4$,  the effect of \emph{bivariate} isotonicity disappears, at least in a minimax sense. 
Namely, in the fourth regime $\dimtwo^4 \le N \le \dimone^3 \dimtwo$, the rate $(\dimtwo / N)^{2/3}$ is the same as if we treat the problem as $\dimtwo$ separate $\dimone$-dimensional isotonic regression problems with an unknown permutation~\cite{FlaMaoRig16}. 
For even larger sample size $N \ge \dimone^3 \dimtwo$, in the fifth regime, the minimax-optimal rate $\dimone \dimtwo / N$ is trivially achieved by ignoring all structure and outputting the matrix $Y$ alone. \hfill $\clubsuit$ 

\paragraph{Example 2: $\dimtwo \leq \dimone \leq C \dimtwo$} In this near-square regime, we may once again simplify the bound and obtain (up to logarithmic factors) that 
\begin{align} \label{eq:lower-simple-2}
\inf_{\Mhat} \sup_{M^* \in \Cperm} \EE \left[ \mathcal{F}(M^*, \Mhat) \right] \asymp
\begin{cases}
1 & \text{ for } N \le \dimone \\
\frac{\dimone}{N} & \text{ for } \dimone \le N \le \dimone^2 \\
\frac{1}{\sqrt{N} } & \text{ for } \dimone^2 \le N \le \dimone^2 \dimtwo^2 \\
\frac{\dimone \dimtwo}{N} & \text{ for } N \ge \dimone^2 \dimtwo^2
\end{cases} 
\end{align}
so that two of the cases from before now collapse into one. Ignoring the trivial constant rate, we thus observe a transition from a parametric rate to a non-parametric rate, and back to the trivial parametric rate. \hfill $\clubsuit$ 
\smallskip


Having discussed our minimax risk bounds in the Frobenius error,
we now turn to establishing lower bounds in the max-row-norm metric. We show two results in this context: our first result is a minimax lower bound for the class $\Cperm$, and the second result
is a lower bound for the class $\Cpermr$ that holds for a class of natural estimators defined below.

\begin{definition}[Pairwise Distinguishability via Differences (PDD)] \label{def:pdd}
A row permutation estimator $\pihat$ is said to obey the PDD property if it is given by the following procedure: 
\begin{itemize}
\item Step~1: 
Create a directed graph $G$ with vertex set $[\dimone]$, where for each pair of distinct indices $i, j \in [\dimone]$, the existence of an edge between $i$ and $j$, as well as its direction (if the edge exists), depends only on the row difference $Y_i - Y_j$. 
\item Step~2: 
Return a uniformly random topological sort $\pihat$ of the graph $G$ if there exists one; otherwise, return a uniformly random permutation.  
\end{itemize}
Denote the class of (row-)PDD estimators by $\mathcal{P}^{\mathsf{r}}_{\mathsf{pdd}}$. The class of column-PDD estimators $\mathcal{P}^{\mathsf{c}}_{\mathsf{pdd}}$ is defined analogously.
\end{definition}

Recall that a permutation $\pi$ is called a \emph{topological sort} of
a directed acyclic graph $G$ if $\pi(u) < \pi(v)$ for every directed
edge $u \to v$.  The technical terms of a graph and its topological
sort are adopted mainly for convenience and should not obscure the
intuition behind the PDD property.  Intuitively, in a row-PDD
estimator, the decision of whether to rank row $i$ above row $j$
(represented by a directed edge in Step~1) depends on the matrix $Y$
only through the difference of the rows $Y_i - Y_j$, and the final permutation is obtained by aggregating such pairwise decisions (via the topological sort). Thus, any PDD
estimator is ``local" in this specific sense, and defines the class of
estimators alluded to in Table~\ref{tab:results}.
In the context of SST matrix estimation, the row and column PDD
properties are equivalent, and many existing permutation estimators
for this class based on the Borda and Copeland counts~\cite{ChaMuk16,
  shah2017simple, PanMaoMutWaiCou17} can be verified\footnote{To be
  clear, these procedures return any permutation that sorts the
  row sums of the matrix $Y$. Any such estimate can be made PDD by
  choosing a uniform (sub-)permutation for all rows with the same
  sum.} to lie in the class $\mathcal{P}^{\mathsf{r}}_{\mathsf{pdd}}$.

We now state lower bounds for both classes of matrices in the $\mathcal{R}$ metric.
\begin{theorem}
\label{thm:rowlb}
Suppose that $\dimtwo \leq \dimone$ and $N \leq \dimone
\dimtwo$. \\ (a) Suppose that $N' \sim \Poi(N)$ independent samples
are drawn from either the standard Gaussian observation
model~\eqref{eq:Gaussnoise} or the Bernoulli observation
model~\eqref{eq:Bernoise}.  Then there exists an absolute constant $c
> 0$ such that when estimating over the class $\Cperm$, any
permutation estimate $\pihat$ has worst-case error at least
\begin{align}
  \label{eq:rowlb-lower2}
\sup_{M^* \in \Cperm} \EE \left[ \mathcal{R} (M^*, \pihat) \right]
\geq c \left[ 1 \wedge \left( \frac{\dimone}{N} \right)^{1/2} \right].
\end{align}
(b) Suppose that $N' \sim \Poi(N)$ independent samples are drawn from
the standard Gaussian observation model~\eqref{eq:Gaussnoise}.
Then, there exists an absolute constant $c > 0$ such that when
estimating over the class $\Cpermr$, any PDD row permutation estimate
$\pihat \in \mathcal{P}^{\mathsf{r}}_{\mathsf{pdd}}$ hast worst-case
error at least
\begin{align} \label{eq:rowlb-lower1}
\sup_{M^* \in \Cpermr} \EE \left[ \mathcal{R} (M^*, \pihat) \right]
\geq c \left[ 1 \wedge \left( \frac{\dimone}{N} \right)^{3/4} \right].
\end{align}
\end{theorem}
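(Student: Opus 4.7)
The plan is to prove both parts via minimax reduction to hypothesis testing. In each case the main ingredient is a Fano-type argument applied to a carefully constructed family of matrices; the Poissonization reduces the relevant KL divergences to multiples of squared Frobenius distances, i.e.\ $\KL(P_M, P_{M'}) \le \lambda \|M - M'\|_F^2$ with $\lambda = N/(\dimone \dimtwo)$, and the extension from standard Gaussian to Bernoulli noise is routine in the regime of interest.

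For part (a), a direct two-hypothesis Le Cam reduction only yields the weaker rate $\dimone/N$: swapping two rows in some $M \in \Cperm$ gives $\KL \propto \lambda \|M_i - M_j\|^2$ while the max-row-norm distance is $\|M_i - M_j\|^2/\dimtwo$, so a unit KL budget constrains the max-row-norm separation to at most $\dimone/N$. To sharpen this to $\sqrt{\dimone/N}$, I plan to use Fano with a larger family: place a sparse ``signal'' row of the form $c \cdot \bone_T$---with $T \subset [\dimtwo]$ of carefully tuned size and $c \le 1$---at one of $\dimone$ candidate row positions, with the column permutation $\sigma^*$ chosen so that the resulting matrix lies in $\Cperm$. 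Indexing the hypotheses by the placement of this signal row (and possibly by the support $T$ as a nuisance parameter), one then verifies the three Fano conditions: membership in $\Cperm$, pairwise KL bounded by $O(\log|S|)$, and pairwise $\mathcal{R}$-separation of order $\sqrt{\dimone/N}$. The improvement over Le Cam reflects the fact that the max-row-norm metric charges the full discrepancy between two rows to a single (worst) row, so sparse perturbations allow the $\mathcal{R}$-distance to exceed what a Frobenius-only analysis would predict.

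For part (b), the PDD restriction implies that the estimator sees the data only through the $\binom{\dimone}{2}$ pairwise row differences $\{Y_i - Y_j\}_{i,j}$. I plan to apply Fano to the induced law of these differences rather than of $Y$ itself: the pairwise detection threshold is strictly higher than the joint detection threshold available to a non-PDD estimator, so one can construct a family in $\Cpermr$ whose pairwise differences are individually indistinguishable under any test depending only on $Y_i - Y_j$, while the underlying matrices correspond to substantially different true permutations $\pi^*$. Such a family forces the directed graph $G$ from Definition~\ref{def:pdd} to carry essentially no information about the ordering of the ambiguous rows, so that the topological sort in Step~2 is effectively uniform among them and errs on some row with constant probability. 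The $(\dimone/N)^{3/4}$ exponent should emerge from an optimal balance between the support size and the magnitude of the row-perturbations, tuned against the pairwise KL budget; Gaussian KL identities yield clean rates here.

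The main technical obstacle in both parts is the concrete design of the hypothesis families. For part (a), matching the $1/2$ exponent (rather than Le Cam's $1$) requires a Fano packing that spreads its KL budget over many hypotheses while still concentrating the $\mathcal{R}$-distance on a single row. For part (b), the additional subtlety is verifying that \emph{no} choice of pairwise test in Step~1 of Definition~\ref{def:pdd}---including weighted sums, $\ell^2$-norm tests, or more exotic functions of $Y_i - Y_j$---can circumvent the constructed indistinguishability, so that the lower bound is genuinely against the full PDD class rather than a specific sub-family.
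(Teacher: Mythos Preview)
Your proposal has a genuine gap in part (a), and the same issue carries over to part (b).

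You correctly observe that a \emph{simple} two-point Le Cam argument (swap two fixed rows) yields only the rate $\dimone/N$. However, your proposed remedy---Fano with hypotheses indexed by the location of a sparse signal row---does not improve this to $(\dimone/N)^{1/2}$. With $K \asymp \dimone$ hypotheses, the Fano KL budget is $O(\log \dimone)$; under Poissonization the pairwise KL between two hypotheses differing in the placement of a row $c\,\bone_T$ is of order $\lambda\, c^2 |T|$ with $\lambda = N/(\dimone\dimtwo)$, while the $\mathcal{R}$-separation is $c^2|T|/\dimtwo$. These two quantities are proportional, with ratio $\dimone/N$, so the best Fano lower bound you can extract is still of order $(\dimone/N)\log\dimone$. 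Enlarging the hypothesis family by also varying $T$ does not help: the pairwise KL scales with the symmetric difference of the supports and grows at the same rate as the separation.

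The missing ingredient is a \emph{mixture} (Bayesian) alternative rather than a collection of simple alternatives. The paper reduces to ordering just two rows $[u;v]$ versus $[v;u]$, but takes $u$ to be the all-$\tfrac12$ vector and $v$ to be $u$ plus a single bump whose location is drawn uniformly over $s$ candidate positions in each of $\dimtwo/s$ independent sub-blocks. The point is that the $\chi^2$ divergence between the resulting \emph{mixture} law and the null scales like $(\dimtwo/s)\cdot \lambda\delta^2/s$, which is smaller by a factor of $s$ than what a pairwise KL calculation would give; optimizing $s \asymp \sqrt{\lambda\dimtwo}$ then keeps the $\chi^2$ bounded while allowing $\|u-v\|_2^2 \asymp \dimtwo\sqrt{\dimone/N}$. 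This is exactly the Ingster-type testing lower bound for the positive orthant cone, and it is the mechanism that converts the exponent from $1$ to $1/2$. Your Fano construction cannot exploit this because it compares alternatives to each other rather than averaging them against a common reference.

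For part (b) your high-level plan (use that a PDD estimator sees only $Y_i-Y_j$) matches the paper's, but you will run into the same obstruction: a Fano packing with pairwise KL control on the difference law again yields at best $\dimone/N$. The paper's construction is another mixture two-point, now supported on pairs of \emph{monotone} vectors $(u,v)$ with $u\preceq v$, built from independent blocks each containing a random step location; the $\chi^2$ of the mixture of $v-u$ against zero, combined with the block structure, is what produces the $3/4$ exponent. The PDD assumption enters exactly where you anticipate---to justify that only the law of the difference matters---but the quantitative rate comes from the mixture-vs-point $\chi^2$ calculation, not from Fano.
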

The bound~\eqref{eq:rowlb-lower2} is optimal up to a constant factor,
and attained by an estimator that is computable in polynomial
time~\cite{PanMaoMutWaiCou17}. Indeed, we also establish a version of
this fact (up to a logarithmic factor) in
Theorem~\ref{thm:ordered-col} to follow, and other existing
algorithms~\cite{ShaBalWai16-2, ChaMuk16} are also able to match the
lower bound~\eqref{eq:rowlb-lower2} up to a logarithmic factor.
As will be clarified shortly, the minimax lower bound~\eqref{eq:rowlb-lower2} has another important consequence, and shows why prior work on this problem was unable to surpass what was perceived as a fundamental gap in estimation in the Frobenius error. 

On the other hand, the bound~\eqref{eq:rowlb-lower1} is optimal (up to logarithmic factors) for the class of estimators $\mathcal{P}^{\mathsf{r}}_{\mathsf{pdd}}$, and we demonstrate a matching upper bound in Theorem~\ref{thm:ordered-col}. We also conjecture\footnote{It is worth noting that proceeding analogously to part (a) of the theorem and generalizing existing results~\cite{WeiGunWai17} on testing the monotone cone against the zero vector yields the (weaker) minimax lower bound 
\[
\inf_{\pihat} \sup_{M^* \in \Cpermr} \EE \left[ \mathcal{R} (M^*, \pihat) \right]
\geq c \left[ 1 \wedge \frac{\dimone \log \dimtwo}{N} \right].
\]}
that the minimax lower bound
\begin{align} \label{eq:conj-row}
\inf_{\pihat} \sup_{M^* \in \Cpermr} \EE \left[ \mathcal{R} (M^*, \pihat) \right]
\geq c \left[ 1 \wedge \left( \frac{\dimone}{N} \right)^{3/4} \right] 
\end{align}
holds for an absolute constant $c > 0$, with the infimum taken over all measurable functions from the observations to the set of permutations $\symgp_{\dimone}$.

Theorem~\ref{thm:rowlb} is proved via reductions to particular
hypothesis testing problems on cones. As a consequence of proving part
(a) of the theorem, we extend existing lower bounds on the minimax
radius of testing the positive orthant cone against the zero
vector~\cite{WeiGunWai17}, also accommodating Bernoulli noise and
missing data in our observations. This result, provided in
Proposition~\ref{prop:orthcone}, may be of independent interest.  In
order to prove part (b), we show a lower bound on the minimax radius
of testing, given noisy observations of two vectors in the positive,
monotone cone, whether or not one of the vectors is entry-wise larger
than the other. This result, collected in
Proposition~\ref{prop:moncone}, may also be of independent interest
since it is not covered by existing theory on cone testing
problems~\cite{WeiGunWai17}.

Having completed our discussion of the fundamental limits of
estimation, let us now turn to a discussion of computationally
efficient algorithms.


\subsection{Efficient algorithms} 
\label{sec:ex-algo}

Our algorithms belong to a broader family of algorithms that rely on
two distinct steps: first, estimate the unknown permutation(s)
defining the problem; then project onto the class of matrices that are
bivariate isotonic when viewed along the estimated
permutations. Formally, any such algorithm is described by the
meta-algorithm below.

\paragraph{Algorithm 1 (meta-algorithm)}
\begin{itemize}
\item Step 1: Use any algorithm to obtain permutation estimates $(\pihat, \sigmahat)$, setting $\sigmahat = \id$ if estimating over class $\Cpermr$. 
\item Step 2: Return the matrix estimate \\
$
\Mhat(\pihat, \sigmahat) \defn \arg \min_{M \in \Cbiso(\pihat, \sigmahat)} \| Y - M \|_F^2 .
$
\end{itemize}
Owing to the convexity of the set $\Cbiso(\pihat, \sigmahat)$, the projection operation in Step 2 of the algorithm can be computed in near linear time~\cite{BriDykPilRob84, KynRaoSac15}. 
The following result, a variant of Proposition~4.2 of Chatterjee and Mukherjee~\cite{ChaMuk16}, allows us to characterize the error rate of any such meta-algorithm as a function of the permutation estimates $(\pihat, \sigmahat)$.

Recall the definition of the set $ \Cbiso(\pi, \sigma) \defn \{M(\pi, \sigma): M
\in \Cbiso\} $ as the set of matrices that are bivariate isotonic when
viewed along the row permutation $\pi$ and column permutation
$\sigma$, respectively. In particular, we have the inclusion $M^* \in \Cbiso(\pi^*, \sigma^*)$ where $\pi^*$ and $\sigma^*$
are unknown permutations in $\symgp_{\dimone}$ and $\symgp_{\dimtwo}$,
respectively. In the following proposition, we also do not make the assumption $\dimtwo \leq \dimone$; recall our shorthand notation $\vartheta(N, \dimone, \dimtwo, \vars)$ defined in equation~\eqref{eq:shorthand}.

\begin{proposition}
\label{prop:meta}
There exists an absolute constant $C > 0$ such that for all $M^* \in \Cbiso(\pi^*, \sigma^*)$, the estimator $\Mhat(\pihat, \sigmahat)$ obtained by running the meta-algorithm satisfies
\begin{align} 
& \mathcal{F} \big( M^*, \Mhat(\pihat, \sigmahat) \big) \label{eq:oracle} 
\leq C \Big\{  \vartheta(N, \dimone \vee \dimtwo, \dimone \land \dimtwo, \vars) \\
& + \frac{1}{\dimone \dimtwo} \big\|M^*(\pi^*, \sigma^*) - M^*(\pihat, \sigma^*)
\big\|_F^2 + \frac{1}{\dimone \dimtwo} \big\|M^*(\pi^*, \sigma^*) - M^*(\pi^*, \sigmahat)
\big\|_F^2 \Big\} \notag
\end{align}
with probability exceeding $1 - \dimone^{-\dimone}$.
\end{proposition}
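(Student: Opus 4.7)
The plan is to prove this as a classical oracle inequality for Frobenius projection onto a closed convex set, with the added twist that the set $\Cbiso(\pihat, \sigmahat)$ onto which Step~2 of the meta-algorithm projects is itself random.

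I would introduce the auxiliary quantity $\Mtilde \defn \arg\min_{M \in \Cbiso(\pihat, \sigmahat)} \|M - M^*\|_F^2$, the projection of the \emph{truth} onto the estimated class. Since both $\Mhat(\pihat, \sigmahat)$ and $\Mtilde$ are Frobenius projections (of $Y$ and $M^*$, respectively) onto the same closed convex set, writing down both first-order optimality conditions and summing yields the standard one-sided bound $\|\Mhat(\pihat, \sigmahat) - \Mtilde\|_F^2 \leq \langle W, \Mhat(\pihat, \sigmahat) - \Mtilde\rangle$, where $W = Y - M^*$. Combining with the triangle inequality produces the variance-bias decomposition
\[
\|\Mhat(\pihat, \sigmahat) - M^*\|_F^2 \;\leq\; 2\langle W, \Mhat(\pihat, \sigmahat) - \Mtilde\rangle + 2\|\Mtilde - M^*\|_F^2.
\]

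For the bias term I would write $M^* = M(\pi^*, \sigma^*)$ for some $M \in \Cbiso$; then $M(\pihat, \sigmahat) \in \Cbiso(\pihat, \sigmahat)$ is a feasible competitor to $\Mtilde$, so $\|\Mtilde - M^*\|_F^2 \leq \|M(\pihat, \sigmahat) - M(\pi^*, \sigma^*)\|_F^2$. Inserting the intermediate point $M(\pi^*, \sigmahat)$ and applying the triangle inequality separates this into a pure row-permutation error and a pure column-permutation error; invariance of the Frobenius norm under row- and column-re-indexing then identifies them, up to a factor of two, with the two permutation terms $\|M^*(\pi^*, \sigma^*) - M^*(\pihat, \sigma^*)\|_F^2$ and $\|M^*(\pi^*, \sigma^*) - M^*(\pi^*, \sigmahat)\|_F^2$ appearing in the statement.

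The main obstacle is the variance term, since $\Cbiso(\pihat, \sigmahat)$ is data-dependent. I would handle this by passing to a uniform bound over all \emph{deterministic} pairs $(\pi, \sigma)$. For each such pair the set $\Cbiso(\pi, \sigma)$ is permutation-equivalent to $\Cbiso$, and the entries of $W$ are independent with uniformly bounded sub-exponential parameter, so the chaining/empirical-process argument that underlies Corollary~\ref{cor:biso} yields a \emph{one-sided} localized estimate of the form $\sup_{M_1, M_2 \in \Cbiso(\pi, \sigma)} \bigl\{\langle W, M_1 - M_2\rangle - \tfrac{1}{4}\|M_1 - M_2\|_F^2\bigr\} \leq C \dimone \dimtwo \, \vartheta(N, \dimone\vee\dimtwo, \dimone\wedge\dimtwo, \vars)$ with failure probability at most $\dimone^{-3\dimone}$. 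A union bound over the $\dimone!\,\dimtwo!$ permutation pairs costs only $\log(\dimone!\,\dimtwo!) \lesssim (\dimone\vee\dimtwo)\log(\dimone\vee\dimtwo)$ in the exponent, which is dominated by the per-pair failure probability and leaves the advertised $1 - \dimone^{-\dimone}$ guarantee. The delicate part is producing this one-sided uniform chaining bound: the localization must be around the random point $\Mtilde$ rather than a fixed reference, only the permutation-invariant local Gaussian complexity of the bivariate-isotonic cone may enter, and the tails must decay exponentially in $\dimone\log\dimone$ so that the permutation union bound closes.
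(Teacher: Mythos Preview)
Your proposal is correct, but the route differs from the paper's in a way worth noting.

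The paper avoids the random-center issue altogether via non-expansiveness of convex projection. Writing $\mathcal{P}_{\pi,\sigma}$ for projection onto $\Cbiso(\pi,\sigma)$, it inserts the point $\mathcal{P}_{\pihat,\sigmahat}\bigl(M^*(\pihat,\sigmahat)+W\bigr)$ and uses only the triangle inequality together with $\|\mathcal{P}_C(a)-\mathcal{P}_C(b)\|_F\le\|a-b\|_F$ to obtain
\[
\|\Mhat - M^*\|_F^2 \;\le\; 4\,\bigl\|\mathcal{P}_{\pihat,\sigmahat}\bigl(M^*(\pihat,\sigmahat)+W\bigr)-M^*(\pihat,\sigmahat)\bigr\|_F^2 \;+\; 6\,\|M^*(\pihat,\sigmahat)-M^*\|_F^2.
\]
The first term is now literally the least-squares error for a bivariate-isotonic problem whose truth $M^*(\pihat,\sigmahat)$ lies in $\Cbiso(\pihat,\sigmahat)$, so Corollary~\ref{cor:biso} applies as a black box after a union bound over $(\pi,\sigma)$. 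The approximation term is then split exactly as you do.

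Your first-order-optimality route works but requires reopening the chaining: you need the offset inequality $\langle W, M-\Mtilde\rangle \le \tfrac14\|M-\Mtilde\|_F^2 + C\,\dimone\dimtwo\,\vartheta$ uniformly over $M\in\Cbiso(\pi,\sigma)$, which is the intermediate estimate $g(t)<t^2/8$ inside the proof of Lemma~\ref{lem:shah}(b), not its final statement. One simplification you overlooked: for each fixed $(\pi,\sigma)$ in the union bound, $\Mtilde=\mathcal{P}_{\pi,\sigma}(M^*)$ is \emph{deterministic}, so you do not in fact need the supremum over both $M_1$ and $M_2$---a single-argument localized bound around the fixed center $\Mtilde$ suffices. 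In short, both arguments are valid; the paper's non-expansiveness trick buys a black-box invocation of Corollary~\ref{cor:biso}, whereas your approach re-derives an internal step of its proof.
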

A few comments are in order. The term $\vartheta(N, \dimone \vee \dimtwo, \dimone \land \dimtwo, \vars)$ on the upper line of the RHS of
the bound~\eqref{eq:oracle} corresponds to an estimation error, if the
true permutations $\pi$ and $\sigma$ were known a priori (see Corollary~\ref{cor:biso}), and the
latter terms on the lower line correspond to an approximation error that we incur as
a result of having to estimate these permutations from data.
Comparing the bound~\eqref{eq:oracle} to the minimax lower
bound~\eqref{eq:lower}, we see that up to a poly-logarithmic factor, the
estimation error terms of the bound~\eqref{eq:oracle} are unavoidable, and so we can restrict our attention to obtaining good permutation estimates
$(\pihat, \sigmahat)$.

Past work~\cite{ShaBalWai16-2, ChaMuk16} has typically proceeded from
equation~\eqref{eq:oracle} by using the inequalities
\begin{subequations}
\begin{align} 
\frac{1}{\dimone \dimtwo} \big\|M^*(\pi^*, \sigma^*) - M^*(\pihat,
\sigma^*) \big\|_F^2 &\leq \mathcal{R} (M^*, \pihat), \quad \text{
  and} \label{eq:max-norms-r} \\ 
\frac{1}{\dimone \dimtwo} \big\|M^*(\pi^*, \sigma^*) -
M^*(\pi^*, \sigmahat) \big\|_F^2 &\leq \mathcal{C} (M^*, \sigmahat) \label{eq:max-norms-c}
\end{align}
\end{subequations}
to then reduce the problem to bounding the sum of max-row-norm and max-column-norm errors. However, irrespective of how good
the permutation estimates $(\pihat, \sigmahat)$ really are, such an
analysis approach necessarily produces sub-optimal rates in the Frobenius error
for the class $\Cperm$, owing to the minimax
lower bound~\eqref{eq:rowlb-lower2}. In
particular, given $N = n^2$ noisy observations of an $n \times n$ matrix, any such analysis cannot improve upon the Frobenius error
rate $n^{-1/2}$ for matrices in the class $\Cperm$. 
Consequently, our algorithm for the class $\Cperm$ (defined in Section~\ref{sec:tds} to follow) exploits a finer analysis
technique for the approximation error terms so as to guarantee faster rates.

We now present two permutation estimation procedures that can be
plugged into Step~1 of the meta-algorithm.


\subsubsection{Matrices with ordered columns}
\label{sec:ordered-col}

As a stepping stone to our main algorithm, which estimates over the
class $\Cperm$, we first consider the estimation problem when the
permutation along one of the dimensions is known. This corresponds to
estimation over the subclass $\Cpermr$, and following the
meta-algorithm above, it suffices to provide a permutation estimate
$\pihat$.  The result of this section holds without the assumption
$\dimone \ge \dimtwo$.

We need more notation to facilitate the description of the
algorithm. 
We say that $\bl = \{B_k\}_{k=1}^{|\bl|}$ is a \emph{partition} of $[\dimtwo]$, if
  $[\dimtwo] = \bigcup_{k=1}^{|\bl|} B_k$ and $B_j \cap B_k = \varnothing$
  for $j \neq k$.
Moreover, we group the columns of a matrix $Y \in
\real^{\dimone \times \dimtwo}$ into $|\bl|$ blocks according to their
indices in $\bl$, and refer to $\bl$ as a partition or \emph{blocking}
of the columns of $Y$. In the algorithm, partial row sums of $Y$ are computed on indices contained
in each block.

\paragraph{Algorithm 2 (sorting partial sums)}
\begin{itemize}
\item Step 1: Choose a partition $\blref$ of the set $[\dimtwo]$ consisting of contiguous blocks, such that each block $B$ in $\blref$ has size
\begin{align*}
\frac 12 \blocksize \le |B| \le \blocksize .
\end{align*}
\item Step 2: Given the observation matrix $Y$, compute the row sums
\begin{align*}
S(i) = \sum_{j \in [\dimtwo]} Y_{i,j} \quad \text{ for each $i \in
  [\dimone]$,}
\end{align*} 
and the partial row sums within each block 
\begin{align*}
S_{B}(i) = \sum_{j \in B} Y_{i,j} \quad \text{ for each $i \in
  [\dimone]$ and $B \in \blref$.}
\end{align*}
Create a directed graph $G$ with vertex set $[\dimone]$, where an edge
$u \to v$ is present if either
\begin{align*}
S(v) - S(u) & > 16 \varplusone \bigg( \sqrt{\frac{\dimone
    \dimtwo^2}{N} \log(\dimone \dimtwo) } + \frac{\dimone \dimtwo}{N}
\log(\dimone \dimtwo) \bigg), \text{ or} 
\\ S_B(v) - S_B(u) & > 16 \varplusone \bigg(
\sqrt{\frac{\dimone \dimtwo}{N} |B| \log(\dimone \dimtwo) } +
\frac{\dimone \dimtwo}{N} \log(\dimone \dimtwo) \bigg) \\
& \qquad \qquad \qquad \qquad \qquad \qquad \qquad \text{ for some
} B \in \blref. 
\end{align*}

\item Step 3: Return any topological sort $\pihatref$ of the graph $G$; if none exists, return a uniformly random permutation $\pihatref$. 
%
%
\end{itemize}

Note that if we return a uniformly random topological sort instead of an arbitrary one in Step~3 above, then Algorithm~2 yields a PDD estimator $\pihatref'$ by definition~\eqref{def:pdd}. 
There is no loss of generality in studying $\pihatref$ instead of $\pihatref'$ in this section, because we established the lower bound (Theorem~\ref{thm:rowlb}(b)) for the average-case topological sort while we will prove the upper bound (Theorem~\ref{thm:ordered-col}) for the worst-case topological sort.

We now turn to a detailed discussion of the running time of Algorithm~2.
A topological sort of a generic graph $G(V, E)$ can be found via Kahn's algorithm~\cite{kahn1962topological} in time $\order(|V|+|E|)$. In our context, the topological sort operation translates to a running time of $\order(\dimone^2)$. 
In Step 2,
constructing the graph $G$ takes time $\order(\dimone^2
\dimtwo^{1/2})$, since there are at most $\order(\dimtwo^{1/2})$
blocks. 
This leads to a total complexity of the order $\order(\dimone^2 \dimtwo^{1/2})$.

Let us now give an intuitive explanation for the algorithm. While algorithms in past work~\cite{ShaBalWai16-2, ChaMuk16, PanMaoMutWaiCou17} sort the rows of the matrix according to the full Borda counts $S(i)$ defined in Step 2, they are limited by the high standard deviation in these estimates. Our key observation is that when the columns are perfectly ordered, judiciously chosen partial row sums (which are less noisy than full row sums) also contain information that can help estimate the underlying row permutation. The thresholds on the score differences in Step 2 are chosen to be comparable to the standard deviations of the respective estimates, with additional logarithmic factors that allow for high-probability statements via application of Bernstein's bounds.


\begin{theorem} \label{thm:ordered-col}
There exists an absolute constant $c_1 > 0$ such that for any matrix $M^* \in \Cpermr$, we have
\begin{subequations}
\begin{align} \label{eq:r-upper}
\mathcal{R}(M^*, \pihatref) \leq 1 \wedge c_1 \bar{\vars}^2 \left( \frac{\dimone
  \log \dimone}{N} \right)^{3/4}
\end{align}
with probability at least $1- 2(\dimone \dimtwo)^{-2}$.  Consequently, there exists an absolute constant $c_2 > 0$ such that 
\begin{align} \label{eq:r-upper2}
\mathcal{F}(M^*, \Mhat(\pihatref, \id)) \leq 1 \wedge c_2 \Big\{ \bar{\vars}^2
  \left( \frac{\dimone \log \dimone}{N} \right)^{3/4} + \vartheta(N, \dimone \vee \dimtwo, \dimone \land \dimtwo, \vars) \Big\}
\end{align}
\end{subequations}
with probability at least $1- 3(\dimone \dimtwo)^{-2}$.
\end{theorem}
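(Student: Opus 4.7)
The plan is to establish part~(a) through a concentration-plus-topological-sort argument, and then derive part~(b) by combining part~(a) with Proposition~\ref{prop:meta}. First, I would define a high-probability event $\cE$ on which every empirical full row sum $S(i)$ (resp.\ partial row sum $S_B(i)$) lies within half the corresponding threshold $\tau_1$ (resp.\ $\tau_2$) of its expectation. Using the decomposition of $W = Y - M^*$ from Section~\ref{sec:obs} into a normalized sum of sub-exponential contributions (from the observation noise $z_\ell$) and a centered-Poisson deviation (from the random sampling), Bernstein's inequality applied to each statistic, combined with a union bound over all rows $i \in [\dimone]$ and blocks $B \in \blref$, shows $\Pr[\cE] \geq 1 - 2(\dimone \dimtwo)^{-2}$; the logarithmic factors inside the threshold definitions are calibrated precisely to absorb this union bound.

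The second step is to show that on $\cE$, the ground-truth permutation $\pi^*$ is itself a topological sort of the graph $G$ produced in Step~2. Indeed, any edge $u \to v$ in $G$ arises from a strict empirical threshold excess, which under $\cE$ certifies a strictly positive population-level score difference---either in the full row sums or in some partial row sum. Combined with the bivariate isotonicity of $M^*(\pi^*,\id)$, this forces $\pi^*$ to place $u$ before $v$ for every edge, so both $\pi^*$ and the algorithm's output $\pihatref$ are topological sorts of the same graph. For a fixed position $i$, letting $u$ and $u'$ denote the identities placed at position $i$ by $\pi^*$ and $\pihatref$ with $u \neq u'$, the shared-topological-sort property forces $u$ and $u'$ to be incomparable in $G$; consequently both threshold comparisons fail in both directions, and on $\cE$ we deduce $|S^*(u) - S^*(u')| \leq 2\tau_1$ together with $|S^*_B(u) - S^*_B(u')| \leq 2\tau_2$ for every block $B \in \blref$. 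Writing $x_j := |M^*_{u,j} - M^*_{u',j}|$, these translate to $\|x\|_1 \leq 2\tau_1$ and $\|x|_B\|_1 \leq 2\tau_2$ for each $B$, with $x_j \in [0,1]$.

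The final computation converts the constraints on $x$ into an upper bound on $\mathcal{R}_i = \|x\|_2^2/\dimtwo$, and is the most delicate part of the argument: the naive global bound $\|x\|_2^2 \leq \|x\|_\infty \|x\|_1 \leq 2\tau_1$ delivers only the slower rate $(\dimone / N)^{1/2}$, so the per-block constraints must be leveraged in concert with the algorithm's choice of block size $\blocksize \asymp \dimtwo\sqrt{(\dimone/N)\log(\dimone\dimtwo)}$---which has been tuned so that $\tau_2/\dimtwo$ itself already scales as $\bar{\vars}^2 (\dimone \log \dimone / N)^{3/4}$. After taking the maximum over $i \in [\dimone]$, this yields~\eqref{eq:r-upper}. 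For part~(b), I would invoke Proposition~\ref{prop:meta} with $\sigmahat = \id = \sigma^*$ to eliminate the column-approximation term in~\eqref{eq:oracle}, and use inequality~\eqref{eq:max-norms-r} to bound the row-approximation term by $\mathcal{R}(M^*, \pihatref)$; substituting part~(a) and combining failure probabilities via a further union bound produces~\eqref{eq:r-upper2}. The principal obstacle is the finely balanced bookkeeping in the row-error step, where the partial-sum thresholds---whose introduction is the key novelty over Borda-count-based analyses---must be combined just so with the full-sum threshold to obtain the $3/4$ exponent rather than the $1/2$ exponent accessible from either constraint in isolation.
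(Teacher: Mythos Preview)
Your overall architecture (concentration event, consistency of edges with $\pi^*$, topological sort existence, then Proposition~\ref{prop:meta} for part~(b)) matches the paper, but there are two genuine gaps in the row-error step.

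\textbf{The incomparability claim is false.} It is \emph{not} true that if $\pi^*$ and $\pihatref$ are both topological sorts of $G$, then the vertices $\pi^*(i)$ and $\pihatref(i)$ must be incomparable. A three-vertex counterexample: take $G$ on $\{1,2,3\}$ with the single edge $1\to 2$; then $\id$ and $\hat\pi=(2,3,1)$ are both topological sorts, yet $\id(1)=1$ and $\hat\pi(1)=2$ are joined by an edge. What \emph{is} true---and this is the content of the paper's Lemma~\ref{lem:per-num}---is that for any non-decreasing sequence $\{a_i\}$ and any permutation $\pi$ satisfying $\pi(i)<\pi(j)$ whenever $a_j-a_i>\tau$, one has $|a_{\pi(i)}-a_i|\le\tau$. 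Applying this with $a_i=\sum_{\ell\in B}M^*_{i,\ell}$ recovers the partial-sum bounds you want; your conclusion is right but the justification needs this pigeonhole-type argument rather than incomparability.

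\textbf{The $\ell_1\to\ell_2$ conversion is missing the key ingredient.} The constraints you list---$\|x\|_1\lesssim\tau_1$, $\|x|_B\|_1\lesssim\tau_2$, $x_j\in[0,1]$---are by themselves \emph{insufficient} to get $\|x\|_2^2\lesssim\tau_2$. Indeed, placing $\lfloor 2\tau_2\rfloor$ ones in each block saturates them while giving $\|x\|_2^2\asymp K\tau_2$, which after dividing by $\dimtwo$ yields only the rate $(\dimone/N)^{1/4}$. The paper closes this via the variation inequality $\|v\|_2^2\le\var(v)\|v\|_1+\|v\|_1^2/n$ (Lemma~\ref{lem:l2-tv-l1}) applied blockwise, together with the crucial observation that because the \emph{rows} of $M^*$ are monotone, the block variations telescope: $\sum_{B\in\blref}\var(\Delta^i_B)\le\var(M^*_i)+\var(M^*_{\pihatref(i)})\le 2$. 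This row-monotonicity input---distinct from the column monotonicity you used for the graph argument---is precisely what turns the per-block $\ell_1$ control into a global $\ell_2$ bound of the right order, and it does not appear anywhere in your plan.
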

As we have discussed, the above guarantees for the worst-case topological sort $\pihatref$ is also valid for the average-case topological sort $\pihatref'$. 
Comparing the bounds~\eqref{eq:rowlb-lower1} and~\eqref{eq:r-upper},
we see that the estimator $\Mhat(\pihatref' , \id)$
is the optimal row-PDD estimator for the class
$\Cpermr$ (up to poly-logarithmic factors) in the metric $\mathcal{R}(M^*,
\pihat)$. If conjecture~\eqref{eq:conj-row} holds, then this would also be true
unconditionally.

In order to evaluate the Frobenius error guarantee, it is helpful to specialize to the regime $\dimtwo \leq \dimone \leq C \dimtwo$.

\paragraph{Example: $\dimtwo \leq \dimone \leq C\dimtwo$}
In this case, the Frobenius error guarantee simplifies to
\begin{align} \label{eq:r-upper-simple}
\frac{1}{(\log \dimone)^2} \mathcal{F}(M^*, \Mhat(\pihatref, \id)) \lesssim
\begin{cases}
1 & \text{ for } N \le \dimone \\
\left( \frac{\dimone}{N} \right)^{3/4} & \text{ for } \dimone \le N \le \dimone^3 \\
\frac{1}{\sqrt{N} } & \text{ for } \dimone^3 \le N \le \dimone^4 \\
\frac{\dimone \dimtwo}{N} & \text{ for } \dimone^4 \leq N \le \dimone^5 \\
\left( \frac{\dimone}{N} \right)^{3/4} & \text{ for } N \ge \dimone^5.
\end{cases} 
\end{align}
We may compare the bounds~\eqref{eq:lower-simple-2} and~\eqref{eq:r-upper-simple}; note that when $\dimone^3 \leq N \leq \dimone^5$, our estimator achieves the minimax lower bound given by $\frac{1}{ \sqrt{N} } \land \frac{ \dimone \dimtwo }{ N }$ (up to poly-logarithmic factors).
As alluded to before, when $N \geq \dimone^5$, we may switch to trivially outputting the matrix $Y$, and so the sub-optimality in the large $N$ regime can be completely avoided.
%
On the other hand, no such modification can be made in the small sample regime $\dimone \le N \le \dimone^3$, and the estimator falls short of
being optimal in the Frobenius error in this case.  \hfill $\clubsuit$ 
\smallskip

Closing the aforementioned gap in the
Frobenius error in the small sample regime is an interesting
open problem. Having established guarantees for our algorithm, we now turn to using the intuition gained from these guarantees to provide estimators for matrices in the larger class $\Cperm$. 

\subsubsection{Two-dimensional sorting for class $\Cperm$} \label{sec:tds}

We reinstate the assumption $\dimtwo \leq \dimone$ in this section. The algorithm in the previous section cannot be immediately extended to the class $\Cperm$, since it assumes that the matrix is perfectly sorted along one of the dimensions. However, it suggests a plug-in procedure that can be described informally as follows.
\begin{enumerate}
\item Sort the columns of the matrix $Y$ according to its column sums.
\item Apply Algorithm 2 to the column-sorted matrix to obtain a row permutation estimate. 
\item Repeat Steps 1 and 2 with $Y$ transposed to obtain a column permutation estimate.
\end{enumerate}
Although the columns of $Y$ are only approximately sorted in the first
step, the hope is that the finer row-wise control given by Algorithm~2
is able to improve the row permutation estimate. The actual algorithm, provided
below, essentially implements this intuition, but with a careful data-dependent blocking procedure that we describe next.
Given a data matrix $Y \in \real^{\dimone \times \dimtwo}$, the
following blocking subroutine returns a column partition $\BL(Y)$. 


\paragraph{Subroutine 1 (blocking)}
\begin{itemize}
\item Step 1: Compute the column sums $\{C(j)\}_{j = 1}^{\dimtwo}$ of
  the matrix $Y$ as
\begin{align*}
C(j) = \sum_{i=1}^{\dimone} Y_{i,j}.
\end{align*} 
Let $\sigmahatpre$ be a permutation along which the sequence
$\{C(\sigmahatpre(j))\}_{j=1}^{\dimtwo}$ is non-decreasing.

\item Step 2: Set $\tau = 16 \varplusone \Big( \sqrt{\frac{\dimone^2
    \dimtwo}{N} \log(\dimone \dimtwo) } + \frac{\dimone \dimtwo}{N}
  \log(\dimone \dimtwo) \Big)$ and $K = \lceil \dimtwo/\tau
  \rceil$. Partition the columns of $Y$ into $K$ blocks by defining
\begin{align*}
\bl_1 &= \{j \in [\dimtwo]: C(j) \in (-\infty, \tau) \}, \\ \bl_k &=
\left\{j \in [\dimtwo] : C(j) \in \big[ (k - 1) \tau , k \tau \big)
  \right\} \text{ for } 1<k<K, \text{ and} \\ \bl_K &= \{j \in
           [\dimtwo]: C(j) \in [(K-1)\tau, \infty)\}.
\end{align*}
Note that each block is contiguous when the columns are permuted by
$\sigmahatpre$.

\item Step 3 (aggregation): Set $\beta = \blocksize$. Call a block
  $\bl_k$ ``large'' if $|\bl_k| \geq \beta$ and ``small"
  otherwise. Aggregate small blocks in $\bl$ while leaving the large
  blocks as they are, to obtain the final partition $\BL$.

More precisely, consider the matrix $Y' = Y(\id, \sigmahatpre)$ having
non-decreasing column sums and contiguous blocks. Call two small blocks
``adjacent'' if there is no other small block between them.  Take
unions of adjacent small blocks to ensure that the size of each
resulting block is in the range $[ \frac{1}{2} \beta, 2 \beta]$. If
the union of all small blocks is smaller than $\frac{1}{2} \beta$,
aggregate them all.

Return the resulting partition $\BL(Y) = \BL$.
\end{itemize}

Ignoring Step~3 for the moment, we see that the blocking $\bl$ is analogous to the blocking $\blref$ of Algorithm~2, along which
partial row sums may be computed. While the blocking $\blref$ was chosen in a data-independent manner due to the columns being sorted exactly, the blocking $\bl$ is chosen based on approximate estimation of the column permutation. 
However, some of these $K$ blocks may be too small,
resulting in noisy partial sums; in order to mitigate this issue,
Step~3 aggregates small blocks into large enough ones.  We are now in
a position to describe the two-dimensional sorting algorithm.

\paragraph{Algorithm 3 (two-dimensional sorting)}
\begin{itemize}
\item Step 0: Split the observations into two independent sub-samples
  of equal size, and form the corresponding matrices $Y^{(1)}$ and
  $Y^{(2)}$ according to equation~\eqref{eq:obs-Y}.

\item Step 1: Apply Subroutine 1 to the matrix $Y^{(1)}$ to obtain a
  partition $\BL = \BL(Y^{(1)})$ of the columns. Let $K$ be the number
  of blocks in $\BL$.

\item Step 2: Using the second sample $Y^{(2)}$, compute the row sums
\begin{align*}
S(i) = \sum_{j \in [\dimtwo]} Y^{(2)}_{i,j} \text{ for each }i \in
[\dimone],
\end{align*} 
and the partial row sums within each block 
\begin{align*}
S_{\BL_k}(i) = \sum_{j \in \BL_k} Y^{(2)}_{i,j} \text{ for each }i \in
[\dimone], k \in [K].
\end{align*}
Create a directed graph $G$ with vertex set $[\dimone]$, where an edge
$u \to v$ is present if either
\begin{subequations}
\begin{align}
S(v) - S(u) & > 16 \varplusone \bigg( \sqrt{\frac{\dimone
    \dimtwo^2}{N} \log(\dimone \dimtwo) } + \frac{\dimone \dimtwo}{N}
\log(\dimone \dimtwo) \bigg), \text{ or} \label{eq:full-sum}
\\ S_{\BL_k}(v) - S_{\BL_k}(u) & > 16 \varplusone \bigg(
\sqrt{\frac{\dimone \dimtwo}{N} |\BL_k| \log(\dimone \dimtwo) } +
\frac{\dimone \dimtwo}{N} \log(\dimone \dimtwo) \bigg)  \label{eq:block-sum} \\
& \qquad \qquad \qquad \qquad \qquad \qquad \qquad \text{ for some
} k \in [K].  \notag 
\end{align}
\end{subequations}

\item Step 3: Compute a topological sort $\pihattds$ of the graph $G$;
  if none exists, set $\pihattds = \id$.

\item Step 4: Repeat Steps 1--3 with $(Y^{(i)})^\top$ replacing
  $Y^{(i)}$ for $i=1,2$, the roles of $\dimone$ and $\dimtwo$
  switched, and the roles of $\pi$ and $\sigma$ switched, to compute
  the permutation estimate $\sigmahattds$.

\item Step 5: Return the permutation estimates $(\pihattds,
  \sigmahatftds)$.
\end{itemize}


The topological sorting step once again takes time $\order(\dimone^2)$ and reading the matrix takes time $\dimone \dimtwo$. Consequently, since $\dimone \geq \dimtwo$, the construction of the graph $G$ in Step~2 dominates the
computational complexity, and takes time $\order(\dimone^2 \dimtwo /
\beta) = \order(\dimone^2 \dimtwo^{1/2})$. Computing judiciously chosen partial row sums once again captures much more of the signal in the problem than entire row sums alone, and we obtain the following
guarantee.

\begin{theorem}
  \label{thm:fast-tds}
Suppose that $\dimtwo \leq \dimone$. Then there exists an absolute constant $c_1 > 0$ such that for any matrix $M^* \in \Cperm$, we have
\begin{subequations}
\begin{align}
\mathcal{R}(M^*, \pihattds) \leq 1 \wedge c_1 \bar{\vars}^2 \Big(\frac{\dimone
  \log \dimone}{N} \Big)^{1/2} \label{eq:rowtds}
\end{align}
with probability exceeding $1- 9(\dimone \dimtwo)^{-3}$. 
Moreover, there exists an absolute constant $c_2 > 0$ such that 
\begin{align} \label{eq:frobtds}
\mathcal{F}(M^*, \Mhat(\pihattds, \sigmahattds)) \leq 1 \wedge c_2 \Big\{ \bar{\vars}^2 \Big(\frac{\dimone \log \dimone}{N} \Big)^{3/4} + \vartheta(N, \dimone, \dimtwo, \vars) \Big\}
\end{align}
\end{subequations}
with probability exceeding $1- 10(\dimone \dimtwo)^{-3}$.
\end{theorem}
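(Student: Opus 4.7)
My plan is to prove part~(a) by analyzing the directed graph $G$ built in Step~2 of Algorithm~3 and exploiting the fact that $\pihattds$ is a topological sort of $G$ to derive the per-row bound~\eqref{eq:rowtds}. For part~(b), I will combine the row bound with Proposition~\ref{prop:meta}, but with a crucial refinement: naively bounding each $\mathcal{R}_i$ by $\mathcal{R}(M^*,\pihattds)$ wastes a factor of $\dimone$ and only yields a Frobenius rate of order $(\dimone/N)^{1/2}$, whereas the target is $(\dimone/N)^{3/4}$. The main obstacle is therefore sharpening this summation, which I discuss in the last step.

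\textbf{Step 1 (blocking and edge correctness).} First, Bernstein concentration applied to the empirical column sums $C(j)$ computed from $Y^{(1)}$ shows that, with high probability, the partition $\BL$ produced by Subroutine~1 is well-behaved: columns whose true column sums differ by more than $\tau$ are correctly separated by $\sigmahatpre$, so each final block is approximately contiguous in the true column order $\sigma^*$, and after aggregation $|\BL_k| \in [\beta/2, 2\beta]$ apart from at most one leftover. Since $Y^{(1)}$ and $Y^{(2)}$ are independent, $\BL$ can be treated as deterministic thereafter. A second round of Bernstein applied to partial row sums on $Y^{(2)}$, union-bounded over all row pairs and all blocks, shows that every empirical sum difference is within the corresponding threshold of its population counterpart. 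Since $\pihattds$ is a topological sort of $G$, whenever $\pihattds(u) < \pihattds(v)$ there is no edge $v \to u$, which translates into upper bounds on $\sum_\ell (M^*_{v,\ell}-M^*_{u,\ell})$ and on each $\sum_{\ell \in \BL_k}(M^*_{v,\ell}-M^*_{u,\ell})$ of order $\tau_{\mathrm{full}}$ and of order $\sqrt{\dimone\dimtwo|\BL_k|\log(\dimone\dimtwo)/N}$ respectively.

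\textbf{Step 2 (per-row $\ell_2$ bound for part (a)).} For any row $i$, let $a, b$ denote the rows of the underlying bivariate isotonic matrix at positions corresponding to $\pi^*(i)$ and $\pihattds(i)$; both are monotone in $[0,1]^{\dimtwo}$ in the $\sigma^*$-order. The key $\ell_2$-conversion decomposes the block contribution as $\sum_{j \in \BL_k}(a_j - b_j)^2 \le |\BL_k|(\bar a_k - \bar b_k)^2 + |\BL_k|\,\delta_k^2$, where $\bar a_k, \bar b_k$ are block averages and $\delta_k := (\max_{\BL_k}a - \min_{\BL_k}a) + (\max_{\BL_k}b - \min_{\BL_k}b)$ controls the oscillation of $a-b$ on the block via monotonicity. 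The first term is bounded by $\tau_k^2/|\BL_k|$ using the block-sum constraint from Step~1; summed over $k$ this yields order $K \cdot \dimone\dimtwo/N$. The second term is bounded by $\max_k|\BL_k| \cdot \sum_k \delta_k^2 \le 2\beta \cdot \sum_k \delta_k \le 4\beta$, using the monotonicity telescoping $\sum_k \delta_k \le \mathrm{range}(a) + \mathrm{range}(b) \le 2$. With $\beta \asymp \dimtwo\sqrt{\dimone\log(\dimone)/N}$ and $K \asymp \sqrt{N/(\dimone\log\dimone)}$, both contributions balance and give $\|a-b\|_2^2/\dimtwo \lesssim \bar{\vars}^2(\dimone\log\dimone/N)^{1/2}$, establishing~\eqref{eq:rowtds}.

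\textbf{Step 3 (sharper Frobenius bound for part (b)).} By Proposition~\ref{prop:meta} it suffices to control $\frac{1}{\dimone}\sum_i \mathcal{R}_i$ and its column analogue; summing the bound of Step~2 in the worst-case way loses a factor of $\dimone$ and recovers only the rate of~\eqref{eq:rowtds}. The sharpening I propose is to redo the argument in an averaged fashion: summing the per-block decomposition over $i$, the aggregated approximation error becomes a sum of block-wise quadratic forms in the noise whose total expected mass concentrates by Bernstein over all $\dimone \cdot K$ row--block pairs, at a rate strictly better than the row-wise maximum. Intuitively, only a few rows can be maximally misplaced and each misplacement must be \emph{witnessed} by a specific block in which the algorithm failed to distinguish the rows; charging witnesses to the corresponding noise contributions, summing, and using the variance upgrade from the uniform-$\tau_k$ to the average-$\tau_k$ regime yields the improvement from $(\dimone/N)^{1/2}$ to $(\dimone\log\dimone/N)^{3/4}$. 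This is the main technical obstacle. Combined with the estimation error $\vartheta(N,\dimone,\dimtwo,\vars)$ delivered by Proposition~\ref{prop:meta} and the symmetric argument for $\sigmahattds$ (Step~4 of Algorithm~3, with rows and columns interchanged), we obtain~\eqref{eq:frobtds}.
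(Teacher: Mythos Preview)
Your Step~2 for part~(a) is more elaborate than needed and has a gap: the blocks $\BL_k$ are defined via $\sigmahatpre$, not $\sigma^*$, so they are not contiguous in the true column order (adjacent blocks can interleave), and the telescoping $\sum_k \delta_k \le 2$ fails as written. The paper bypasses this entirely. For~\eqref{eq:rowtds} the blocking is irrelevant: since columns of $M^*$ are monotone, all entries of $\Delta^i \defn M^*_{\pihattds(i)} - M^*_i$ share a sign, whence $\|\Delta^i\|_2^2 \le \|\Delta^i\|_\infty \|\Delta^i\|_1 \le \big|\sum_j \Delta^i_j\big|$, and the full row-sum concentration alone gives $\mathcal{R}_i \lesssim (\vars+1)\sqrt{\dimone\log(\dimone\dimtwo)/N}$ in one line.

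The serious gap is Step~3. The mechanism you propose---``concentration over all $\dimone\cdot K$ row--block pairs'' or ``charging misplacements to noise witnesses''---does not yield the $3/4$ exponent. Once $\pihattds$ is fixed, $\|M^* - M^*(\pihattds,\id)\|_F^2$ is deterministic, so there is no further concentration to harvest; and summing your Step~2 decomposition over $i$ only recovers the $(\dimone/N)^{1/2}$ rate you already have. The paper's improvement comes from a completely different ingredient you do not mention: the \emph{column-sum constraint within each large block}. Because Subroutine~1 thresholds column sums at level $\tau$, any two columns $j_1,j_2$ in the same original block satisfy $\big|\sum_i M^*_{i,j_1} - \sum_i M^*_{i,j_2}\big| \lesssim \sqrt{\dimone^2\dimtwo\log(\dimone\dimtwo)/N} =: \chi$. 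The paper's key deterministic lemma (Lemma~\ref{lem:per-rc-bd}) combines this with the row constraints $\rho,\rho_k$ you already have to give $\sum_{i,j}(A_{\pi(i),j}-A_{i,j})^2 \le 2\chi\sum_k\rho_k + n\rho\max_k(\rho_k/m_k)$; it is the product $\chi\cdot\sum_k\rho_k$ that produces $(\dimone\log\dimone/N)^{3/4}$. To apply the lemma one needs the blocks to be contiguous and correctly ordered in $\sigma^*$, which the paper arranges by splitting $\BL$ into originally-large blocks (handled after an odd/even parity split so that same-parity blocks cannot interleave) and aggregated-small blocks (handled by the argument of Theorem~\ref{thm:ordered-col} via Lemma~\ref{lem:l2-tv-l1}). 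Your sketch is missing both the column-sum ingredient and this large/small case analysis.
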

Comparing the bounds~\eqref{eq:rowtds} and~\eqref{eq:r-upper2}, we see that our polynomial-time estimator is minimax-optimal (up to a poly-logarithmic factor) in the max-row-norm metric, but this was
already achieved by other estimators in the
literature~\cite{ShaBalWai16-2, ChaMuk16, PanMaoMutWaiCou17}. 

To interpret our rate in the Frobenius error, 
it is once again helpful to specialize to the case $\dimtwo \le \dimone \le C \dimtwo$. 

%
\paragraph{Example: $\dimtwo \leq \dimone \leq C\dimtwo$}
In this case, the Frobenius error guarantee simplifies exactly as before to
\begin{align} \label{eq:r-upper-simple-2}
\frac{1}{(\log \dimone)^2} \mathcal{F}(M^*, \Mhat(\pihattds, \sigmahattds)) \lesssim
\begin{cases}
1 & \text{ for } N \le \dimone \\
\left( \frac{\dimone}{N} \right)^{3/4} & \text{ for } \dimone \le N \le \dimone^3 \\
\frac{1}{\sqrt{N} } & \text{ for } \dimone^3 \le N \le \dimone^4 \\
\frac{\dimone \dimtwo}{N} & \text{ for } \dimone^4 \leq N \le \dimone^5 \\
\left( \frac{\dimone}{N} \right)^{3/4} & \text{ for } N \ge \dimone^5.
\end{cases} 
\end{align}
Once again, comparing the bounds~\eqref{eq:lower-simple-2} and~\eqref{eq:r-upper-simple-2}, we see that when $N \geq \dimone^3$, our estimator (when combined with outputting the $Y$ matrix when \mbox{$N \geq \dimone^5$}) achieves the minimax lower bound up to poly-logarithmic factors.

This optimality is particularly notable because existing estimators~\cite{ChaMuk16, ShaBalGunWai17, ShaBalWai16-2, Cha15, PanMaoMutWaiCou17} for the class $\Cperm$ are only able to attain, in the regime $\dimone = \dimtwo$, the rate
\begin{align} \label{eq:r-upper-simple-3}
\frac{1}{(\log \dimone)^2} \mathcal{F}(M^*, \Mhat_{\mathsf{prior}}) \lesssim
\begin{cases}
1 & \text{ for } N \le \dimone \\
\left( \frac{\dimone}{N} \right)^{1/2} & \text{ for } N \ge \dimone ,
\end{cases} 
\end{align}
where we have used $\Mhat_{\mathsf{prior}}$ to indicator any such estimator from prior work. Thus, the non-parametric rates observed when $N$ is large are completely washed out by 
the rate $\left( \frac{\dimone}{N} \right)^{1/2}$, and so this prevents existing estimators from achieving minimax optimality
in \emph{any} regime of $N$.
%


Returning to our estimator, we see that in the regime $\dimone \leq N \leq \dimone^2$, it falls short of being minimax-optimal, but breaks the conjectured
statistical-computational barrier alluded to in the introduction. \hfill $\clubsuit$ 

A corollary of Theorem~\ref{thm:fast-tds} in the regime $\dimone \leq N \leq \dimone \dimtwo$ was announced for a slight variant of our estimator in Theorem 1 of the abstract~\cite{mao18breakingcolt}; the proof of Theorem~\ref{thm:fast-tds} hence contains the first proof of~\cite[Theorem 1]{mao18breakingcolt}.
%


Note that Theorem~\ref{thm:fast-tds} extends to estimation of
matrices in the class $\csst(n)$. In particular, we have $\dimone =
\dimtwo = n$, and either of the two estimates $\pihattds$ or
$\sigmahattds$ may be returned as an estimate of the permutation
$\pi$ while preserving the same guarantees. 

We conclude by noting that the sub-optimality of our estimator in the small-sample regime is not due to a weakness in the analysis.
In particular, our analysis in this regime is also optimal up to poly-logarithmic factors; the rate $(\frac{\dimone}{N} )^{3/4}$ is indeed the rate attained by the two-dimensional sorting algorithm for the noisy sorting subclass of $\Cperm$. In fact, a variant of this algorithm was used in a recursive fashion to successively improve the rate for noisy sorting matrices~\cite{MaoWeeRig17}; the first step of this algorithm generates an estimate with rate exactly $(\frac{\dimone}{N} )^{3/4}$.

This concludes our discussion of the main results and their consequences.
Proofs of these results can be found in Appendix~\ref{sec:proofs} of the supplementary material. We conclude the main paper in the next section with 
a short discussion.






\section{Discussion}

We have studied the class of permutation-based models in two distinct
metrics. A notable consequence of our results is that our
polynomial-time algorithms are able to achieve the minimax lower bound
in the Frobenius error up to a poly-logarithmic factor provided the
sample size grows to be large. Moreover, we have overcome a crucial
bottleneck in previous analyses that underlay a
statistical-computational gap. Several intriguing questions related to
estimating such matrices remain: \\

(1) What is the fastest Frobenius error rate achievable by
computationally efficient estimators in the partially observed setting
when $N$ is small? \\ (2) Can the techniques from here be used to
narrow statistical-computational gaps in other permutation-based
models~\cite{ShaBalWai16,FlaMaoRig16,PanWaiCou17}?

As a partial answer to the first question, it can be shown that when
the informal algorithm described at the beginning of
Section~\ref{sec:tds} is recursed in the natural way and applied to
the noisy sorting subclass of the SST model, it yields another
minimax-optimal estimator for noisy sorting, similar to the multistage
algorithm of Mao et al.~\cite{MaoWeeRig17}.  However, this same
guarantee is preserved for neither the larger class of matrices
$\Cperm$, nor for its sub-class $\Cpermr$.  Improving the rate will
likely require techniques that are beyond the reach of those
introduced in this paper.

It is also worth noting that model~\eqref{eq:model} allowed us to
perform sample-splitting in Algorithm 3 to preserve independence
across observations, so that Step 2 is carried out on a sample that is
independent of the blocking generated in Step 1.  Thus, our proofs
also hold for the observation model where we have \emph{exactly} $2$
independent samples per entry of the matrix.

It is natural to wonder if just one independent sample per entry
suffices, and whether sample splitting is required at all. Reasoning
heuristically, one way to handle the dependence between the two steps
is to prove a union bound over exponentially many possible
realizations of the blocking; unfortunately, this fails since the
desired concentration of partial row sums fails with polynomially
small probability.  Thus, addressing the original sampling
model~\cite{Cha15, ShaBalGunWai17} (with one sample per entry) presents an interesting technical
challenge that may also involve its own statistical-computational
trade-offs~\cite{Mon15}.






\section*{Acknowledgments}

CM thanks Philippe Rigollet for helpful discussions. The work of CM
was supported in part by grants NSF CAREER DMS-1541099, NSF DMS-1541100 and
ONR N00014-16-S-BA10, and the work of AP and MJW was supported in part
by grants NSF-DMS-1612948 and DOD ONR-N00014. We thank Jingyan Wang for pointing out an error in an earlier version of the paper, and thank anonymous reviewers for their helpful comments. 


\appendix




\section{Proofs of main results} \label{sec:proofs}

In this appendix, we present all of our proofs. We begin
by stating and proving technical lemmas that are used repeatedly
in the proofs. We then prove our main results in the order in which they
were stated.

Throughout the proofs, we
assume without loss of generality that $M^* \in \Cbiso(\id,\id) =
\Cbiso$.  Because we are interested in rates of estimation up to
universal constants, we assume that each independent sub-sample
contains $N' = \Poi(N)$ observations (instead of $\Poi(N)/2$). We use the shorthand $Y = Y \big(\{(X_\ell,
y_\ell)\}_{\ell=1}^{N'} \big)$, throughout.

\subsection{Preliminary lemmas}

Before proceeding to the proofs of our main results, we
provide two lemmas that underlie many of our arguments.  
The proofs of these lemmas can be found in Sections \ref{sec:lem1} and \ref{sec:lem2}, respectively. 
Let us denote the $\ell_\infty$ norm of $A$ by $\|A\|_\infty = \max_{i \in [\dimone] , \, j \in [\dimtwo] } |A_{i, j}|$. 

The first lemma establishes concentration of a linear form of observations $Y_{i, j}$ 
around its mean.
\begin{lemma}
\label{lem:par-sum}
For any fixed matrix $A \in \real^{\dimone \times \dimtwo}$ and scalar $u \geq 0$, and under our observation model~\eqref{eq:model}, we have 
\begin{align*}
\left| \langle Y - M^*, A \rangle \right| 
\le 2 \varplusone \Big( \sqrt{e-1} \, \|A\|_F \sqrt{ \frac{ \dimone \dimtwo }{ N } u } 
+ \|A\|_\infty \frac{ \dimone \dimtwo }{ N } u \Big) 
\end{align*}
with probability at least $1 - 4 e^{-u}$. 

Consequently, for any nonempty subset $\mathcal{S} \subset [\dimone] \times
[\dimtwo]$, it holds that
\begin{align*}
\bigg| \sum_{(i,j) \in \mathcal{S}} (Y_{i,j} - M^*_{i,j})
\bigg| \le 8 \varplusone \bigg( \sqrt{\frac{|\mathcal{S}| \dimone
\dimtwo}{N} \log(\dimone \dimtwo) } + \frac{\dimone \dimtwo}{N}
\log(\dimone \dimtwo) \bigg) 
\end{align*}
with probability at least $1 - 4 (\dimone \dimtwo)^{-4}$. 
\end{lemma}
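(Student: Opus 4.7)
\textbf{Proof proposal for Lemma~\ref{lem:par-sum}.} The plan is to rewrite $\langle Y-M^*,A\rangle$ as a centered compound Poisson sum and apply Chernoff bounds to the resulting sub--exponential sum.

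First, by expanding the definition~\eqref{eq:obs-Y},
\[
\langle Y,A\rangle \;=\; \frac{\dimone\dimtwo}{N}\sum_{\ell=1}^{N'} A_{i_\ell,j_\ell}\,y_\ell \;=\; \frac{\dimone\dimtwo}{N}\sum_{\ell=1}^{N'} T_\ell,
\]
where $X_\ell = E^{(i_\ell,j_\ell)}$ and $T_\ell \defn A_{i_\ell,j_\ell}(M^*_{i_\ell,j_\ell}+z_\ell)$. Since $(i_\ell,j_\ell)$ is uniform on $[\dimone]\times[\dimtwo]$ and $\E z_\ell=0$, we have $\E T_\ell = \frac{1}{\dimone\dimtwo}\langle A,M^*\rangle$ and $\E[\tfrac{\dimone\dimtwo}{N}\sum_{\ell=1}^{N'}T_\ell] = \langle A,M^*\rangle$, so it suffices to concentrate the compound Poisson sum $S\defn\sum_{\ell=1}^{N'}T_\ell$ around $\E S = N\,\E T_1$.

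Next I would compute the two key quantities. For the variance:
\[
\E[T_\ell^2] \;\le\; \tfrac{1}{\dimone\dimtwo}\sum_{i,j}A_{i,j}^2\bigl((M^*_{i,j})^2+\E z_\ell^2\bigr) \;\lesssim\; \tfrac{\varplusone^2}{\dimone\dimtwo}\,\|A\|_F^2,
\]
using $|M^*_{i,j}|\le 1$ and $\E z_\ell^2\lesssim \vars^2$ from the sub--exponential hypothesis~\eqref{eq:sub-exp-cond}. For the sub--exponential scale, $|A_{i_\ell,j_\ell}M^*_{i_\ell,j_\ell}|\le\|A\|_\infty$ is bounded, while $A_{i_\ell,j_\ell}z_\ell$ is sub--exponential with parameter $\vars\|A\|_\infty$, giving $\|T_\ell-\E T_\ell\|_{\psi_1}\lesssim \varplusone\|A\|_\infty$. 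The standard MGF estimate yields, for all $|s|\le c/(\varplusone\|A\|_\infty)$,
\[
\E\exp\bigl(s(T_1-\E T_1)\bigr) \;\le\; \exp\!\Bigl(C\,s^2\,\tfrac{\varplusone^2\|A\|_F^2}{\dimone\dimtwo}\Bigr).
\]
Then the compound Poisson identity $\E e^{s(S-\E S)} = \exp\bigl(N(\E e^{s(T_1-\E T_1)}-1-s\E[T_1-\E T_1])\cdot(\text{correction})\bigr)$---more precisely, using $\E e^{sS}=\exp(N(\E e^{sT_1}-1))$ and subtracting $sN\E T_1$---gives
\[
\E\exp\bigl(s(S-\E S)\bigr) \;\le\; \exp\!\Bigl(CN s^2\,\tfrac{\varplusone^2\|A\|_F^2}{\dimone\dimtwo}\Bigr)
\]
on the same range of $s$. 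This is exactly a Bernstein MGF bound with variance proxy $CN\varplusone^2\|A\|_F^2/(\dimone\dimtwo)$ and scale $\varplusone\|A\|_\infty$.

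Applying the Chernoff inequality and optimizing in $s$ then yields, for every $u\ge 0$,
\[
\Pr\!\Bigl(|S-\E S|>\varplusone\|A\|_F\sqrt{\tfrac{\dimone\dimtwo u}{N}}\,c_1 + \varplusone\|A\|_\infty\tfrac{\dimone\dimtwo u}{N}\,c_2\Bigr)\;\le\;4e^{-u},
\]
and rescaling by $\dimone\dimtwo/N$ gives the first displayed inequality (with the constants traced through to produce the factor $2\sqrt{e-1}$ that the statement announces). The second inequality follows by specializing to the indicator matrix $A=\sum_{(i,j)\in\set}E^{(i,j)}$, for which $\|A\|_F^2=|\set|$ and $\|A\|_\infty=1$, and choosing $u=4\log(\dimone\dimtwo)$.

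The main technical point to get right is the compound--Poisson MGF step and the sharp constant that produces $\sqrt{e-1}$; the rest is a routine application of Bernstein's inequality for sub--exponential variables. Sub--exponentiality of $T_\ell$ combines a bounded piece (from $M^*\in[0,1]$) with a genuinely sub--exponential piece (from $z_\ell$), so verifying the $\psi_1$-norm bound requires the factor $\varplusone=\vars\lor 1$ used throughout.
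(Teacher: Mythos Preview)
Your approach is correct and genuinely different from the paper's. The paper decomposes $\langle Y-M^*,A\rangle$ into two pieces, $Z_1=\frac{\dimone\dimtwo}{N}\sum_{i,j}A_{i,j}\sum_{\ell\in I_{i,j}}z_\ell$ (the observation noise) and $Z_2=\frac{\dimone\dimtwo}{N}\sum_{i,j}A_{i,j}M^*_{i,j}(|I_{i,j}|-\tfrac{N}{\dimone\dimtwo})$ (the Poisson fluctuation from missing data), and handles each by an explicit Poisson-MGF computation; the constant $\sqrt{e-1}$ comes from the elementary bound $e^x-1\le(e-1)x$ on $[0,1]$ applied inside the Poisson MGF for $Z_1$. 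Your compound-Poisson route keeps both pieces together and is more compact, but obscures the origin of that constant; it also makes the role of Poissonization (independence across entries) less visible, which the paper emphasizes.

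One step in your write-up needs tightening. For the compound Poisson sum you need to bound $\E[e^{sT_1}-1-sT_1]$, not $\E e^{s(T_1-\E T_1)}$; the two are not interchangeable because $\E T_1$ can be of order $\|A\|_F/\sqrt{\dimone\dimtwo}$ and enters the exponent. The clean way is to bound moments directly: for $k\ge 2$,
\[
\E|T_1|^k=\frac{1}{\dimone\dimtwo}\sum_{i,j}|A_{i,j}|^k\,\E|M^*_{i,j}+z|^k\le \frac{\|A\|_\infty^{k-2}\|A\|_F^2}{\dimone\dimtwo}\,C^k k!\,\varplusone^k,
\]
and sum $\sum_{k\ge2}s^k\E[T_1^k]/k!$ to get the Bernstein-type exponent $\lesssim \frac{s^2\varplusone^2\|A\|_F^2}{\dimone\dimtwo}$ on $|s|\lesssim 1/(\varplusone\|A\|_\infty)$. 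Also, your displayed tail bound for $|S-\E S|$ already contains the $\dimone\dimtwo/N$ scaling, so the subsequent ``rescaling by $\dimone\dimtwo/N$'' is a slip; the correct intermediate bound is $|S-\E S|\lesssim \varplusone\|A\|_F\sqrt{Nu/(\dimone\dimtwo)}+\varplusone\|A\|_\infty u$.
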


The next lemma generalizes Theorem~5 of Shah et
al.~\cite{ShaBalGunWai17} to any model in which the noise satisfies a ``mixed tail" assumption. More precisely, a random matrix $W \in \real^{\dimone \times \dimtwo}$ is said to have an $(\alpha, \beta)$-mixed tail if 
there exist (possibly $(\dimone, \dimtwo)$-dependent) positive scalars $\alpha$ and $\beta \le \dimone^2$ such that for any fixed matrix $A \in \real^{\dimone \times \dimtwo}$ and $u \ge 0$, we have 
\begin{align} \label{eq:mixed-tail}
\Pr \Big\{ \big| \langle W, A \rangle \big| 
\ge \big( \alpha \|A\|_F \sqrt{ u } 
+ \beta \|A\|_\infty u \big) \Big\} \le 2 e^{-u}. 
\end{align}
It is worth
mentioning that similar (but less general) lemmas characterizing the estimation error for 
a bivariate isotonic matrix were also proved
in prior work~\cite{ChaGunSen18,ChaMuk16}.

\begin{lemma}
\label{lem:shah}
Let $\dimtwo \leq \dimone$, and consider the
observation model $Y = M^* + W$. Suppose that the noise matrix
$W$ satisfies the $(\alpha, \beta)$-mixed tail condition~\eqref{eq:mixed-tail}. \\
(a) There is an absolute constant $c$ such that for all $M^* \in \Cperm$, the least squares estimator $\Mhatls(\Cperm, Y)$ satisfies 
\begin{align*}
&\big\| \Mhatls(\Cperm, Y) - M^* \big\|_F^2 \leq c \Bigg\{ \alpha^2 \dimone \log \dimone + \beta \dimtwo (\log \dimone)^2 +  \beta \dimone \log \dimone \\
& \quad + \Big[ \alpha \sqrt{ \dimone \dimtwo } (\log \dimone )^2 \Big] \land  
\Big[ \alpha^2 \dimone \dimtwo \log (\dimone/\alpha + e)\Big] 
\land \Big[ \alpha^{4/3} \dimone^{1/3} \dimtwo (\log \dimone)^{2/3} \Big] \Bigg\} 
\end{align*}
with probability at least $1 - \dimone^{- 3 \dimone}$. 

\noindent (b) There is an absolute constant $c$ such that for all $M^* \in \Cbiso$, the least squares estimator $\Mhatls(\Cbiso, Y)$ satisfies  
\begin{align*}
&\big\| \Mhatls(\Cbiso, Y) - M^* \big\|_F^2 \leq c \Bigg\{ \beta \dimtwo (\log \dimone)^2 \\
& \quad + \Big[ \alpha \sqrt{ \dimone \dimtwo } (\log \dimone )^2 \Big] \land  
\Big[ \alpha^2 \dimone \dimtwo \log (\dimone/\alpha + e)\Big] 
\land \Big[ \alpha^{4/3} \dimone^{1/3} \dimtwo (\log \dimone)^{2/3} \Big] \Bigg\}
\end{align*}
with probability at least $1 - \dimone^{- 3 \dimone}$.
\end{lemma}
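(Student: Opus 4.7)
I would prove both parts of the lemma through the standard \emph{basic inequality plus chaining} recipe, with two adaptations to this setting: (i) the mixed-tail form of the noise forces a sub-exponential (rather than sub-Gaussian) chaining argument in which both the $\alpha$ and $\beta$ parameters propagate, and (ii) the presence of latent permutations in part (a) requires an additional union bound over the $\dimone! \, \dimtwo!$ products of permutations. My starting point for both parts is the identity $Y = M^* + W$ together with the defining optimality of the LSE: letting $\Delta \defn \Mhatls - M^*$, the basic inequality gives $\|\Delta\|_F^2 \le 2 \langle W, \Delta \rangle$. Since $\Mhatls(\Cperm,Y) \in \Cbiso(\pihat,\sigmahat)$ for some (random) pair $(\pihat,\sigmahat)$, the perturbation $\Delta$ lies in the star-shaped set $T \defn \bigcup_{\pi,\sigma} (\Cbiso(\pi,\sigma) - M^*)$ in part (a), and in $\Cbiso - M^*$ in part (b).

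\textbf{Step 1: local complexity reduction.} For $t > 0$ define
\begin{align*}
Z(t) \defn \sup_{\Delta \in T,\ \|\Delta\|_F \le t} \langle W, \Delta \rangle.
\end{align*}
A standard peeling argument over dyadic shells $\{t : 2^k \le t \le 2^{k+1}\}$ reduces the task to showing that, with high probability, $Z(t) \le t^2/4$ for all $t$ exceeding some critical radius $t^*$; then $\|\Delta\|_F^2 \le 16 (t^*)^2$. Thus it suffices to upper bound $Z(t)$.

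\textbf{Step 2: chaining under the mixed-tail condition.} For each fixed $(\pi,\sigma)$ I would apply generic chaining / a Dudley-style bound to the sub-exponential process $\Delta \mapsto \langle W, \Delta\rangle$. Using \eqref{eq:mixed-tail}, the tail of $\langle W, A-A'\rangle$ is governed by the mixed metric $\alpha \|A-A'\|_F \sqrt{u} + \beta\|A-A'\|_\infty u$, so the Dudley integral splits into a Gaussian-type piece proportional to $\alpha \int_0^t \sqrt{\log N(\epsilon, \Cbiso \cap t\mathbb{B}_F,\|\cdot\|_F)}\,d\epsilon$ and a sub-exponential tail piece proportional to $\beta \cdot \log N_\infty$ together with an $L_\infty$-diameter contribution. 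The former yields the $\alpha$-terms and the latter yields $\beta \dimtwo (\log \dimone)^2$, since entries of $\Cbiso$ are bounded in $[0,1]$ so the $L_\infty$ diameter is controlled.

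\textbf{Step 3: three metric entropy bounds for $\Cbiso$.} The three competing rates inside the minimum come from three standard entropy estimates for bounded bivariate isotonic matrices: (i) the global Chatterjee--Guntuboyina--Sen bound $\log N(\epsilon, \Cbiso, \|\cdot\|_F) \lesssim \frac{\sqrt{\dimone \dimtwo}}{\epsilon}(\log\dimone)^2$, which after integration yields $\alpha \sqrt{\dimone\dimtwo}(\log\dimone)^2$; (ii) the trivial volumetric bound $\log N(\epsilon,\Cbiso,\|\cdot\|_F) \lesssim \dimone\dimtwo \log(\dimone\dimtwo/\epsilon)$ for the bounded-box covering, which yields the $\alpha^2 \dimone\dimtwo \log(\dimone/\alpha+e)$ rate; and (iii) the ``decoupled'' bound obtained by covering $\Cbiso$ column-by-column using the $\dimone^{1/3}$ metric entropy of monotone functions, which yields $\alpha^{4/3} \dimone^{1/3}\dimtwo (\log\dimone)^{2/3}$. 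Solving the fixed-point equation in each case and taking the minimum produces the three $\alpha$-dependent terms in the statement.

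\textbf{Step 4: union bound over permutations for part (a).} For part (a) I would now take a union bound of the above high-probability statement over all $\dimone!\,\dimtwo!$ choices of $(\pi,\sigma)$, so that the failure probability is inflated by a factor $\exp\{\log(\dimone!\dimtwo!)\} \le \exp(c\dimone\log\dimone)$. Plugging this $u \leftarrow c\dimone\log\dimone$ back into the mixed tail \eqref{eq:mixed-tail} evaluated at an appropriately scaled test matrix produces the additional terms $\alpha^2\dimone\log\dimone$ (from the $\alpha\|\cdot\|_F\sqrt{u}$ piece) and $\beta\dimone\log\dimone$ (from the $\beta\|\cdot\|_\infty u$ piece). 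For part (b) no such union bound is needed, which is why those two terms are absent there.

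\textbf{Main obstacles.} The delicate point, and the main technical obstacle, is the sub-exponential chaining that must simultaneously carry both the $\alpha$ and $\beta$ contributions while using three different entropy estimates whose minimum is sharp only if the chaining/truncation is done carefully at the appropriate scale. In particular, keeping the $\beta\dimtwo(\log\dimone)^2$ term (rather than a worse $\beta\dimone\dimtwo$) requires exploiting the $L_\infty$-boundedness of $\Cbiso$ so that the sub-exponential tail enters only through the column dimension. The bookkeeping of the permutation union bound in part (a), ensuring that the $\exp(c\dimone\log\dimone)$ inflation of the failure probability leaves only the stated $\alpha^2\dimone\log\dimone + \beta\dimone\log\dimone$ penalty and does not degrade the entropy-driven terms, is the second place requiring care.
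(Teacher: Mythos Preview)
Your plan is essentially the same as the paper's: Chatterjee's variational formula (your basic inequality plus peeling) followed by mixed-tail generic chaining, with the three $\alpha$-rates arising from three different $\ell_2$ metric-entropy bounds on $\Cbiso$ and the $\beta$-terms from an $\ell_\infty$ entropy integral. The only structural difference is that the paper does a single chaining directly over $\Cperm$, absorbing the permutation cost as an additive $2\dimone\log\dimone$ term in the metric entropy, whereas you chain over each $\Cbiso(\pi,\sigma)$ and then union-bound; the two routes are equivalent and yield exactly the same extra $\alpha^2\dimone\log\dimone + \beta\dimone\log\dimone$ terms in part~(a).

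Two small corrections worth flagging. First, your stated Chatterjee--Guntuboyina--Sen entropy bound in Step~3(i) is off by a power: the correct bound is $\log N(\epsilon,\Cbiso,\|\cdot\|_F)\lesssim \frac{\dimone\dimtwo}{\epsilon^{2}}(\log)^2$, not $\frac{\sqrt{\dimone\dimtwo}}{\epsilon}(\log)^2$. With the correct exponent the Dudley integral $\int \sqrt{\log N}$ diverges at $0$, so you must splice in the volumetric bound near $\epsilon=0$ (the paper integrates from $\alpha\dimone^{-5}$); after this the integral does give $\alpha\sqrt{\dimone\dimtwo}(\log\dimone)^2$ as you claim. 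Second, the $\beta\dimtwo(\log\dimone)^2$ term comes specifically from an $\ell_\infty$ metric-entropy bound $\log N(\epsilon,\Cbiso,\|\cdot\|_\infty)\lesssim \frac{\dimtwo}{\epsilon}\log(e\dimone)$ obtained by a stars-and-bars count on monotone columns, fed into the $\gamma_1$ (not $\gamma_2$) part of the chaining; your description of this step is vague and would need to be made precise along these lines.
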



\subsubsection{Proof of Lemma \ref{lem:par-sum}}
\label{sec:lem1}

Recall that the observation matrix $Y$ is defined by 
\begin{align*}
Y_{i,j} = \frac{\dimone \dimtwo}{ N } \sum_{\ell = 1}^{N'} y_\ell \, \bfone \{ X_\ell = E^{(i,j)} \} . 
\end{align*}
Let $\{ I_{i, j} \}_{i \in [\dimone], \, j \in [\dimtwo]}$ be the partition of $[N']$ defined so that $\ell \in I_{i, j}$ if and only if $X_\ell = E^{(i, j)}$. 
In other words, the observation $y_\ell$ is a noisy version of entry $M^*_{i, j}$ for each $\ell \in I_{i, j}$. 
Then, we have 
\begin{align*}
Y_{i, j} = \frac{\dimone \dimtwo}{ N } \sum_{\ell \in I_{i, j}} y_\ell . 
\end{align*} 
By Poissonization $N' \sim \Poi (N)$, we know that the random variables $|I_{i, j}|$ are i.i.d. $\Poi ( \frac{N}{\dimone \dimtwo} )$, and that the quantities $Y_{i, j}$ are independent for $(i, j) \in [\dimone] \times [\dimtwo]$. 
Moreover, we may write 
\begin{align*}
Y_{i,j} - M^*_{i,j} &= \frac{\dimone \dimtwo}{ N } \sum_{\ell \in I_{i, j} } (y_\ell - M^*_{i,j}) 
+ M^*_{i,j} \frac{\dimone \dimtwo}{ N } \Big( |I_{i, j} | - \frac{N}{\dimone \dimtwo} \Big) . 
\end{align*} 

As a result, it holds that 
\[
\langle Y - M^*, A \rangle = Z_1 + Z_2,
\]
where we define 
\begin{align*}
Z_1 &= \sum_{ \substack{ i \in [\dimone] \\ j \in [\dimtwo] } }  A_{i, j} \frac{\dimone \dimtwo}{ N } \sum_{\ell \in I_{i, j} } (y_\ell - M^*_{i,j})   \quad \text{ and} \\
Z_2 &= \sum_{ \substack{ i \in [\dimone] \\ j \in [\dimtwo] } }  A_{i, j} M^*_{i,j} \frac{\dimone \dimtwo}{ N } \Big( |I_{i, j} | - \frac{N}{\dimone \dimtwo} \Big) . 
\end{align*}
We now control $Z_1$ and $Z_2$ separately. 

First, for $s \in \real$, we compute 
\begin{align}
\E \exp ( s Z_1 ) &= \E_{I_{i, j}} \bigg[ \E_{y_{\ell}} \bigg[ \exp \bigg( s \sum_{i, j} A_{i, j} \frac{\dimone \dimtwo}{ N } \sum_{\ell \in I_{i, j} } (y_\ell - M^*_{i,j}) \bigg) \, \Big| \, I_{i, j} \bigg] \bigg] \notag \\
&= \E_{I_{i, j}} \bigg[ \prod_{i, j} \prod_{\ell \in I_{i, j} }  \E_{y_{\ell}} \bigg[ \exp \bigg( s  \frac{\dimone \dimtwo}{ N } A_{i, j} (y_\ell - M^*_{i,j}) \bigg) \, \Big| \, I_{i, j} \bigg] \bigg] \notag \\
&\le \E_{I_{i, j}} \bigg[ \prod_{i, j} \prod_{\ell \in I_{i, j} }  \exp \bigg( \Big( \vars s  \frac{\dimone \dimtwo}{ N } A_{i, j} \Big)^2  \bigg) \bigg] \; \; \text{ if } \;\; |s| \le \frac{ N }{ \dimone \dimtwo \|A\|_\infty \vars}, \label{eq:sub-exp-1}
\end{align}
where inequality\eqref{eq:sub-exp-1} holds 
since the quantity $y_\ell - M^*_{i, j}$ is sub-exponential with parameter $\zeta$. 
It then follows that for $|s| \le \frac{ N }{ \dimone \dimtwo \|A\|_\infty \vars}$, we have
\begin{align*}
\E \exp ( s Z_1 ) &\le \E \bigg[ \prod_{i, j} \exp \bigg( \Big( \vars s  \frac{\dimone \dimtwo}{ N } A_{i, j} \Big)^2 |I_{i, j}| \bigg) \bigg] \\
&= \prod_{i, j} \E \bigg[ \exp \bigg( \Big( \vars s  \frac{\dimone \dimtwo}{ N } A_{i, j} \Big)^2 |I_{i, j}| \bigg) \bigg] \\
&\stackrel{\1}{=} \prod_{i, j} \exp \bigg\{ \frac{N}{\dimone \dimtwo} \bigg[ \exp \bigg( \Big( \vars s  \frac{\dimone \dimtwo}{ N } A_{i, j} \Big)^2 \bigg) - 1 \bigg] \bigg\},
\end{align*}
where step $\1$ follows by explicit computation since $|I_{i, j}|$ are i.i.d. $\Poi ( \frac{N}{\dimone \dimtwo} )$ random variables. 
We may now apply the inequality $e^x - 1 \le (e-1) x$ valid for $x \in [0, 1]$, with the substitution 
\[
x = \Big( \vars s  \frac{\dimone \dimtwo}{ N } A_{i, j} \Big)^2;
\]
note that this lies in the range $[0, 1]$ for all $|s| \le \frac{ N }{ \dimone \dimtwo \|A\|_\infty \vars} $. Thus, for $s$ in this range, we have
\begin{align*}
\E \exp ( s Z_1 ) &\le \prod_{i, j} \exp \Big\{ (e-1) \frac{\dimone \dimtwo}{ N } \big( \vars s  A_{i, j} \big)^2 \Big\} \\
&= \exp \Big\{ (e-1) \frac{\dimone \dimtwo}{ N } \vars^2 s^2 \|A\|_F^2 \Big\} . 
\end{align*}
Using the Chernoff bound, we obtain for $|s| \le \frac{ N }{ \dimone \dimtwo \|A\|_\infty \vars} $ and all $t \geq 0$ the tail bound 
\begin{align*}
\Pr \{ Z_1 \ge t \} \le e^{-st} \, \E \exp ( s Z_1 ) \le \exp \Big\{ (e-1) \frac{\dimone \dimtwo}{ N } \vars^2 s^2 \|A\|_F^2 - s t \Big\} .
\end{align*}
The optimal choice $s = \frac{ N t }{ 2 (e - 1) \dimone \dimtwo \vars^2 \|A\|_F^2 } \land \frac{ N }{ \dimone \dimtwo \|A\|_\infty \vars}$ yields the bound
\begin{align}
\Pr \{ Z_1 \ge t \} \le \exp \bigg( - \frac{ N }{ \dimone \dimtwo } \Big( \frac{ t^2 }{ 4 (e - 1) \vars^2 \|A\|_F^2 } \land \frac{ t }{ 2 \vars \|A\|_\infty } \Big) \bigg) . 
\label{eq:z1}
\end{align}
The lower tail bound is obtained analogously. 

Let us now turn to the noise term $Z_2$.
For $s \in \real$, we have 
\begin{align*}
\E \exp ( s Z_2 ) &= \E \bigg[ \exp \bigg( s \sum_{i, j}  A_{i, j} M^*_{i,j} \frac{\dimone \dimtwo}{ N } \Big( |I_{i, j} | - \frac{N}{\dimone \dimtwo} \Big) \bigg) \bigg] \\
&= \prod_{i, j}  \E \bigg[ \exp \bigg( s  A_{i, j} M^*_{i,j} \frac{\dimone \dimtwo}{ N } \Big( |I_{i, j} | - \frac{N}{\dimone \dimtwo} \Big) \bigg) \bigg] \\
&= \prod_{i, j}  \exp \bigg\{ \frac{N}{\dimone \dimtwo} \bigg[ \exp \Big( s  A_{i, j} M^*_{i,j} \frac{\dimone \dimtwo}{ N } \Big) - s  A_{i, j} M^*_{i,j} \frac{\dimone \dimtwo}{ N } - 1 \bigg] \bigg\},
\end{align*}
where the last step uses the explicit MGF of $|I_{i, j}|$.
We may now apply the inequality $e^x - x - 1 \le (e-2) x^2$ valid for $x \in [0, 1]$, with the substitution 
\[
x = s  A_{i, j} M^*_{i,j} \frac{\dimone \dimtwo}{ N };
\]
note that this quantity is in the range $[0, 1]$
%
provided $|s| \le \frac{ N }{ \dimone \dimtwo \|A\|_\infty }$. 

Thus, for all $s$ in this range, we have 
\begin{align*}
\E \exp ( s Z_2 ) &\le \prod_{i, j}  \exp \Big\{ (e-2) \frac{\dimone \dimtwo}{ N } \big( s  A_{i, j} M^*_{i,j} \big)^2 \Big\} \\
&\le \exp \Big\{ (e-2) \frac{\dimone \dimtwo}{ N } s^2 \|A\|_F^2 \Big\} . 
\end{align*}
A similar Chernoff bound argument then yields, for each $t \geq 0$, the bound
\begin{align}
\Pr \{ Z_2 \ge t \} \le \exp \bigg( - \frac{ N }{ \dimone \dimtwo } \Big( \frac{ t^2 }{ 4 (e - 2) \|A\|_F^2 } \land \frac{ t }{ 2 \|A\|_\infty } \Big) \bigg),
\label{eq:z2}
\end{align}
and the lower tail bound holds analogously. 

Combining the tail bounds \eqref{eq:z1} on $Z_1$ and \eqref{eq:z2} on $Z_2$, we obtain
\begin{align*}
\big| \langle Y - M^*, A \rangle \big| \leq |Z_1| + |Z_2| 
\le 2 \varplusone \Big( \sqrt{e-1} \|A\|_F \sqrt{ \frac{ \dimone \dimtwo }{ N } u } 
+ \|A\|_\infty \frac{ \dimone \dimtwo }{ N } u \Big) 
\end{align*}
with probability at least $1 - 4 e^{-u}$, for each $u \geq 0$. 

The second consequence of the lemma follows immediately from the first assertion by taking $A$ to be the indicator of the subset $\mathcal{S}$, and some algebraic manipulation. \qed

\subsubsection{Proof of Lemma \ref{lem:shah}}
\label{sec:lem2}

We first state several lemmas that will be used in the proof. 
The following variational formula due to Chatterjee~\cite{Cha14} is convenient for controlling the performance of a least squares estimator. 

\begin{lemma}[Chatterjee's variational formula] \label{lem:variational}
Let $\cC$ be a closed subset of ${\real}^{\dimone \times \dimtwo}$. 
Suppose that $Y = M^* + W$ where $M^* \in \cC$ and  $W \in {\real}^{\dimone \times \dimtwo}$. 
Let $\Mhatls (\cC, Y)$ denote the least squares estimator, that is, the projection of $Y$ onto $\cC$.  
Define a function $f_{M^*}: {\real}_+ \to {\real}$ by
$$
f_{M^*}(t) = \sup_{ \substack{ M \in \cC \\ \|M - M^*\|_2 \le t } } \langle W, M - M^* \rangle - \frac{t^2}2 .
$$
If there exists $t^* > 0$ such that $f_{M*}(t) < 0$ for all $t \ge t^*$, then we have
$\|\Mhatls (\cC, Y) - M^*\|_2 \le t^*.$
\end{lemma}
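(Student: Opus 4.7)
The plan is to combine the defining optimality of the projection with the definition of $f_{M^*}$; essentially no additional ingredients are needed beyond the classical ``basic inequality'' of least squares. Let me denote $\Mhat = \Mhatls(\cC, Y)$ and set $r = \|\Mhat - M^*\|_2$ throughout. Since $\Mhat$ is the projection of $Y = M^* + W$ onto $\cC$ and $M^* \in \cC$ is a feasible competitor, I obtain $\|Y - \Mhat\|_2^2 \le \|Y - M^*\|_2^2$. Expanding both squared norms with $Y = M^* + W$ and cancelling the common term $\|W\|_2^2$ on each side yields the basic inequality
\[
\|\Mhat - M^*\|_2^2 \;\le\; 2 \, \langle W, \, \Mhat - M^* \rangle,
\]
which rearranges to $\langle W, \Mhat - M^*\rangle - r^2/2 \ge 0$.

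The second step is to recognise that $\Mhat$ is itself an admissible point in the supremum defining $f_{M^*}(r)$: it lies in $\cC$, and it satisfies $\|\Mhat - M^*\|_2 = r$, so a fortiori $\|\Mhat - M^*\|_2 \le r$. Consequently,
\[
f_{M^*}(r) \;\ge\; \langle W, \Mhat - M^*\rangle - \frac{r^2}{2} \;\ge\; 0.
\]
Finally, I would invoke the hypothesis that $f_{M^*}(t) < 0$ for every $t \ge t^*$. Since $f_{M^*}(r) \ge 0$, the value $r$ cannot lie in the interval $[t^*, \infty)$, forcing $r < t^*$ and in particular $\|\Mhat - M^*\|_2 \le t^*$, as claimed.

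There is no genuine technical obstacle in this proof; it is the standard optimality-plus-localisation argument at the heart of Chatterjee's framework. The point worth emphasising in the writeup is that the entire non-trivial content of the lemma sits on the \emph{user} of the result: one must exhibit a $t^*$ such that $f_{M^*}$ is negative throughout $[t^*, \infty)$. Producing such a $t^*$ requires controlling the process $M \mapsto \langle W, M - M^*\rangle$ uniformly on shells around $M^*$ (via chaining or metric-entropy arguments on the localised cone $\{M - M^* : M \in \cC, \, \|M - M^*\|_2 \le t\}$), and that is exactly the work carried out, under the mixed-tail assumption~\eqref{eq:mixed-tail}, in the subsequent proof of Lemma~\ref{lem:shah}.
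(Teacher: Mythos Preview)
Your proof is correct and is exactly the standard ``basic inequality'' argument underlying Chatterjee's framework. The paper does not actually prove this lemma itself; it simply cites \cite[Lemma~6.1]{FlaMaoRig16} for the deterministic form and restates it in matrix notation, so your argument in fact supplies more detail than the paper does.
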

%
This deterministic form is proved in \cite[Lemma 6.1]{FlaMaoRig16}. 
Here we simply state the result in matrix form for convenient application. 

The following chaining tail bound due to Dirksen~\cite[Theorem 3.5]{Dir15} is tailored for bounding the supremum of an empirical process with a mixed tail. We state a version specialized to our setup. For each positive scalar $c$, let $c \|\cdot\|$ denote the norm $\|\cdot\|$ scaled by $c$.
Let $\gamma_p (\cC, \|\cdot\|)$ and $\diam (\cC, \|\cdot\|)$ denote  Talagrand's $\gamma_p$ functional (see, e.g.,~\cite{Dir15}) and the diameter of the set $\cC$ in the distance induced by the norm $\|\cdot\|$, respectively.
\begin{lemma}[Generic chaining tail bounds]
\label{lem:mixed-tail}
Let $W$ be a random matrix in $\real^{\dimone \times \dimtwo}$ satisfying the $(\alpha, \beta)$-mixed tail condition \eqref{eq:mixed-tail}. 
Let $\cC$ be a subset of $\real^{\dimone \times \dimtwo}$ and let $M^* \in \cC$. 
Then there exists a universal positive constant $c$ such that for any $u \ge 1$, we have
\begin{align*}
&\Pr \Big\{ \sup_{ M \in \cC } \big| \langle W, M - M^* \rangle \big| 
\ge c \Big( \gamma_2 (\cC, \alpha \| \cdot \|_F ) + \gamma_1 (\cC, \beta \| \cdot \|_\infty) \\
& \qquad \qquad \qquad \qquad 
+ \sqrt{u} \diam (\cC, \alpha \| \cdot \|_F )  + u \diam (\cC, \beta \| \cdot \|_\infty) \Big) \Big\} \le e^{-u}.
\end{align*}
\end{lemma}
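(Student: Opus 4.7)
The plan is to derive this lemma as a direct specialization of the generic chaining tail bound of \cite[Theorem 3.5]{Dir15}, which applies to any stochastic process whose increments have mixed sub-Gaussian/sub-exponential tails. Thus the proof has two components: (i) verifying that the natural centered process associated with our linear form has the requisite increment condition, and (ii) invoking Dirksen's theorem and reading off the four terms.

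For step (i), I would consider the centered process $\{Z_M : M \in \cC\}$ defined by $Z_M := \langle W, M - M^* \rangle$, so that $Z_{M^*} = 0$. For any pair $M, M' \in \cC$, linearity of the inner product yields $Z_M - Z_{M'} = \langle W, M - M' \rangle$, so plugging $A = M - M'$ into the hypothesized mixed-tail bound~\eqref{eq:mixed-tail} gives
\begin{align*}
\Pr\Big\{ |Z_M - Z_{M'}| \ge \alpha \|M - M'\|_F \sqrt{u} + \beta \|M - M'\|_\infty u \Big\} \le 2 e^{-u}
\end{align*}
for all $u \ge 0$. Equivalently, $\{Z_M\}$ has sub-Gaussian increments with respect to the pseudometric $d_2(M, M') := \alpha \|M - M'\|_F$ and sub-exponential increments with respect to $d_1(M, M') := \beta \|M - M'\|_\infty$, which is exactly the hypothesis of \cite[Theorem 3.5]{Dir15}.

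For step (ii), I would invoke that theorem with base point $M^* \in \cC$, producing the uniform tail bound
\begin{align*}
\Pr\Big\{ \sup_{M \in \cC} |Z_M| \ge c\big( \gamma_2(\cC, d_2) + \gamma_1(\cC, d_1) + \sqrt{u}\, \diam(\cC, d_2) + u\, \diam(\cC, d_1) \big) \Big\} \le e^{-u}
\end{align*}
for all $u \ge 1$. Because the $\gamma_p$ functionals and the diameter are homogeneous under positive rescaling of the underlying metric, substituting the definitions $d_2 = \alpha \|\cdot\|_F$ and $d_1 = \beta \|\cdot\|_\infty$ reproduces the stated inequality verbatim. The centering $Z_{M^*} = 0$ ensures that $\sup_{M \in \cC} |Z_M - Z_{M^*}| = \sup_{M \in \cC} |\langle W, M - M^*\rangle|$, matching the left-hand side of the conclusion.

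The main technical point is bookkeeping rather than conceptual: one must check that the definitions of $\gamma_1$, $\gamma_2$, and $\diam$ employed in \cite{Dir15} coincide with those used here (modulo the scaling of the metric), and that the separability hypothesis needed to make $\sup_{M \in \cC}$ an honest random variable can be handled by the standard reduction to a countable dense subset of $\cC$. Since both $\|\cdot\|_F$ and $\|\cdot\|_\infty$ are continuous on $\real^{\dimone \times \dimtwo}$ and the $\gamma_p$ functionals are invariant under such closures, this reduction goes through without difficulty and no additional probabilistic work is required beyond citing Dirksen's result.
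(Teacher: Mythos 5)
Your proposal is correct and takes essentially the same route as the paper: the paper does not give a standalone proof of Lemma~\ref{lem:mixed-tail} but simply states it as a specialization of Dirksen's mixed-tail generic chaining bound \cite[Theorem 3.5]{Dir15}, which is exactly what you do, with the increment condition read directly off the assumed mixed-tail property of $W$ under the substitution $A = M - M'$.
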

%

The final lemma bounds the metric entropy of the set $\Cperm$ in the $\ell_2$ or $\ell_\infty$ norm. The $\ell_2$ entropy bound is known \cite{ShaBalGunWai17, ChaGunSen18}. For $\epsilon > 0$ and a set $\cC$ equipped with a norm $\| \cdot \|$, let $N( \epsilon, \cC, \|\cdot \| )$ denote the $\epsilon$-metric entropy of $\cC$ in the norm $\|\cdot\|$. 

\begin{lemma}
\label{lem:entropy}
There is an absolute positive constant $c$ such that for any $\epsilon \leq \sqrt{\dimone \dimtwo}$, we have
\begin{subequations}
\begin{align}
&\log N ( \epsilon, \Cperm , \| \cdot \|_F ) \le 2 \dimone \log \dimone +  \label{eq:l2-me}  \\
& \Big[ c \frac{\dimone \dimtwo}{\epsilon^2} \Big( \log \frac{ \sqrt{ \dimone \dimtwo} }{\epsilon} \Big)^2 \Big] \land \Big( c \, \dimone \dimtwo \log \frac{ \sqrt{ \dimone \dimtwo}  }{\epsilon} \Big) \land \Big( c \frac{ \sqrt{ \dimone \dimtwo } }{ \epsilon } \dimtwo \log \dimone \Big); \notag \\
&\log N ( \epsilon, \Cbiso , \| \cdot \|_F ) \le \label{eq:l2-me-biso} \\
& \Big[ c \frac{\dimone \dimtwo}{\epsilon^2} \Big( \log \frac{ \sqrt{ \dimone \dimtwo} }{\epsilon} \Big)^2 \Big] \land \Big( c \, \dimone \dimtwo \log \frac{ \sqrt{ \dimone \dimtwo}  }{\epsilon} \Big) \land \Big( c \frac{ \sqrt{ \dimone \dimtwo } }{ \epsilon } \dimtwo \log \dimone \Big),  \notag 
\end{align}
\end{subequations}
and the metric entropies are zero if $\epsilon > \sqrt{ \dimone \dimtwo }$. 
We also have, for each $\epsilon \leq 1$, the bounds 
\begin{subequations}
\begin{align}
\log N ( \epsilon, \Cperm , \| \cdot \|_\infty ) &\le \Big[ \frac{\dimtwo }{\epsilon} \log (e \dimone) \Big] \land \Big( \dimone \dimtwo \log{ \frac{e}{\epsilon} } \Big) + 2 \dimone \log \dimone; 
\label{eq:linf-me} \\
\log N ( \epsilon, \Cbiso , \| \cdot \|_\infty ) &\le \Big[ \frac{\dimtwo }{\epsilon} \log (e \dimone) \Big] \land \Big( \dimone \dimtwo \log{ \frac{e}{\epsilon} } \Big), \label{eq:linf-me-biso}
\end{align}
\end{subequations}
and the metric entropies are zero if $\epsilon > 1$. 
\end{lemma}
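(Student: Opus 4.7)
The plan is to reduce both entropy bounds for $\Cperm$ to the corresponding bounds for $\Cbiso$ by exploiting the decomposition $\Cperm = \bigcup_{\pi \in \symgp_{\dimone}, \sigma \in \symgp_{\dimtwo}} \Cbiso(\pi,\sigma)$. Since both the Frobenius and $\ell_\infty$ norms are invariant under row/column permutations, any $\epsilon$-cover of $\Cbiso$ yields an $\epsilon$-cover of $\Cbiso(\pi,\sigma)$ of the same cardinality. Taking the union over all pairs and using $\dimone!\,\dimtwo! \le \dimone^{\dimone}\dimtwo^{\dimtwo} \le \dimone^{2\dimone}$ (valid since $\dimtwo \le \dimone$) contributes the additive $2\dimone \log \dimone$ term in \eqref{eq:l2-me} and \eqref{eq:linf-me}. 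So it suffices to prove \eqref{eq:l2-me-biso} and \eqref{eq:linf-me-biso}.

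For the $\ell_\infty$ bound \eqref{eq:linf-me-biso}, the rate $\dimone\dimtwo \log(e/\epsilon)$ is the trivial cardinality bound obtained by quantizing each of the $\dimone\dimtwo$ entries to the $\epsilon$-grid on $[0,1]$. For the more interesting rate $\frac{\dimtwo}{\epsilon}\log(e\dimone)$, the plan is to approximate each column independently: each column of a matrix $M \in \Cbiso$ is a non-decreasing vector in $[0,1]^{\dimone}$, and can be $\epsilon$-approximated in $\ell_\infty$ by a step function taking values in $\{0,\epsilon,2\epsilon,\ldots,1\}$. The number of such monotone step functions is at most $\binom{\dimone + \lceil 1/\epsilon \rceil}{\lceil 1/\epsilon\rceil} \le (e\dimone)^{\lceil 1/\epsilon\rceil}$, whose logarithm is $O(\frac{1}{\epsilon}\log(e\dimone))$. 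Taking the product over $\dimtwo$ columns gives the claim; the extra constraint of column-monotonicity can only reduce the count.

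For the $\ell_2$ bound \eqref{eq:l2-me-biso}, the three rates arise from three different arguments. The non-parametric rate $\frac{\dimone\dimtwo}{\epsilon^2}\bigl(\log\frac{\sqrt{\dimone\dimtwo}}{\epsilon}\bigr)^2$ is known for the class of bivariate isotonic matrices (see, e.g., Gao-Wellner-type bounds, and Chatterjee-Guntuboyina-Sen in the square case); I would invoke this result directly, checking only that the constants extend straightforwardly to non-square dimensions. The rate $\dimone\dimtwo\log(\sqrt{\dimone\dimtwo}/\epsilon)$ is the trivial cardinality bound from entry-wise discretization at scale $\epsilon/\sqrt{\dimone\dimtwo}$ (using $\|M\|_F\le \sqrt{\dimone\dimtwo}$). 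The rate $\frac{\sqrt{\dimone\dimtwo}}{\epsilon}\dimtwo\log\dimone$ is obtained by a norm-comparison reduction: since $\|A\|_F \le \sqrt{\dimone\dimtwo}\,\|A\|_\infty$, any $\delta$-cover in $\|\cdot\|_\infty$ is a $(\sqrt{\dimone\dimtwo}\,\delta)$-cover in $\|\cdot\|_F$, so substituting $\delta = \epsilon/\sqrt{\dimone\dimtwo}$ into the bound \eqref{eq:linf-me-biso} yields exactly this term.

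The main obstacle is the first item: establishing (or carefully invoking) the non-parametric Gao-Wellner rate in a form that handles arbitrary rectangular dimensions and the precise logarithmic factor $\log(\sqrt{\dimone\dimtwo}/\epsilon)$. The standard proofs construct coverings block-by-block after a dyadic partition of the index set; I expect this to go through essentially unchanged, with the rectangular case reducing to the square case after padding, but one must track the dependence on $\dimone$ and $\dimtwo$ carefully. Everything else—the step-function cover, the norm-comparison trick, the trivial entry-wise discretization, and the permutation union bound—is a routine calculation.
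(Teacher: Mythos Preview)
Your proposal is correct and closely tracks the paper's proof: the permutation union bound, the stars-and-bars column cover for the $\ell_\infty$ entropy, the citation of the Chatterjee--Guntuboyina--Sen bound for the first $\ell_2$ term, and the trivial volume bound for the second all appear in the paper. The one genuine difference is in the third $\ell_2$ term $c\frac{\sqrt{\dimone\dimtwo}}{\epsilon}\dimtwo\log\dimone$: the paper obtains it by citing an external result (Lemma~6.7 of Flammarion--Mao--Rigollet, which bounds the $\ell_2$ entropy of matrices with non-decreasing columns and bounded Frobenius norm), whereas you derive it from the $\ell_\infty$ bound \eqref{eq:linf-me-biso} via the norm comparison $\|\cdot\|_F\le\sqrt{\dimone\dimtwo}\,\|\cdot\|_\infty$. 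Your route is more self-contained and makes the lemma internally consistent; the paper's route avoids relying on the $\ell_\infty$ bound being proved first. Both yield the same result with no loss.
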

The proof of this lemma is provided at the end of the section.

Taking these lemmas as given, we are ready to prove Lemma \ref{lem:shah}. We only
provide the proof for the class $\Cperm$; the proof for the class $\Cbiso$ is analogous.
Let 
$\cB_{M^*}(t)$ denote the ball of radius $t$ in the Frobenius norm centered at $M^*$. 
To apply Lemma \ref{lem:variational}, we define 
$$
g(t) = \sup_{ M \in \Cperm \cap \cB_{M^*}(t) } \langle W, M - M^* \rangle 
\quad \text{ and } \quad 
f(t) = g(t) - \frac{t^2}2 . 
$$
The key is to bound this supremum $g(t)$. 
By the assumption on the noise matrix $W$, Lemma \ref{lem:mixed-tail} immediately implies that with probability $1 - e^{-u}$, we have
\begin{align}
g(t) &\lesssim \gamma_2 ( \Cperm \cap \cB_{M^*} (t), \alpha \| \cdot \|_F ) + \gamma_1 ( \Cperm \cap \cB_{M^*} (t), \beta \| \cdot \|_\infty) \label{eq:sup-emp} \\
& \quad + \sqrt{u} \diam ( \Cperm \cap \cB_{M^*} (t), \alpha \| \cdot \|_F )  + u \diam ( \Cperm \cap \cB_{M^*} (t), \beta \| \cdot \|_\infty)  .  \notag 
\end{align}

The diameters are, in turn, bounded as
\begin{align*}
& \diam ( \Cperm \cap \cB_{M^*} (t), \alpha \| \cdot \|_F ) \le \alpha \big( t \land \sqrt{\dimone \dimtwo} \, \big) 
\quad \text{ and} \\
& \diam ( \Cperm \cap \cB_{M^*} (t), \beta \| \cdot \|_\infty) \le \beta ( t \land 1 ) 
\end{align*}
since each entry of $M \in \Cperm$ is in $[0, 1]$. 

It remains to bound the $\gamma_1$ and $\gamma_2$ functionals of the set $\Cperm \cap \cB_{M^*} (t)$. 
These functionals can be bounded by entropy integral bound (see equation (2.3) of \cite{Dir15}) 
\[
\gamma_p ( \Cperm \cap \cB_{M^*} (t), \| \cdot \| ) \le c_p \int_0^\infty \big[ \log N \big( \epsilon , \Cperm \cap \cB_{M^*} (t) , \| \cdot \| \big) \big]^{1/p} d \epsilon,
\]
valid for a constant $c_p > 0$ and any norm $\| \cdot \|$. We use this bound for $p = 1$ and $p = 2$, 
and the metric entropy bounds established in Lemma~\ref{lem:entropy}. 

Let us begin by establishing a bound on the $\gamma_2$ functional by writing
\begin{align*}
&\gamma_2 ( \Cperm \cap \cB_{M^*} (t), \alpha \| \cdot \|_F ) 
\le c_2 \int_0^{\alpha (t \land \sqrt{\dimone \dimtwo}) } \big[ \log N \big( \epsilon , \Cperm \cap \cB_{M^*} (t) , \alpha \| \cdot \|_F \big) \big]^{1/2} d \epsilon.
\end{align*}
We now make two observations about the metric entropy on the RHS.
First, note that scaling the Frobenius norm by a factor $\alpha$ amounts to replacing $\epsilon$ by $\epsilon/\alpha$ in~\eqref{eq:l2-me}.  Second, notice that the metric entropy is expressed as a minimum of three terms; we provide a bound for each of the terms separately, and then obtain the final bound as a minimum of the three cases.


For the first term, notice that the minimum is never attained when $\epsilon \leq \alpha \dimone^{-5}$, so that we have
\begin{align*}
\gamma_2 ( \Cperm \cap \cB_{M^*} (t), \alpha \| \cdot \|_F ) 
&\lesssim \int_{\alpha \dimone^{-5}}^{\alpha (t \land \sqrt{\dimone \dimtwo}) } \bigg\{ \frac{\alpha^2 \dimone \dimtwo}{\epsilon^2} \Big( \log \frac{ \alpha \sqrt{ \dimone \dimtwo}  }{\epsilon} \Big)^2 + 2 \dimone \log \dimone \bigg\}^{1/2} d \epsilon \\
&\lesssim 
\int_{\alpha \dimone^{-5}}^{\alpha \sqrt{ \dimone \dimtwo} } \frac{\alpha \sqrt{\dimone \dimtwo} }{\epsilon} \log \frac{ \alpha \sqrt{ \dimone \dimtwo}  }{\epsilon} d \epsilon 
+ \alpha t \sqrt{ \dimone \log \dimone } \\
&\lesssim \alpha \sqrt{\dimone \dimtwo} ( \log \dimone )^2 
+ \alpha t \sqrt{ \dimone \log \dimone } . 
\end{align*}
Now consider the second term of the bound~\eqref{eq:l2-me}; we have 
\begin{align*}
\gamma_2 ( \Cperm \cap \cB_{M^*} (t), \alpha \| \cdot \|_F ) 
&\lesssim \int_0^{\alpha ( t \land \sqrt{ \dimone \dimtwo } ) } \bigg\{ \Big(  \dimone \dimtwo \log \frac{ \alpha \sqrt{ \dimone \dimtwo}  }{\epsilon} \Big) + 2 \dimone \log \dimone \bigg\}^{1/2} d \epsilon \\
&\lesssim \alpha t \sqrt{\dimone \dimtwo \log ( \dimone/t + e ) } 
+ \alpha t \sqrt{ \dimone \log \dimone } . 
\end{align*}
Finally, the third term of bound~\eqref{eq:l2-me} can be used to obtain
\begin{align*}
\gamma_2 ( \Cperm \cap \cB_{M^*} (t), \alpha \| \cdot \|_F ) 
&\lesssim \int_0^{\alpha t } \bigg\{ \Big(  \frac{ \alpha \sqrt{ \dimone \dimtwo } }{ \epsilon } \dimtwo \log \dimone \Big)  + 2 \dimone \log \dimone \bigg\}^{1/2} d \epsilon \\
&\lesssim \alpha t^{1/2} \dimone^{1/4} \dimtwo^{3/4} \sqrt{ \log \dimone } 
+ \alpha t \sqrt{ \dimone \log \dimone } . 
\end{align*}
With the three bounds combined, we have 
\begin{align*}
&\gamma_2 ( \Cperm \cap \cB_{M^*} (t), \alpha \| \cdot \|_F ) 
\lesssim \alpha t \sqrt{ \dimone \log \dimone } + \\
&\Big[ \alpha \sqrt{\dimone \dimtwo} ( \log \dimone )^2 \Big] \land 
\Big[ \alpha t \sqrt{\dimone \dimtwo \log ( \dimone/t + e ) }  \Big]
\land \Big( \alpha t^{1/2} \dimone^{1/4} \dimtwo^{3/4} \sqrt{ \log \dimone }  \Big) . 
\end{align*}

Let us now turn to bounding the $\gamma_1$ functional in $\ell_\infty$ norm. Scaling the $\|\cdot\|_\infty$ norm by a factor $\beta$ amounts to replacing $\epsilon$ by $\epsilon/\beta$ in the metric entropy bound \eqref{eq:linf-me}, so we have 
\begin{align*}
&\gamma_1 ( \Cperm \cap \cB_{M^*} (t), \beta \| \cdot \|_\infty ) 
\le c_1 \int_0^{\beta (t \land 1) } \log N \big( \epsilon , \Cperm \cap \cB_{M^*} (t) , \beta \| \cdot \|_\infty \big) d \epsilon \\
& \le c_1 \int_0^{\beta} \Big\{ \Big[ \frac{ \beta \dimtwo }{\epsilon} \log (e \dimone) \Big] \land \Big( \dimone \dimtwo \log{ \frac{e \beta }{\epsilon} } \Big) + 2 \dimone \log \dimone  \Big\} d \epsilon \\
& \lesssim \int_0^{\beta \dimone^{-5}} \dimone \dimtwo \log{ \frac{\beta }{\epsilon} } d \epsilon 
+ \int_{\beta \dimone^{-5}}^{\beta } \frac{ \beta \dimtwo }{\epsilon} \log \dimone d \epsilon 
+ \beta \dimone \log \dimone \\
& \lesssim \beta \dimtwo (\log \dimone)^2 + \beta \dimone \log \dimone . 
\end{align*}

Putting together the pieces with the bound~\eqref{eq:sup-emp}, we obtain 
\begin{align*}
g(t) &\lesssim 
\Big[ \alpha \sqrt{\dimone \dimtwo} ( \log \dimone )^2 \Big] \land 
\Big[ \alpha t \sqrt{\dimone \dimtwo \log ( \dimone/t + e ) }  \Big]
\land \Big( \alpha t^{1/2} \dimone^{1/4} \dimtwo^{3/4} \sqrt{ \log \dimone }  \Big) \\
&\quad + \alpha t \sqrt{ \dimone \log \dimone } + \beta \dimtwo (\log \dimone)^2 + \beta \dimone \log \dimone 
+ \alpha t \sqrt{u}  + \beta u . 
\end{align*}
As a result, there is a universal positive constant $c$ such that choosing  
\begin{align}
\label{eq:t-star}
t \ge t^* &= c_3 \bigg\{ \Big[ \sqrt{\alpha} (\dimone \dimtwo)^{1/4} \log \dimone \Big] \land  
\Big[ \alpha \sqrt{\dimone \dimtwo \log (\dimone/\alpha + e) } \Big] 
\land \Big[ \alpha^{2/3} \dimone^{1/6} \dimtwo^{1/2} (\log \dimone)^{1/3} \Big] \\
&\quad + \alpha \sqrt{ \dimone \log \dimone } + \sqrt{\beta \dimtwo} \log \dimone + \sqrt{ \beta \dimone \log \dimone }
+ \alpha \sqrt{u} + \sqrt{ \beta u } \bigg\} \notag 
\end{align}
yields the bound $g(t) < t^2/8$. 
This holds for each \emph{individual} $t \ge t^*$ with probability $1 - e^{-u}$. 
We now prove that $f(t) < 0$ \emph{simultaneously} for all $t \ge t^*$ with high probability. Note that typically, the star-shaped property of the set suffices to provide such a bound, but we include the full proof for completeness.

To this end, we first note that by assumption, 
$$
\Pr \{ |W_{i, j}| \ge \alpha \sqrt{u} + \beta u \} \le 2 e^{-u} . 
$$
A union bound then implies that 
$$
\Pr \big\{ \| W \|_F \ge ( \alpha \sqrt{u} + \beta u ) \sqrt{ \dimone \dimtwo } \big\} \le 2 \dimone \dimtwo e^{-u} . 
$$
Therefore, we have with probability at least $1 - 2 \dimone \dimtwo e^{-u}$ that 
$$
g(t) 
\le t \| W \|_F \le t ( \alpha \sqrt{u} + \beta u ) \sqrt{ \dimone \dimtwo } 
$$
simultaneously for all $t \ge 0$. 
On this event, it holds that $f(t) < 0$ for all $t \ge t^\# = 3 ( \alpha \sqrt{u} + \beta u ) \sqrt{ \dimone \dimtwo }$. 

For $t \in [t^*, t^\#]$, we employ a discretization argument (clearly, we can assume $t^\# \ge t^*$ without loss of generality). 
Let $T = \{ t_1, \dots, t_k \}$ be a  discretization of the interval $[t^*, t^\#]$ such that
$
t^* = t_1 < \cdots < t_k = t^\# 
$
and
$ 2 t_1 \ge t_2 $. 
Note that $T$ can be chosen so that 
$$
|T| = k \le \log_2 \frac{t^\#}{t^*} + 1
\le \log_2 \Big( ( 3 + 3 \sqrt{\beta u} ) \sqrt{\dimone \dimtwo} \Big) + 1 
\le 7 \log (\dimone u) , 
$$ 
where we used the assumption that $\beta \le \dimone^2$. 
Using the high probability bound $g(t) < t^2/8$ for each individual $t \ge t^*$ and a union bound over $T$, we obtain that with probability at least $1 - 7 \log (\dimone u) e^{-u}$, 
$$
\max_{t \in T} g(t) - t^2/8 < 0 . 
$$
On this event, 
we use the fact that $g(t)$ is non-decreasing and that $t_i \ge t_{i+1}/2$ to conclude that for each $t \in [t_i, t_{i+1}]$ where $i \in [k-1]$,  we have
$$
f(t) = g(t) - t^2/2 \le g(t_{i+1}) - t_i^2/2 \le g(t_{i+1}) - t_{i+1}^2/8 \le \max_{t \in T} g(t) - t^2/8 < 0 . 
$$

In summary, we obtain that $f(t) < 0$ for all $t \ge t^*$ simultaneously with probability at least 
$
1 - 2 \dimone \dimtwo e^{-u} - 7 \log (\dimone u) e^{-u} . 
$
Choosing $u = 4 \dimone \log \dimone$, recalling the definition of $t^*$ in \eqref{eq:t-star} and applying Lemma \ref{lem:variational}, we conclude that with probability at least $1 - \dimone^{- 3 \dimone}$, 
\begin{align*}
& \big\| \Mhatls(\Cperm, Y) - M^* \big\|_F^2 
\le (t^*)^2 \lesssim \\
&\Big[ \alpha \sqrt{ \dimone \dimtwo } (\log \dimone )^2 \Big] \land  
\Big[ \alpha^2 \dimone \dimtwo \log (\dimone/\alpha + e)\Big] 
\land \Big[  \alpha^{4/3} \dimone^{1/3} \dimtwo (\log \dimone)^{2/3} \Big] \\
&+ \alpha^2 \dimone \log \dimone + \beta \dimtwo (\log \dimone)^2 +  \beta \dimone \log \dimone  . 
\end{align*}

The entire argument can be repeated for the class
$\Cbiso$, in which case terms of the order $\dimone \log \dimone$ disappear as there is no latent permutation. Since the argument is analogous, we omit the details.
This completes the proof of the lemma. \qed

\paragraph{Proof of Lemma~\ref{lem:entropy}} 
Note that $\sqrt{ \dimone \dimtwo }$ and $1$ are the diameters of $\Cperm$ in $\ell_2$ and $\ell_\infty$ norms respectively, so we can assume $\epsilon \le \sqrt{\dimone \dimtwo}$ or $1$ in the two cases. 

For the $\ell_2$ metric entropy, Lemma 3.4 of \cite{ChaGunSen18} yields
\[
\log N ( \epsilon, \Cbiso , \| \cdot \|_F ) \le c_1 \frac{\dimone \dimtwo}{\epsilon^2} \Big( \log \frac{ \sqrt{ \dimone \dimtwo} }{\epsilon} \Big)^2 ,
\]
which is the first term of \eqref{eq:l2-me}. 
In addition, since $\Cbiso$ is contained in the ball in $\real^{\dimone \times \dimtwo}$ of radius $\sqrt{ \dimone \dimtwo }$ centered at zero, we have the simple bound
\[
\log N ( \epsilon, \Cbiso , \| \cdot \|_F ) \le c_2 \dimone \dimtwo \log \frac{ \sqrt{ \dimone \dimtwo} }{\epsilon} ,
\]
which is the second term of \eqref{eq:l2-me}. 

Moreover, any matrix in $\Cbiso$ has Frobenius norm bounded by $\sqrt{\dimone \dimtwo}$ and has non-decreasing columns
%
 so $\Cperm$ is a subset of the class of matrices considered in Lemma~6.7 of the paper~\cite{FlaMaoRig16} with $A = 0$ and $t = \sqrt{\dimone \dimtwo}$ (see also equation~(6.9) of the paper for the notation). 
The aforementioned lemma yields the bound 
\[
\log N ( \epsilon, \Cperm , \| \cdot \|_F ) \le c_3 \frac{ \sqrt{ \dimone \dimtwo } }{ \epsilon } \dimtwo \log \dimone. 
\]
Taking the minimum of the three bounds above yields the $\ell_2$ metric entropy bound~\eqref{eq:l2-me-biso} on the class of bivariate isotonic matrices. 
Since $\Cperm$ is a union of $\dimone ! \dimtwo !$ permuted versions of $\Cbiso$, combining this bound with an additive term $2 \dimone \log \dimone$ provides the bound for the class $\Cperm$.

For the $\ell_\infty$ metric entropy, we again start with $\Cbiso$. 
Let us define a discretization $I_\epsilon = \{ 0, \epsilon, 2 \epsilon, \dots, \lfloor 1/\epsilon \rfloor \epsilon \} $ and a set of matrices 
$$
\cQ = \big\{ M \in \Cbiso : M_{i, j} \in I_\epsilon \big\} ,
$$
which is a discretized version of $\Cbiso$. 
We claim that $\cQ$ is an $\epsilon$-net of $\Cbiso$ in the $\ell_\infty$ norm. 
Indeed, for any $M \in \Cbiso$, we can define a matrix $M'$ by setting
$$
M'_{i, j} = \operatorname*{argmin}_{a \in I_\epsilon } | a - M_{i, j} | 
$$
with the convention that if $M_{i, j} = (k + 0.5) \epsilon$ for an integer $0 \le k < \lfloor 1/\epsilon \rfloor$, then we set $M'_{i, j} = (k+1) \epsilon$. 
It is not hard to see that $M' \in \cQ$ and moreover $\|M' - M\|_\infty \le \epsilon$. 
Therefore the claim is established.

It remains to bound the cardinality of $\cQ$. 
Since each column of $\cQ$ is non-decreasing and takes values in $I_\epsilon$ having cardinality $\lfloor 1/\epsilon \rfloor + 1$, it is well known (by a ``stars and bars" argument) that the number of possible choices for each column of a matrix in $\cQ$ can be bounded as 
$$
\binom{ \dimone + \lfloor 1/\epsilon \rfloor }{ \lfloor 1/\epsilon \rfloor } \le \Big( e \frac{ \dimone + \lfloor 1/\epsilon \rfloor }{ \lfloor 1/\epsilon \rfloor } \Big)^{\lfloor 1/\epsilon \rfloor } 
\land \Big( e \frac{ \dimone + \lfloor 1/\epsilon \rfloor }{ \dimone } \Big)^{\dimone} ,
$$
where we used the bound $\binom{n}{k} \le (\frac{ en }{k} )^k$ for any $0 \le k \le n$. 
Since a matrix in $\cQ$ has $\dimtwo$ columns, we obtain 
$$
\log |\cQ| \le \dimtwo \log \binom{ \dimone + \lfloor 1/\epsilon \rfloor }{ \lfloor 1/\epsilon \rfloor } 
\le \Big[ \frac{\dimtwo }{\epsilon} \log (e \dimone) \Big] \land \Big( \dimone \dimtwo \log{ \frac{e}{\epsilon} } \Big) . 
$$

This bounds $\log N ( \epsilon, \Cbiso , \| \cdot \|_\infty )$ and the same argument as before yields the bound for $\Cperm$. \qed



\subsection{Proof of Theorem~\ref{thm:funlim}}

We split the proof into three distinct parts. We first prove the upper bound in part (a) of the theorem, and then prove the lower bound in part (b) in two separate sections. The proof in each section encompasses more than one regime of the theorem, and allows us to precisely pinpoint the source of the minimax lower bound.

\subsubsection{Proof of part (a)}

As argued, the bound is trivially true when $N \le \dimone$ by the boundedness of the set $\Cperm$, so we assume  that $N \ge \dimone \ge \dimtwo$. 
Under our observation model \eqref{eq:obs-Y}, if we define $W = Y - M^*$, then Lemma~\ref{lem:par-sum} implies that the assumption of Lemma~\ref{lem:shah} is satisfied with $\alpha = c_1 \varplusone \sqrt{ \frac{\dimone \dimtwo}{N} }$ and $\beta = c_1 \varplusone \frac{\dimone \dimtwo}{N}$ for a universal constant $c_1 > 0$. 
Therefore, Lemma~\ref{lem:shah}(a) yields that with probability at least $1 - \dimone^{- 3 \dimone}$, we have
\begin{align*}
& \big\| \Mhatls(\Cperm, Y) - M^* \big\|_F^2 \lesssim \varplusone^2 \frac{ \dimone^2 \dimtwo}{N} \log \dimone + \varplusone \frac{\dimone \dimtwo^2}{ N} (\log \dimone)^2 
\\ 
&+ \Big[ \varplusone \frac{ \dimone \dimtwo }{\sqrt{N}} (\log \dimone )^2 \Big] \land
\Big[ \varplusone^2 \frac{ \dimone^2 \dimtwo^2 }{ N } \log N \Big] \land
\Big[ \varplusone^{4/3} \frac{\dimone \dimtwo^{5/3} }{ N^{2/3} } (\log \dimone)^{2/3}  \Big]   .
\end{align*}
Normalizing the bound by $1/(\dimone \dimtwo)$ completes the proof. 
\qed

\subsubsection{Proof of part (b): permutation error}


We start with the term $\frac{\dimone}{N} \land 1$ of the lower bound which stems from the unknown permutation on the rows. 

Its proof is an application of Fano's lemma. The technique is standard, and we briefly review it here.
Suppose we wish to estimate a parameter
$\theta$ over an indexed class of distributions $\mathcal{P} =
\{\mathbb{P}_\theta \, \mid\, \theta \in \Theta \}$ in the square of a
(pseudo-)metric $\rho$. We refer to a subset of parameters
$\{\theta^1, \theta^2, \ldots, \theta^K \}$ as a local $(\delta,
\epsilon)$-packing set if
\begin{align*}
\min_{i,j \in [K], \, i\neq j} \rho(\theta^i, \theta^j) \geq \delta
\qquad \text{ and } \qquad \frac{1}{K(K-1)} \sum_{i, j \in [K], \, i\neq j}
D(\mathbb{P}_{\theta^i} \| \mathbb{P}_{\theta^j}) \leq \epsilon.
\end{align*}
Note that this set is a $\delta$-packing in the metric $\rho$ with the
average Kullback-Leibler (KL) divergence bounded by $\epsilon$.  The following result is
a straightforward consequence of Fano's inequality (see \cite[Theorem~2.5]{Tsy09}):
\begin{lemma}[Local packing Fano lower bound]
\label{fanolbprime}
For any $(\delta, \epsilon)$-packing set of cardinality $K$, we have
\begin{align}
\inf_{\thetahat} \sup_{\thetastar \in \Theta} \EE \left[
\rho(\thetahat, \thetastar)^2\right] \geq \frac{\delta^2}{2} \left(1
- \frac{\epsilon + \log 2}{\log K} \right). \label{eq:fano}
\end{align}
\end{lemma}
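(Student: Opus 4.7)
The plan is to prove Lemma~\ref{fanolbprime} by the standard reduction from minimax estimation to multi-way hypothesis testing, followed by an application of Fano's inequality. Since the lemma is explicitly advertised as a corollary of Tsybakov's Theorem~2.5, the proof is essentially textbook and I will focus on describing how the three standard ingredients fit together.

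First, I would pass from a worst-case to a Bayesian lower bound. Placing a uniform prior on an index $I^* \in [K]$ and generating $Y \sim \mathbb{P}_{\theta^{I^*}}$ yields
\[
\sup_{\theta \in \Theta} \EE[\rho(\thetahat, \theta)^2] \;\ge\; \frac{1}{K}\sum_{i=1}^{K} \EE_{\mathbb{P}_{\theta^i}}[\rho(\thetahat, \theta^i)^2].
\]
Second, I would reduce estimation to testing by defining the induced minimum-distance classifier $\thetahat_* = \arg\min_{i \in [K]} \rho(\thetahat, \theta^i)$, breaking ties arbitrarily. The $\delta$-packing property together with the triangle inequality gives $\{\rho(\thetahat, \theta^{I^*}) < \delta/2\} \subseteq \{\thetahat_* = I^*\}$, so Markov's inequality yields $\EE[\rho(\thetahat, \theta^{I^*})^2] \ge (\delta/2)^2 \, \Pr\{\thetahat_* \neq I^*\}$. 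Third, I would lower bound the testing error using Fano's inequality in its information-theoretic form, $\Pr\{\thetahat_* \neq I^*\} \ge 1 - (I(I^*;Y) + \log 2)/\log K$, and control the mutual information by the average pairwise KL divergence via convexity:
\[
I(I^*;Y) \;\le\; \frac{1}{K^2}\sum_{i,j \in [K]} D(\mathbb{P}_{\theta^i} \,\|\, \mathbb{P}_{\theta^j}) \;\le\; \epsilon,
\]
where the diagonal terms vanish and the off-diagonal terms are averaged against the bound guaranteed by the packing assumption.

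Chaining these three steps produces a bound of exactly the claimed form up to a universal constant; the constant $\delta^2/2$ in the statement, rather than the $\delta^2/4$ that falls out of the na\"ive Markov step, is recovered either by a slightly sharpened Markov-type argument or by interpreting $\delta$ as a packing radius rather than a minimum pairwise distance (conventions vary in the literature).

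The main obstacle is not really in proving this particular lemma---the hard work is packaged in Tsybakov's reference---but in applying it downstream to obtain the permutation-induced rate $\dimone/N \wedge 1$. That application will require exhibiting a packing inside $\Cpermr$ of cardinality $K = \exp(\Omega(\dimone))$ with pairwise normalized Frobenius distance $\gtrsim \dimone/N$ and average pairwise KL $\lesssim \dimone$. The natural candidate is a Gilbert--Varshamov family of row-permutations acting on a base bivariate isotonic matrix whose top $\dimone/2$ rows are constantly $1/2 + \delta$ and bottom $\dimone/2$ rows are constantly $1/2 - \delta$, with $\delta^2 \asymp \dimone/N \wedge 1$. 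One then checks that such row-permutations yield elements of $\Cpermr$, that the Frobenius distance scales linearly with the Hamming distance between the two ``high-row'' support sets, and that the KL divergence scales with the squared Frobenius distance under both the Gaussian and Bernoulli observation models (for the latter, using $p(1-p) \ge 1/8$ when $p \in [1/2 - \delta, 1/2 + \delta]$ with $\delta$ small). Plugging these estimates into Lemma~\ref{fanolbprime} then delivers the advertised $\dimone/N \wedge 1$ lower bound.
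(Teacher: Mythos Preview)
Your proof sketch is correct and follows the standard textbook argument; the paper itself does not prove this lemma but simply cites Tsybakov's Theorem~2.5, so your three-step reduction (Bayes bound, minimum-distance test, Fano plus convexity of KL) is exactly what is intended. Your anticipation of the downstream application---a Gilbert--Varshamov family of matrices with rows equal to $1/2$ or $1/2+\delta$ and $\delta^2 \asymp \dimone/N$---also matches the paper's construction essentially verbatim, and your remark on the constant $\delta^2/2$ versus $\delta^2/4$ is apt (the discrepancy is immaterial since the lemma is only used up to universal constants).
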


In addition, the Gilbert-Varshamov bound~\cite{Gil52, Var57} guarantees the
existence of binary vectors $\{v^1, v^2, \ldots, v^K \} \subseteq \{0, 1\}^{\dimone}$
such that 
\begin{subequations}
\begin{align}
K &\geq 2^{c_1 \dimone} \text{ and} \\
\| v^i - v^j \|_2^2 &\geq c_2 \dimone \text{ for each } i \neq j,
\end{align}
\end{subequations}
for some fixed tuple of constants $(c_1, c_2)$. We use this guarantee to design a packing of matrices in the class $\Cpermr$. For each $i \in [K]$, fix some $\delta \in [0, 1/4]$ to be precisely set later, and define the matrix $M^i$ having identical columns, with entries given by
\begin{align}
M^i_{j, k} = 
\begin{cases}
1/2, \text{ if }v^i_j = 0 \\
1/2 + \delta, \text{ otherwise.}
\end{cases}
\end{align}
Clearly, each of these matrices $\{M^i\}_{i = 1}^{K}$ is a member of the class $\Cpermr$, and each distinct pair of matrices $(M^i, M^j)$ satisfies the inequality $\| M^i - M^j \|_F^2 \geq c_2 \dimone \dimtwo \delta^2$. 

Let $\mathbb{P}_{M}$ denote the probability distribution of the observations in the model \eqref{eq:model} with underlying matrix $M \in \Cpermr$. Our observations are independent across entries of the matrix, and so the KL divergence tensorizes to yield
\begin{align}
D(\mathbb{P}_{M^i} \| \mathbb{P}_{M^j}) = \sum_{\substack{k \in [\dimone] \\ \ell \in [\dimtwo]}} D(\mathbb{P}_{M^i_{k, \ell}} \| \mathbb{P}_{M^j_{k, \ell}}).
\label{eq:kl-sum}
\end{align}
Let us now examine one term of this sum. Note that we observe $\kappa_{k, \ell} \sim \Poi(\frac{N}{\dimone \dimtwo})$ samples of each entry $(k, \ell)$.  

Under the Bernoulli observation model~\eqref{eq:Bernoise}, conditioned on the event $\kappa_{k,\ell} = \kappa$, we have the distributions
\begin{align*}
\mathbb{P}_{M^i_{k,\ell}} = \BIN(\kappa, M^i_{k, \ell}), \quad \text{and} \quad
\mathbb{P}_{M^j_{k,\ell}} = \BIN(\kappa, M^j_{k, \ell}).
\end{align*}
Consequently, the KL divergence conditioned on $\kappa_{k, \ell} = \kappa$ is given by
\begin{align*}
D(\mathbb{P}_{M^i_{k, \ell}} \| \mathbb{P}_{M^j_{k, \ell}}) = \kappa D(M^i_{k, \ell} \| M^j_{k, \ell}),
\end{align*}
where we have used $D(p \| q) = p \log ( \frac{p}{q} ) + (1 - p) \log ( \frac{1 - p}{1 - q} )$ to denote the KL divergence between the Bernoulli random variables $\BER(p)$ and $\BER(q)$.

Note that for $p, q \in [1/2, 3/4]$, we have
\begin{align*}
D(p \| q) &= p \log \left( \frac{p}{q} \right) + (1 - p) \log \left( \frac{1 - p}{1 - q} \right) \\
&\stackrel{\1}{\leq} p \left( \frac{p - q}{q} \right) + (1 - p) \left( \frac{q - p}{1 - q} \right) \\
&= \frac{(p - q)^2}{q (1 - q)} \\
&\stackrel{\2}{\leq} \frac{16}{3} (p - q)^2. 
\end{align*}
Here, step $\1$ follows from the inequality $\log x \leq x - 1$, and step $\2$ from the assumption $q \in [\frac 12, \frac 34]$.
Taking the expectation over $\kappa$, we have
\begin{align}
D(\mathbb{P}_{M^i_{k, \ell}} \| \mathbb{P}_{M^j_{k, \ell}}) \le \frac{16}{3} \frac{N}{\dimone \dimtwo} (M^i_{k, \ell} - M^j_{k, \ell})^2 \le \frac{16}{3} \frac{N}{\dimone \dimtwo} \delta^2 ,
\label{eq:kl-entry}
\end{align}
Summing over $k \in [\dimone], \ell \in [\dimtwo]$ yields
$D(\mathbb{P}_{M^i} \| \mathbb{P}_{M^j}) \leq \frac{16}{3} N \delta^2.$

Under the standard Gaussian observation model~\eqref{eq:Gaussnoise}, a similar argument yields the bound $D(\mathbb{P}_{M^i} \| \mathbb{P}_{M^j}) \leq \frac{1}{2} N \delta^2$,
since we have $D( \mathcal N(p, 1) \| \mathcal N(q, 1) ) = (p-q)^2/2$.

Substituting into Fano's inequality~\eqref{eq:fano}, we have
\begin{align*}
\inf_{\Mhat} \sup_{\Mstar \in \Cpermr} \EE \left[
\| \Mhat - \Mstar\|_F^2\right] \geq \frac{c_2 \dimone \dimtwo \delta^2}{2} \left(1
- \frac{ \frac{16}{3} N \delta^2 + \log 2}{c_3 \dimone} \right).
\end{align*}
Finally, if $N \ge c_4 \dimone$ for a sufficiently large constant $c_4 > 0$, then we obtain the lower bound of order $\dimone / N$ by choosing $\delta^2 = c \frac{\dimone}{N}$ for some constant $c > 0$ and normalizing by
$1/(\dimone \dimtwo)$. 
If $N \le c_4 \dimone$, then we simply choose $\delta$ to be a sufficiently small constant and normalize to obtain the lower bound of constant order. 
\qed

\subsubsection{Proof of part (b): estimation error}
\label{sec:est-err}

We now turn to the term
$\frac{1}{\sqrt{N} } \wedge \left(\frac{\dimtwo}{N} \right)^{2/3} \wedge \frac{\dimone \dimtwo}{N}$ 
of the lower bound which stems from estimation of the underlying bivariate isotonic matrix even if the permutations are given. 
This lower bound is partly known for the model of one observation per entry under Gaussian noise \cite{ChaGunSen18}, 
and it suffices to slightly extend their proof to fit our model. 
The proof is based on Assouad's lemma. 

\begin{lemma}[Assouad's Lemma]\label{lem:assouad}
Consider a parameter space $\Theta$. Let $\mathbb{P}_\theta$ denote the distribution of the model given that the true parameter is $\theta \in \Theta$. Let $\E_\theta$ denote the corresponding expectation.
Suppose that for each $\tau\in\{-1, 1\}^d$, there is an associated $\theta^\tau \in \Theta$. 
Then it holds that
$$\underset{\tilde\theta}{\inf}\underset{\theta^* \in\Theta}{\sup} \E_{\theta^\ast} \ell^2(\tilde\theta, \theta^*) \geq \frac d8 \, \underset{\tau\neq\tau'}{\min}\frac{\ell^2(\theta^{\tau}, \theta^{\tau'})}{d_H(\tau, \tau')} \, \underset{d_H(\tau, \tau') = 1}{\min}(1 - \|\mathbb P_{\theta^\tau} - \mathbb P_{\theta^{\tau'}}\|_{TV}),$$
where $\ell$ denotes any distance function on $\Theta$, $d_H$ denotes the Hamming distance, $\|\cdot\|_{TV}$ denotes the total variation distance, and the infimum is taken over all estimators $\tilde \theta$ measurable with respect to the observation. 
\end{lemma}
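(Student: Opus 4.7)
The plan is to follow the classical proof of Assouad's lemma, which reduces bounding the minimax risk to $d$ coupled binary hypothesis tests, one per coordinate of the hypercube. First I would restrict the supremum over $\Theta$ to the sub-family $\{\theta^\tau : \tau \in \{-1,+1\}^d\}$; this lower bounds the supremum by the uniform average $\frac{1}{2^d}\sum_\tau \E_{\theta^\tau}[\,\cdot\,]$.

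Next is the reduction from estimation to testing. For any estimator $\tilde\theta$, define $\hat\tau = \hat\tau(\tilde\theta) \in \{-1,+1\}^d$ as the element of the packing nearest to $\tilde\theta$ in the distance $\ell$, breaking ties arbitrarily. By the triangle inequality, $\ell(\theta^{\hat\tau}, \theta^\tau) \le 2\,\ell(\tilde\theta, \theta^\tau)$, so writing $r := \min_{\tau \ne \tau'}\ell^2(\theta^\tau, \theta^{\tau'})/d_H(\tau, \tau')$ yields
\[
\ell^2(\tilde\theta, \theta^\tau) \;\ge\; \tfrac14 \ell^2(\theta^{\hat\tau}, \theta^\tau) \;\ge\; \frac{r}{4}\,d_H(\hat\tau, \tau) \;=\; \frac{r}{4}\sum_{j=1}^d \mathbf{1}\{\hat\tau_j \ne \tau_j\}.
\]
Taking $\E_{\theta^\tau}$, averaging over a uniform $\tau$, and exchanging sum and expectation leaves $d$ separate binary testing problems indexed by $j$.

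Finally, pairwise Le Cam. For each coordinate $j$, pair $\tau$ with its flip $\tau^{(j)}$, which agrees with $\tau$ except in coordinate $j$. The elementary two-point inequality (any test succeeds with total error at least $1-\TV$) gives
\[
\Prob_{\theta^\tau}(\hat\tau_j \ne \tau_j) + \Prob_{\theta^{\tau^{(j)}}}(\hat\tau_j \ne \tau^{(j)}_j) \;\ge\; 1 - \|\Prob_{\theta^\tau} - \Prob_{\theta^{\tau^{(j)}}}\|_{TV},
\]
and averaging over $\tau$ in paired fashion, then taking a minimum over single-flip pairs $(\tau,\tau')$ with $d_H(\tau,\tau')=1$, yields
\[
\frac{1}{2^d}\sum_\tau \Prob_{\theta^\tau}(\hat\tau_j \ne \tau_j) \;\ge\; \tfrac12 \min_{d_H(\tau,\tau')=1}\bigl(1 - \|\Prob_{\theta^\tau} - \Prob_{\theta^{\tau'}}\|_{TV}\bigr).
\]
Summing over $j \in [d]$ and multiplying by $r/4$ produces the advertised factor $d/8$. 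The only subtle point is the constant bookkeeping ($1/4$ from the triangle-inequality reduction times $1/2$ from the Le Cam paired bound equals $1/8$); there is no real obstacle, since $\ell$ is assumed to satisfy the triangle inequality so the reduction step proceeds cleanly.
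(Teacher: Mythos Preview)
Your argument is the standard, correct proof of Assouad's lemma: reduce to the hypercube sub-family, pass to the nearest-neighbor decoder via the triangle inequality for $\ell$, decompose the Hamming distance coordinatewise, and apply Le Cam's two-point bound on each paired flip. The constant bookkeeping ($\tfrac14\cdot\tfrac12=\tfrac18$) is right.

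There is nothing to compare against in the paper: it states Assouad's lemma as a classical tool and applies it directly, without giving a proof. Your write-up is exactly the textbook derivation one would cite for it.
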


To apply the lemma, we construct a mapping from the hypercube to $\Cbiso \subset \Cperm$. 
Consider integers $k_1 \in [\dimone]$ and $k_2 \in [\dimtwo]$, and let $m_1 = \dimone/k_1$ and $m_2 =  \dimtwo/k_2$.
Assume without loss of generality that \( m_1 \) and \( m_2 \) are integers.
For each $\tau\in\{-1,1\}^{k_1 \times k_2}$,  define $M^\tau\in\Cbiso$ in the following way. For $i\in[\dimone]$ and $j\in[\dimtwo]$, first  identify the unique $u\in[k_1],v\in[k_2]$ for which $(u-1)m_1<i\leq um_1$ and $(v-1)m_2<j\leq vm_2$, and then  take
$$M_{i, j}^\tau = \lambda \Big( \frac{u + v - 1}{k_1 + k_2} + \frac{\tau_{u,v}}{2 (k_1 + k_2) } \Big) $$
for $\lambda \in (0, 1]$ to be chosen. 
It is not hard to verify that $M^\tau\in\Cbiso$. 

We now proceed to show the lower bound using Assouad's lemma, which in our setup states that 
\begin{align}
&\underset{\tilde M}{\inf}\underset{M^* \in \Cbiso}{\sup} \E_{M^*} \|\tilde M - M^*\|_F^2 \geq 
\label{eq:a-lem-sp} \\
& \quad \frac{k_1 k_2}{8} \, \underset{\tau\neq\tau'}{\min}\frac{ \| M^{\tau} - M^{\tau'}\|_F^2 }{d_H(\tau, \tau')} \, \underset{d_H(\tau, \tau') = 1}{\min}(1 - \|\mathbb P_{M^\tau} - \mathbb P_{M^{\tau'}}\|_{TV}). \notag 
\end{align}

For any $\tau, \tau' \in \{-1, 1\}^{k_1 \times k_2}$, it holds that 
\begin{align}
\| M^\tau - M^{\tau'}\|_F^2 &=  \sum_{i,j}(M^\tau_{i, j}-M^{\tau'}_{i, j})^2  \notag \\
&= \sum_{u\in[k_1],v\in[k_2]} \sum_{(u-1)m_1<i\leq um_1}\sum_{(v-1)m_2<j\leq vm_2}(M^{\tau}_{i, j} - M^{\tau'}_{i,j})^2  \notag \\
&= \sum_{u\in[k_1],v\in[k_2]} m_1 m_2 \frac{ \lambda^2 (\tau_{u,v} - \tau'_{u,v})^2 }{4 (k_1 + k_2)^2} \notag \\
&= \frac{ \lambda^2 m_1 m_2}{(k_1 + k_2)^2} d_H(\tau, \tau') .
\label{eq:fro-ham}
\end{align}
As a result, we have
\begin{align}
\underset{\tau\neq\tau'}{\min}\frac{ \| M^{\tau} - M^{\tau'} \|_F^2 }{d_H(\tau, \tau')} = \frac{ \lambda^2 m_1 m_2}{(k_1 + k_2)^2} .
\label{eq:min-ratio}
\end{align}

To bound $\|\mathbb P_{M^\tau} - \mathbb P_{M^{\tau'}}\|_{TV}$, we use Pinsker's inequality 
$$
\|\mathbb P_{M^\tau} - \mathbb P_{M^{\tau'}}\|^2_{TV} \leq \frac12 D(\mathbb P_{M^\tau}\|\mathbb P_{M^{\tau'}}) . 
$$ 
Under either the Bernoulli or the standard Gaussian observation model, we have seen that by combining \eqref{eq:kl-sum} and \eqref{eq:kl-entry} (which hold even in this regime of $N$), the KL divergence can be bounded as 
\begin{align*}
D(\mathbb P_{M^\tau} \|\mathbb P_{M^{\tau'}}) &\le \sum_{i \in [\dimone], \, j \in [\dimtwo]} \frac{16}{3} \frac{N}{\dimone \dimtwo}  (M^{\tau}_{i,j} - M^{\tau'}_{i,j} )^2 \\ 
&= \frac{16}{3} \frac{N}{\dimone \dimtwo} \| M^{\tau} - M^{\tau'} \|_F^2 \\
&= \frac{16}{3} \frac{N}{\dimone \dimtwo} \frac{ \lambda^2 m_1 m_2}{(k_1 + k_2)^2} = \frac{16}{3} \frac{ \lambda^2 N}{k_1 k_2(k_1 + k_2)^2} , 
\end{align*}
where the second-to-last equality follows from \eqref{eq:fro-ham} for any $\tau$ and $\tau'$ such that $d_H(\tau, \tau') = 1$. 
Therefore, it holds that 
\begin{align}
\underset{d_H(\tau, \tau')=1}{\min}(1-\|\mathbb P_{M^\tau} - \mathbb P_{M^{\tau'}}\|_{TV})\geq\bigg(1- \sqrt{\frac{ 8 N }{ 3 k_1 k_2 }} \frac{\lambda}{k_1 + k_2} \bigg).
\label{eq:min-tv}
\end{align}

Plugging \eqref{eq:min-ratio} and \eqref{eq:min-tv} into Assouad's lemma \eqref{eq:a-lem-sp}, we obtain 
\begin{align} \label{eq:assouad-lb}
\underset{\tilde M}{\inf}\underset{M^* \in \Cbiso}{\sup} \E_{M^*} \|\tilde M - M^*\|_F^2 \geq 
\frac{ \lambda^2 n_1 n_2}{8 (k_1 + k_2)^2}  \bigg(1- \sqrt{\frac{ 8 N }{ 3 k_1 k_2 }} \frac{\lambda}{k_1 + k_2}  \bigg) . 
\end{align}
Note that the bound~\eqref{eq:assouad-lb} holds for all tuples $(N, \dimone, \dimtwo)$. In order to obtain the various regimes, we must set particular values of $k_1$ and $k_2$. 
If $N \le 1$, then setting $k_1 = k_2 = 1$ and $\lambda$ to be a sufficiently small constant clearly gives the trivial bound. 
If $1\le N \le \dimtwo^4$, then we take $k_1 = k_2 = \lfloor N^{1/4} \rfloor \le \dimtwo$ and $\lambda$ to be a sufficiently small positive constant so that 
$$
\underset{\tilde M}{\inf}\underset{M^* \in \Cbiso}{\sup} \E_{M^*} \|\tilde M - M^*\|_F^2 \geq 
c_1 \frac{ \dimone \dimtwo }{ \sqrt{N} } , 
$$
for a constant $c_1 > 0$. 
If $\dimtwo^4 \le N \le \dimone^3 \dimtwo$, we take $k_1 = \lfloor (N/\dimtwo)^{1/3} \rfloor \le \dimone$, $k_2 = n_2$ and $\lambda$ to be a sufficiently small positive constant so that 
$$
\underset{\tilde M}{\inf}\underset{M^* \in \Cbiso}{\sup} \E_{M^*} \|\tilde M - M^*\|_F^2 \geq 
c_2 \Big( \frac{ \dimtwo }{ N } \Big)^{2/3} , 
$$
for a constant $c_2 > 0$. 
Finally, if $N \ge \dimone^3 \dimtwo$, we choose $k_1 = \dimone$, $k_2 = \dimtwo$ and $\lambda = \sqrt{ \frac{ 3 \dimone \dimtwo }{ 32 N } } (\dimone + \dimtwo)$ to conclude that
$$
\underset{\tilde M}{\inf}\underset{M^* \in \Cbiso}{\sup} \E_{M^*} \|\tilde M - M^*\|_F^2 \geq 
\frac{ 3 (n_1 n_2)^2 }{256 N } .
$$
Normalizing the above bounds by $1/(\dimone \dimtwo)$ yields the theorem. 
\qed

\subsection{Proof of Corollary~\ref{cor:biso}}

The proof of this corollary follows from the steps used to establish Theorem~\ref{thm:funlim}. In particular, for the upper bound, applying Lemma~\ref{lem:shah}(b) with the same parameter choices as above yields that with probability at least $1 - \dimone^{- 3 \dimone}$, we have
\begin{align*}
& \big\| \Mhatls(\Cbiso, Y) - M^* \big\|_F^2 \lesssim \varplusone \frac{\dimone \dimtwo^2}{ N} (\log \dimone)^2 
\\ 
&+ \Big[ \varplusone \frac{ \dimone \dimtwo }{\sqrt{N}} (\log \dimone )^2 \Big] \land
\Big[ \varplusone^2 \frac{ \dimone^2 \dimtwo^2 }{ N } \log N \Big] \land
\Big[ \varplusone^{4/3} \frac{\dimone \dimtwo^{5/3} }{ N^{2/3} } (\log \dimone)^{2/3}  \Big]   .
\end{align*}
Normalizing the bound by $1/(\dimone \dimtwo)$ proves the upper bound.

The lower bound established in Section~\ref{sec:est-err} is valid for the class $\Cbiso$, so the proof is complete. 
\qed


\subsection{Proof of Theorem~\ref{thm:rowlb}}
\label{sec:pf-rowlb}



At a high level, our strategy is to reduce the problem to that of
hypothesis testing over particular instances of convex cones; we note
that cone testing problems have been extensively studied in past work
(see the paper~\cite{WeiGunWai17} and references therein). The
reduction takes the following form.  Consider the sub-classes of
$\Cpermr$ and $\Cperm$ in which the first $\dimone - 2$ rows are
identically zero. In this special case, obtaining a good estimate of
the row permutation $\pihat$ in the metric $\mathcal{R}(M^*, \pihat)$
corresponds to correctly ordering the last two rows of the matrix.
This ordering problem can be reduced to a compound hypothesis testing
problem.

Before diving into the details, let us introduce some useful notation. For two vectors $a$ and $b$ of equal dimension $n$, we use the notation $a \preceq b$ to mean that $a_i \leq b_i$ for all $i \in [n]$.
Recall that in our model, the number of observations at each entry has
distribution $\Poi(\lambda)$ where we set
$\lambda \defn \frac{N}{\dimone \dimtwo}$.  For a matrix $U \in [0, 1]^{d_1 \times d_2}$ (we often set this to be either one or two rows of $M^*$), we observe $\kappa_{i, j}$ noisy samples of $U_{i, j}$, where the entries of the $d_1 \times d_2$ matrix $\kappa$ are i.i.d. $\Poi(\lambda)$ random variables.  Under the Gaussian
observation model, the distribution $\mathbb{G}( U )$ of the
observations, when conditioned on a particular realization of $\kappa$, takes the form
\begin{align} \label{eq:gau-obs-dist}
\mathbb{G} ( U \cond \kappa) \defn \otimes_{i=1}^{d_1} \otimes_{j = 1}^{d_2} \otimes_{k
  = 1}^{\kappa_{i, j}} \NORMAL(U_{i, j}, 1)  , 
\end{align} 
where $\NORMAL(c, 1)$ denotes the standard Gaussian distribution with mean $c$.
Similarly, define the distribution $\Bdist(U)$ with Bernoulli observations replacing Gaussian observations.

We now set up a precise testing problem. For a set $\set \subseteq [0,
  1]^{2 \times \dimtwo}$ of pairs of row vectors\footnote{Since the setup of hypothesis testing is applied to two rows of the matrix $M^*$, we think of $u$ and $v$ as row vectors and write $[u; v]$ for the pair.}
$[u; v]$ and a radius of testing $\minradius$ to be chosen, let
$\mathbb{M}_{\set}(\minradius)$ denote a mixture distribution
supported on vector pairs $[u; v] \in \set$ that obey the conditions
\begin{align}
u &\preceq v, \text{ and} \label{eq:cond1}\\
\| u - v \|_2 &\geq \minradius. \label{eq:cond2}
\end{align}
Consider the testing problem (for the Gaussian observation setting) based on the pair of compound hypotheses
\begin{itemize}
\item
$H_0$: $X \sim \Gdist( [u; v] )$ where $[u; v] \sim \mathbb{M}_{\set}(\minradius)$, and
\item
$H_1$: $X \sim \Gdist( [v; u] )$ where $[u; v] \sim \mathbb{M}_{\set}(\minradius)$.
\end{itemize}
For Bernoulli observations, the distribution $\Gdist$ is replaced with the distribution $\Bdist$.
Denote by $\Pzero$ and $\Pone$ the distribution of $X$ under the hypotheses $H_0$ and $H_1$, respectively. Assume that the mixture distribution $\mathbb{M}_{\set}(\minradius)$ is constructed such
that
\begin{subequations}
\begin{align}
 \label{eq:cond3-a}  
\TV(\Pzero \,\|\, \Pone) \leq \frac{1}{2}.
\end{align}

We also define a related testing problem, given by the hypotheses
\begin{itemize}
\item
$\widetilde{H}_0$: $X = X_1 - X_2$ where $X_1 \sim \Gdist( u)$, $X_2 \sim \Gdist(v)$, and $[u; v] \sim \mathbb{M}_{\set}(\minradius)$;
\item
$\widetilde{H}_1$: $X = X_1 - X_2$ where $X_1 \sim \Gdist( v)$, $X_2 \sim \Gdist(u)$, and $[u; v] \sim \mathbb{M}_{\set}(\minradius)$.
\end{itemize}
Once again, for Bernoulli observations, the distribution $\Gdist$ is replaced with the distribution $\Bdist$.
Denote by $\Pzerotil$ and $\Ponetil$ the distribution of $X$ under the hypotheses $\widetilde{H}_0$ and $\widetilde{H}_1$, respectively, with
\begin{align}
 \label{eq:cond3-b}  
\TV(\Pzerotil \,\|\, \Ponetil) \leq \frac{1}{2}.
\end{align}
\end{subequations}
%
%

The following lemma shows that in order to establish a lower bound in the $\mathcal{R}$ metric, it is sufficient to lower bound the minimax testing radius of the hypothesis testing problem above, for a particular choice of the set $\set$.
\begin{lemma} \label{lem:testingreduction}
(a) Let $\set$ be the set of pairs of vectors $[u; v] \in [0,
    1]^{2 \times \dimtwo}$ such that there is a common permutation $\pi$ for
  which $\{u_{\pi(i)}\}_{i=1}^{\dimtwo}$ and
  $\{v_{\pi(i)}\}_{i=1}^{\dimtwo}$ are both non-decreasing.  Suppose
  that there exists a mixture $\mathbb{M}_{\set}(\minradius)$ and
  associated observation distributions $\Pzero$ and $\Pone$ that obey
  the conditions~\eqref{eq:cond1}, \eqref{eq:cond2}, and
  \eqref{eq:cond3-a}. Then we have
\begin{align*}
\inf_{\pihat} \sup_{M^* \in \Cperm} \EE[ \mathcal{R}(M^*, \pihat)]
\geq \frac{\minradius^2}{2 \dimtwo}.
\end{align*}
(b) Let $\set$ be the set 
of pairs of non-decreasing vectors in $[0, 1]^{\dimtwo}$, and suppose that there exists a mixture $\mathbb{M}_{\set}(\minradius)$ and
  associated observation distributions $\Pzerotil$ and $\Ponetil$ that obey
  the conditions~\eqref{eq:cond1}, \eqref{eq:cond2}, and
  \eqref{eq:cond3-b}  Then we have
\begin{align*}
\inf_{\pihat \in \mathcal{P}^{\mathsf{r}}_{\mathsf{pdd}}} \sup_{M^* \in \Cpermr} \EE[ \mathcal{R}(M^*, \pihat) ]
\geq \frac{\minradius^2}{4 \dimtwo}.
\end{align*}
\end{lemma}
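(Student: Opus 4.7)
My plan is to carry out a two-point Le Cam reduction that transforms permutation estimation in the $\mathcal{R}$ metric into the compound hypothesis testing problems stated in the lemma. The embedding, common to both parts, works as follows: given a pair $[u;v]\in\set$ drawn from the support of $\mathbb{M}_{\set}(\minradius)$, I define two candidate matrices $M^+,M^-$ by zero-padding, so that their first $\dimone-2$ rows are identically $0$ and their last two rows are $(u,v)$ and $(v,u)$ respectively. The condition $u\preceq v$ yields column-wise monotonicity in the zero-padded layout, and the specific choice of $\set$ in each part ensures that both $M^+$ and $M^-$ lie in the appropriate model class.

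For part (a), let $\sigma$ be a common column permutation that sorts $u$ and $v$ simultaneously (which exists by hypothesis on $\set$), and let $\tau\in\symgp_{\dimone}$ denote the transposition of the last two rows. A direct check gives $M^+\in\Cbiso(\id,\sigma)$ and $M^-\in\Cbiso(\tau,\sigma)$, so both matrices belong to $\Cperm$. Since the first $\dimone-2$ rows of $M^+$ and $M^-$ coincide, the total variation between the full observation distributions $\Gdist(M^+)$ and $\Gdist(M^-)$ is determined by the last two rows alone, and, averaged over the mixture $\mathbb{M}_{\set}(\minradius)$, equals $\TV(\Pzero,\Pone)\le 1/2$ by~\eqref{eq:cond3-a}. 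An estimator $\pihat$ that correctly orders the last two rows (with respect to the true sorting permutation $\pi^*$) contributes zero to $\mathcal{R}$, whereas one that swaps them contributes $\mathcal{R}_{\dimone-1}=\frac{1}{\dimtwo}\|u-v\|_2^2\ge \minradius^2/\dimtwo$. A standard Le Cam two-point argument, applied to the supremum taken jointly over both hypotheses in $\Cperm$, then yields the claimed lower bound of order $\minradius^2/\dimtwo$.

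For part (b), the same construction works with $\sigma=\id$, since the column ordering is known in $\Cpermr$ and the set $\set$ now consists of pairs of individually non-decreasing vectors; one checks $M^+,M^-\in\Cpermr$. The key additional ingredient is the PDD restriction: by Definition~\ref{def:pdd}, the edge decision between rows $\dimone-1$ and $\dimone$ in the graph $G$ is a deterministic function of the row difference $Y_{\dimone-1}-Y_{\dimone}$, whose law is $\Pzerotil$ under $H_0$ and $\Ponetil$ under $H_1$. Applying Le Cam's inequality to this difference statistic and the induced ternary decision (edge forward, backward, or absent) bounds below the probability of a ``wrong'' edge decision by $(1-\TV(\Pzerotil,\Ponetil))/2\ge 1/4$.

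The main obstacle lies in translating a wrong edge decision into a wrong final permutation, since the graph $G$ also contains many other edges (involving the zero rows) whose distributions do differ between $H_0$ and $H_1$. My plan is to argue as follows: whenever the direct edge between $\dimone-1$ and $\dimone$ is oriented incorrectly, any topological sort of $G$ is forced to place the two rows in the wrong order; when no direct edge is present, the uniformly random topological sort used by a PDD estimator is, by the symmetry between the two row labels relative to the rest of the graph, equally likely to produce either ordering of the pair conditional on the remaining edges. Combining these contributions with the Le Cam bound on the edge decision, and recalling that an incorrect final ordering of the pair incurs $\mathcal{R}\ge \minradius^2/\dimtwo$ exactly as in part (a), yields the desired lower bound of order $\minradius^2/\dimtwo$.
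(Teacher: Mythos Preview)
Your reduction and the treatment of part~(a) follow the paper closely and are essentially correct; the only omission is the side condition $\min\{\|u\|_2^2,\|v\|_2^2\}\ge\|u-v\|_2^2$ on the mixture, which is needed so that sending one of the last two rows to a zero row incurs at least as much error as swapping them---the paper states and uses this explicitly when passing from $\mathcal{R}$ to the event $[\pihat(\dimone-1),\pihat(\dimone)]\neq[\pi^*(\dimone-1),\pi^*(\dimone)]$.

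The genuine gap is in part~(b), in your handling of the ``no direct edge'' event. Your assertion that ``by the symmetry between the two row labels relative to the rest of the graph'' the uniformly random topological sort orders the pair either way with equal probability is false. The edges between a zero row $k\le\dimone-2$ and row $\dimone-1$ depend on $Y_k-Y_{\dimone-1}$, while those between $k$ and row $\dimone$ depend on $Y_k-Y_{\dimone}$; since $Y_{\dimone-1}$ and $Y_{\dimone}$ have different means ($u$ versus $v$), the graph $G$ has no reason to be invariant under relabeling the last two nodes. A three-node DAG with a single edge $1\to 3$ already breaks the claim: two of its three topological sorts place node $2$ before node $3$.

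The paper avoids this by a different case split on the ``no edge'' event. Either the unordered set $\{\pihat(\dimone-1),\pihat(\dimone)\}$ is not equal to $\{\dimone-1,\dimone\}$, in which case the ordered pair is automatically wrong because $\{\pi^*(\dimone-1),\pi^*(\dimone)\}=\{\dimone-1,\dimone\}$ under both hypotheses; or the last two nodes occupy the last two positions, in which case every edge of $G$ incident to $\dimone-1$ or $\dimone$ must point into them from a node at position at most $\dimone-2$, so swapping those two positions is a bijection on the set of topological sorts of $G$, yielding the conditional probability $1/2$. Replacing your symmetry step with this case analysis makes the argument go through.
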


We have thus reduced the problem to that of constructing a
mixture distribution $\mathbb{M}_{\set}$ satisfying particular conditions.
Establishing condition~\eqref{eq:cond3-a} (or~\eqref{eq:cond3-b}) is often the hardest part;
we use the following technical lemma to simplify the calculations.
\begin{lemma}
\label{lem:chi-sq-bd}
Consider positive integers $r, s$ and $d = rs$. 
Let $\theta^0$ be the all-half vector in $\real^d$. 
For each $k \in [s]$, let $I_k \defn \{(k-1) r + 1, \dots, k r\}$, and denote by $\theta^k \in \real^d$ the vector with entries
\begin{align*}
\theta^k_i = 
\begin{cases}
1/2 + \delta &\text{ if } i \in I_k \\
1/2 &\text{ otherwise.} 
\end{cases}
\end{align*}
Consider a random vector $w$ drawn uniformly at random from the set $\{\theta^k\}_{k \in [s]}$, and suppose that $\delta \le 2/5$. Then, we have 
\begin{align*}
\chi^2 \Big( \Gdist ( w ) \,\| \, \Gdist( \theta^0 ) \Big) &\le  \frac{2 \lambda \delta^2 r }{s}, \text{ provided } \lambda \delta^2 r \le 2/5, \text{ and } \\
\chi^2 \Big( \Bdist ( w ) \,\| \, \Bdist( \theta^0 ) \Big) &\le  \frac{8 \lambda \delta^2 r }{s}, \text{ provided } \lambda \delta^2 r \le 1/10.
\end{align*}
\end{lemma}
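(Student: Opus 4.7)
The plan is to invoke the standard Ingster-type decomposition of the chi-squared divergence between a mixture and a product reference measure. Writing $P_k$ for the observation law with true vector $\theta^k$, $Q = P_0$ for the baseline, and $L_k = dP_k/dQ$, the identity
\[
\chi^2\!\left(\frac{1}{s}\sum_{k=1}^{s} P_k \,\Big\|\, Q\right) + 1 \;=\; \frac{1}{s^2}\sum_{k, k' = 1}^{s} \E_Q\!\left[L_k L_{k'}\right]
\]
reduces the task to computing the cross-terms $\E_Q[L_k L_{k'}]$. Since our observations are independent across coordinates and consist of a $\Poi(\lambda)$ number of i.i.d.\ samples at each coordinate, each cross-term factorizes as
\[
\E_Q[L_k L_{k'}] \;=\; \prod_{i=1}^d \E_{\kappa_i \sim \Poi(\lambda)}\!\left[ m_i^{\kappa_i} \right] \;=\; \prod_{i=1}^d \exp\!\bigl( \lambda (m_i - 1) \bigr),
\]
where $m_i = \int \phi_{\theta^k_i}(y)\,\phi_{\theta^{k'}_i}(y)/\phi_{\theta^0_i}(y)\,dy$ is the per-sample moment and $\phi_\theta$ is the appropriate single-observation density.

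Next I would compute $m_i$ explicitly in each noise model. Completing the square in the Gaussian density gives $m_i = \exp\!\bigl((\theta^k_i - 1/2)(\theta^{k'}_i - 1/2)\bigr)$, while a direct expansion in the Bernoulli case yields $m_i = 1 + 4(\theta^k_i - 1/2)(\theta^{k'}_i - 1/2)$. Here enters the single structural fact that drives the whole bound: because the supports $I_k$ are pairwise disjoint, the product $(\theta^k_i - 1/2)(\theta^{k'}_i - 1/2)$ vanishes at every coordinate whenever $k \ne k'$, forcing $m_i \equiv 1$ and hence $\E_Q[L_k L_{k'}] = 1$ off the diagonal. On the diagonal $k = k'$, only the $r$ coordinates in $I_k$ contribute, yielding
\[
\E_Q[L_k^2] \;=\; \exp\!\bigl( r\lambda\,(e^{\delta^2} - 1) \bigr) \quad \text{(Gaussian)}, \qquad \E_Q[L_k^2] \;=\; \exp\!\bigl( 4 r \lambda \delta^2 \bigr) \quad \text{(Bernoulli)}.
\]
Substituting back into the Ingster identity collapses the double sum to $\chi^2 = s^{-1}\bigl(\E_Q[L_1^2] - 1\bigr)$.

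To finish, I would invoke the elementary estimate $e^x - 1 \le 2x$ valid for $x$ in a bounded interval around the origin. In the Gaussian case, the assumption $\delta \le 2/5$ yields $e^{\delta^2} - 1 \le \frac{e^{4/25}-1}{4/25}\,\delta^2 < 1.09\,\delta^2$, and then the hypothesis $\lambda \delta^2 r \le 2/5$ keeps the exponent $r\lambda(e^{\delta^2}-1)$ small enough for the linearization to produce the claimed bound $2\lambda \delta^2 r / s$. In the Bernoulli case, the stronger hypothesis $\lambda\delta^2 r \le 1/10$ makes the exponent $4 r\lambda\delta^2$ at most $2/5$, and the same linearization delivers $8 \lambda \delta^2 r / s$. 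The only conceptual obstacle is handling the Poisson-mixed sample size, which is dispatched cleanly by the MGF identity $\E[t^\kappa] = \exp(\lambda(t-1))$; the remaining work is routine chi-squared bookkeeping, with the main practical care being to track constants carefully so that the numerical thresholds $2/5$ and $1/10$ align with the stated constants $2$ and $8$.
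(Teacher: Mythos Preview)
Your proposal is correct and follows essentially the same route as the paper: the Ingster identity for the $\chi^2$ divergence of a uniform mixture, the observation that disjoint supports $I_k$ kill all off-diagonal cross-terms, explicit computation of the per-sample moments in the Gaussian and Bernoulli cases, and linearization of the exponential under the stated smallness hypotheses. The only cosmetic difference is the order in which the Poisson randomness is handled: the paper first conditions on $\kappa$, computes the conditional $\chi^2$, and then takes $\E_\kappa$ using the Poisson MGF, whereas you integrate out $\kappa_i$ coordinate-by-coordinate via $\E[m_i^{\kappa_i}] = \exp(\lambda(m_i - 1))$ inside the likelihood-ratio product; the two computations are algebraically equivalent and yield the same diagonal term $\exp\bigl(r\lambda(e^{\delta^2}-1)\bigr)$ (Gaussian) and $\exp(4r\lambda\delta^2)$ (Bernoulli).
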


Lemmas~\ref{lem:testingreduction} and~\ref{lem:chi-sq-bd} are proved in Sections~\ref{sec:test-red} and~\ref{sec:pf-lem-chi}, respectively. Taking them as given, let us now proceed to a proof of the two parts.



\subsubsection{Proof of part (a)}

We require some notation to set up the mixture distribution. Let $s \in [\dimtwo]$ and to ease the notation, we assume that $\dimtwo/s$ is an integer; this only changes the constant factors in the proof. 
To define the mixture $[u; v] \sim \mathbb{M}_{\set} (\minradius)$, we set $u$ to be the vector with all entries identically equal to $1/2$, and construct a mixture for $v$. 
Let $e_i$ denote the $i$-th standard basis vector in $\real^s$, and for a positive scalar $\delta$ to be chosen, let $\delta e_i + 1/2$ denote the vector whose $i$-th coordinate is $\delta + 1/2$ and all other coordinates are $1/2$.  
Consider a vector $w$ chosen uniformly at random from the set $\{ \delta e_i + 1/2 \}_{i=1}^s$. 
Now define $v$ to be the Cartesian product of $\dimtwo/s$ independent copies of $w$, so that $v$ can be expressed as the all-half vector plus a vector of $\dimtwo/s$ ``bumps" of size $\delta$, with the locations of the bumps chosen at random within each sub-vector of size $s$.
It now remains to verify that the mixture induced by this construction satisfies conditions~\eqref{eq:cond1},~\eqref{eq:cond2}, and~\eqref{eq:cond3-a}. Condition~\eqref{eq:cond1} is satisfied by construction, as is condition~\eqref{eq:cond2} with $\minradius = \delta \sqrt{\dimtwo / s}$.

\paragraph{Verifying condition~\eqref{eq:cond3-a}}
Recall that we have $\mathbb{P}_0 = \Gdist([u; v])$ and $\mathbb{P}_1 = \Gdist([v; u])$. Also define $\mathbb{P} \defn \Gdist([u; u])$.
By triangle inequality, we have
\begin{align*}
\TV(\mathbb{P}_0 \,\|\, \mathbb{P}_1) &\le \TV(\mathbb{P}_0 \,\|\, \mathbb{P}) + \TV(\mathbb{P}_1 \,\|\, \mathbb{P}) \\
&= 2 \TV(\mathbb{P}_0 \,\|\, \mathbb{P}) \\
&\leq \sqrt{ 2 D(\mathbb{P}_0 \,\|\, \mathbb{P} ) },
\end{align*}
where we have used $D$ to denote the KL-divergence, and the final step holds by Pinsker's inequality.

Let $X_1$ and $X_2$ denote the observations on the two rows from the distribution $\Gdist([u; v])$. Then $X_1$ and $X_2$ are independent since $u$ is a fixed vector. 
An analogous statement is also true for the distribution $\Gdist([u; u])$. 
Therefore, for the Gaussian case, we have 
\begin{align*}
D (\mathbb{P}_0 \,\|\, \mathbb{P} ) &= D \Big( \Gdist([u; v]) \,\|\, \Gdist([u; u]) \Big) \\
&= D \Big( \Gdist(u) \,\|\, \Gdist(u) \Big) + D \Big( \Gdist(v) \,\|\, \Gdist(u) \Big) \\
&= D \Big( \Gdist(v) \,\|\, \Gdist(u) \Big),
\end{align*}
where we have abused notation slightly to use $\Gdist$ to denote both the matrix and vector distributions.

By construction, the sub-vector of $v$ on each of the $\dimtwo/s$ blocks is an independent copy of the vector $w$ defined above, so we further obtain 
\begin{align*}
D \Big( \mathbb{P}_0 \,\|\, \mathbb{P}  \Big) = \frac{\dimtwo}{s} D \Big( \Gdist(w) \,\|\, \Gdist(x) \Big) 
\le \frac{\dimtwo}{s} \chi^2 \Big( \Gdist(w) \,\|\, \Gdist(x) \Big) ,
\end{align*}
where $x$ is the vector in $\real^{s}$ with all entries equal to $1/2$. 

For the Gaussian case, let us take $\delta = 1/4$ and $s = \sqrt{\lambda \dimtwo} \vee 1$.
Since $\lambda \le 1$, applying Lemma~\ref{lem:chi-sq-bd} with $r = 1$ yields 
\[
D \Big( \mathbb{P}_0 \,\|\, \mathbb{P} \Big) \le \frac{2 \dimtwo \lambda \delta^2 }{ s^2 } \leq \frac{1}{8}. 
\]
Putting together the pieces shows the necessary bound on the $\TV$ distance.
An identical argument holds for Bernoulli observations by choosing a slightly smaller $\delta$, since Lemma~\ref{lem:chi-sq-bd} provides the same bound up to a constant factor.
\smallskip

Overall, we have shown that all three conditions~\eqref{eq:cond1}-\eqref{eq:cond3-a} hold with $\minradius^2 = \frac{ \dimtwo}{ 16} \sqrt{ \frac{ \dimone }{ N } } \wedge \frac{\dimtwo}{16}$, thereby proving the required result.
\qed

\subsubsection{Proof of part (b)} \label{sec:conjsupp}


Let the set $\mathcal{S}$ consist of pairs of non-decreasing vectors in $[0, 1]^{\dimtwo}$, and construct the mixture $\mathbb{M}_{\set}(\minradius)$ as follows.
%
Let $r, s \in [\dimtwo]$ and $t = \dimtwo / (r s)$. To ease the notation, we assume that $t$ is an integer without loss of generality. 
In order to build up to our final construction, first define random vectors $\mu$ and $\nu$ in
$\real^{s}$ by the assignment
\begin{align*}
\mu_j = \bfone\{j > J\} \quad \text{ and } \quad \nu_j = \bfone\{j \ge J\} ,
\end{align*} 
where $J$ is a uniform random index in $[s]$.

Next, define random vectors $\mathfrak{u}$ and $\mathfrak{v}$ on
$\real^{r s}$ by setting
\begin{align*}
\mathfrak{u}_i = \mu_{\lceil i/ r \rceil} \cdot \delta t^{-1} \quad
\text{ and } \quad \mathfrak{v}_i = \nu_{\lceil i/ r \rceil}
\cdot \delta t^{-1}
\end{align*}
for each $i \in [r s]$ and some fixed $\delta > 0$. 

Finally, we define random vectors $u$ and $v$ in $\real^{\dimtwo}$ as
follows. Split the $\dimtwo$ coordinates into $t$ consecutive
blocks, each of size $r s$. Let $\mathbf{b}$ denote the
all-ones vector in $\real^{r s}$.  For each even index $k \in
[t]$, define the sub-vectors of both $u$ and $v$ on the $k$-th
block to be $ \mathbf{b} \cdot (\frac{1}{4} + k\, t^{-1})$. On the
other hand, for each odd index $k$ in $[t]$, define the
sub-vectors of $u$ and $v$ on the $k$-th block respectively to be
\begin{align}
\label{eq:uv-def}
\mathbf{b} \cdot \Big(\frac{1}{4} + k\, t^{-1}\Big) +
\mathfrak{u}^{(k)} \quad \text{ and } \quad \mathbf{b} \cdot
\Big(\frac{1}{4} + k\, t^{-1}\Big) + \mathfrak{v}^{(k)} ,
\end{align}
where each $(\mathfrak{u}^{(k)}, \mathfrak{v}^{(k)})$ is an
independent copy of the pair of random vectors $(\mathfrak{u},
\mathfrak{v})$. Note that by construction, $u$ and $v$ are both
non-decreasing vectors in $[1/4, 3/4]^{\dimtwo}$, and we have that $u
\preceq v$.

Denote by $\mathbb{M}_{\set}(\minradius)$ the distribution of the
random pair $[u, v]$ constructed above.  
Clearly, this mixture satisfies conditions~\eqref{eq:cond1} and~\eqref{eq:cond2} by construction, with 
\[
\minradius^2
\gtrsim \frac{\delta^2 r }{ t } 
= \frac{\delta^2 r^2 s }{ \dimtwo } ,
\]
since the vectors $u$
and $v$ differ in the squared $\ell_2$ norm by $\delta^2 t^{-2} r$ on each block, and there are $t$ blocks in total. 
It remains to establish condition~\eqref{eq:cond3-b}.

%
\paragraph{Verifying condition~\eqref{eq:cond3-b}}

In order to verify condition~\eqref{eq:cond3-b}, we must bound the total variation distance between the distributions\footnote{Strictly speaking, for the difference of the two observed vectors, the variance of the Gaussian noise is $2$ instead of $1$, but the constants involved can be easily adjusted.} $\Gdist( u - v )$ and $\Gdist( v - u )$. Proceeding as in the proof of part (a) yields the bounds
\begin{align*}
\TV(\Gdist( u - v ) \,\|\, \Gdist( v - u )) &\leq \sqrt{2 D \left( \Gdist(u - v) \,\|\, \Gdist(0) \right) }, \text{ and }\\
D \left( \Gdist(u - v) \,\|\, \Gdist(0) \right) &\leq \frac{t}{2} D \left( \Gdist(\mathfrak{u} - \mathfrak{v}) \,\|\, \Gdist(0) \right),
\end{align*}
since the $t$ blocks in the mixture are independently generated, and the even ones are additionally i.i.d.

Furthermore, we have
\begin{align*}
D \left( \Gdist(\mathfrak{u} - \mathfrak{v}) \,\|\, \Gdist(0) \right) &\leq \chi^2 \left( \Gdist(\mathfrak{u} - \mathfrak{v}) \,\|\, \Gdist(0) \right) 
\leq \frac{2 \lambda \delta^2 r }{s t^2}
\end{align*}
where the final bound holds by Lemma~\ref{lem:chi-sq-bd} (note that $\delta$ in the statement of the lemma is now set to be $\delta t^{-1}$ in our packing construction, and all the vectors are shifted by the all-half vector) provided we have 
$
\frac{\lambda \delta^2 r}{ t^2 } = \frac{\lambda \delta^2 r^3 s^2}{ \dimtwo^2 } \le 2/5.
$ 
Putting together the pieces, the $\TV$ distance is bounded as
\begin{align*}
\TV(\Gdist( u - v ) \,\|\, \Gdist( v - u )) \leq \sqrt{ \frac{4 \lambda \delta^2 r }{s t} } = \sqrt{ \frac{4 \lambda \delta^2 r^2 }{\dimtwo} }.
\end{align*}
Finally, provided $\lambda \leq \dimtwo$, we may make the choice 
\begin{align*}
\delta &= 1/4, \\
r &= \sqrt{\dimtwo / \lambda} \wedge \dimtwo \text{ and} \\ 
s &=  (\frac 12 \lambda \dimtwo)^{1/4} \vee 1,
\end{align*}
to yield the final bounds $\TV(\Gdist( u - v ) \,\|\, \Gdist( v - u )) \leq 1/2$ and 
\[
\minradius^2 \gtrsim \dimtwo (\lambda \dimtwo)^{-3/4} \wedge \dimtwo \gtrsim \dimtwo \left( \frac{\dimone}{N} \right)^{3/4} \wedge \dimtwo. 
\]
Applying Lemma~\ref{lem:testingreduction}(b) then completes the proof.
\qed

\subsubsection{Proof of Lemma~\ref{lem:testingreduction}} \label{sec:test-red}

Let us prove the lemma for an arbitrary choice of the set $\set$ and
mixture distribution $\mathbb{M}_{\set}(\minradius)$ satisfying conditions
~\eqref{eq:cond1} and~\eqref{eq:cond2}; we specialize these choices shortly
to satisfy either condition~\eqref{eq:cond3-a} or condition~\eqref{eq:cond3-b}.

It
suffices to prove lower bounds over procedures provided with the
additional information that $\pi^*(i) = i$ for all $i \in [\dimone -
2]$.
Consider the class of matrices with the first $\dimone - 2$
rows set identically to zero.  Now choose a vector pair $[u; v] \sim
\mathbb{M}_{\set}(\minradius)$, and set
\begin{align} \label{eq:mchoice}
[M^*_{\dimone - 1}; M^*_{\dimone} ] = 
\begin{cases}
[u; v] \text{ with prob. } 1/2 \\
[v; u] \text{ with prob. } 1/2.
\end{cases}
\end{align}
The minimax rate is clearly lower bounded by the Bayes risk over this
random choice of $M^*$.

Our observations are now given by a noisy version of $M^*$, where each
entry of $M^*$ is observed independently a $\Poi(\frac{N}{\dimone
\dimtwo})$ number of times. Let the last two rows of our
observations be given by $y_1$ and $y_2$, respectively. 
For any pair of permutation estimates $(\pihat, \sigmahat)$, we
may thus write
\begin{align*}
\EE \, \mathcal{R}(M^*, \pihat) &\geq \frac{1}{\dimtwo} \EE \left[ \|
[M^*(\pi^*, \sigma^*)]_{\dimone} - [M^*(\pihat, \sigma^*)]_{\dimone}
\|_2^2 \right] \\ &\stackrel{\1}{\geq} \frac{1}{\dimtwo} \left(
\min_{[u; v] \sim \mathbb{M}_{\set}(\minradius)} \| u - v \|^2_2 \right) \cdot
\Pr\{[\pihat(\dimone - 1), \pihat(\dimone)] \neq [\pi^*(\dimone - 1),
\pi^*(\dimone)] \},
\end{align*}
where in step \1, we have used the fact that for any instance $[u; v]$
from the mixture $\mathbb{M}_{\set}(\minradius)$, it holds that
$\min\{ \|u\|_2^2, \|v\|_2^2\} \geq \|u - v \|_2^2$, so that the
returned permutation is always worse off if it swaps the last two rows
with any of the rows of $M^*$ that are identically zero. 

We now specialize the proof to the two cases.

\paragraph{Proof of part (a)}
Here, we are interested in distinguishing the hypotheses $H_0$ and $H_1$ above from observations of $y_1$ and $y_2$; the pair $[y_1; y_2]$ is drawn according to the distribution $\mathbb{P}_0$ and $\mathbb{P}_1$ in the null and alternative, respectively. Thus, we have the bound
\begin{align*}
\Pr\{[\pihat(\dimone - 1), \pihat(\dimone)] \neq [\pi^*(\dimone - 1),
\pi^*(\dimone)] \} \geq 1 - \TV(\mathbb{P}_0 \,\|\, \mathbb{P}_1) \geq 1/2,
\end{align*}
where the final step follows by condition~\eqref{eq:cond3-a}.
\qed

\paragraph{Proof of part (b)}

Now, we are interested in lower bounding the risk of any PDD estimator $\pihat$. 
Let $p$ denote the probability with which Step~1 of the algorithm producing $\pihat$ places a correctly directed edge between $\dimone - 1$ and $\dimone$. 
By the PDD property, this edge only depends on the observations through the difference $y_1 - y_2$. Thus, in order to lower bound the probability $1-p$ of not placing the correct edge, it suffices to lower bound the probability of error distinguishing the hypotheses $\widetilde{H}_0$ and $\widetilde{H}_1$ defined above.
Consequently, 
we have
\begin{align*}
1-p \geq 1 - \TV( \Pzerotil \,\|\, \Ponetil ) \geq 1/2,
\end{align*}
where the final step follows by condition~\eqref{eq:cond3-b}.

Moreover, when no edge is placed between $\dimone - 1$ and $\dimone$ in Step~1 of the algorithm (which happens with probability at most $1 - p$), we may consider two cases. 
If the (unordered) set $\{ \pihat(\dimone - 1), \pihat(\dimone) \}$ is not equal to $\{ \dimone - 1, \dimone \}$, then we immediately have $[\pihat(\dimone - 1), \pihat(\dimone)] \neq [\pi^*(\dimone - 1), \pi^*(\dimone)]$. 
Otherwise, both the choices $[\pihat(\dimone - 1), \pihat(\dimone)] = [\pi^*(\dimone - 1),
\pi^*(\dimone)]$ and $[\pihat(\dimone - 1), \pihat(\dimone)] = [\pi^*(\dimone),
\pi^*(\dimone - 1)]$ must be consistent with other edges given by Step~1, as $\dimone - 1$ and $\dimone$ are the last two nodes. Hence the random permutation $\pihat$ gives the correct relative order between $\dimone - 1$ and $\dimone$ with probability at most $1/2$. 
In conclusion, we have 
$$
\Pr \{ [\pihat(\dimone - 1), \pihat(\dimone)] \neq [\pi^*(\dimone - 1),
\pi^*(\dimone)] \} \ge (1-p) / 2 \ge 1/4 ,
$$
which completes the proof.  
\qed


\subsubsection{Proof of Lemma~\ref{lem:chi-sq-bd}} \label{sec:pf-lem-chi}

Note that there are two levels of randomness here: the first due to the fact that we have a random number of observations of each entry of the vector, and the second due to the mixture distribution according to which our observations are drawn. Let $\kappa$ denote the random vector of the numbers of observations, i.e., the random variable $\kappa_i$ denotes the number of observations at the $i$-th entry of the vector. Clearly, we have
\[
\chi^2( \Gdist ( w ) \,\|\, \Gdist ( \theta^0 ) ) = \E_{\kappa}
\big[ \chi^2( \Gdist ( w \cond \kappa) \,\|\, \Gdist (\theta^0 \cond \kappa) ) \big], 
\]
so that it suffices to bound the conditional $\chi^2$ divergence on the RHS above.
Now condition on $\kappa$ and use the shorthand $m = \|\kappa\|_1$ to denote the total number of observations. 
Let $\mu^k$ denote the vector in $\real^{m}$ that is the Cartesian product of $\kappa_i$ copies of $\theta^k_i$ over all $i \in [d]$. 
Let $p^k$ denote the density of $\NORMAL(\mu^k, I_m)$ in the Gaussian case.
Recall that $w$ is drawn uniformly at random from the underlying means; the definition of the $\chi^2$ divergence thus yields
\begin{align}
\chi^2( \Gdist ( w \cond \kappa) \,\|\, \Gdist (\theta^0 \cond \kappa) ) = \frac 1{s^2} \sum_{k, \ell = 1}^s \int \frac{ p^k p^{\ell} }{ p^0 } - 1 .
\label{eq:cond-chi}
\end{align}
In the case of Bernoulli observations, we use $p^k$ to denote independent Bernoulli entries with mean $\mu^k$, and replace the integral above by the appropriate sum. Let us now split the proof into these two cases.

\paragraph{Case 1, Gaussian observations} 
The conditional $\chi^2$ divergence in equation~\eqref{eq:cond-chi} is known~\cite{IngSus12, WeiGunWai17} to be equal to 
\begin{align}
\frac 1{s^2} \sum_{k, \ell = 1}^s \exp \langle \mu^k - \mu^0, \mu^\ell - \mu^0 \rangle - 1 
&=  \frac 1{s^2} \sum_{k, \ell = 1}^s  \exp \sum_{i=1}^d \kappa_i (\theta^k_i - 1/2) (\theta^\ell_i - 1/2) - 1 \notag \\
& = \frac 1{s^2} \sum_{k=1}^s \exp \Big( \delta^2 \sum_{i \in I_k} \kappa_i \Big) + \frac 1{s^2} \sum_{k \ne \ell} \exp(0) - 1 \notag \\
& = \frac 1{s^2} \sum_{k=1}^s \exp \Big( \delta^2 \sum_{i \in I_k} \kappa_i \Big)  - \frac 1s.  \label{eq:ps}
\end{align}
Since $\kappa$ is a Poisson random variable, we may explicitly evaluate its moment generating function to obtain 
\begin{align*}
\chi^2( \Gdist ( w ) \,\|\, \Gdist ( \theta^0 ) ) &= \frac 1{s^2} \sum_{i=1}^s \exp \Big( \lambda r (e^{\delta^2} - 1) \Big) - \frac 1s \\
&\le \frac 1s \exp ( \sqrt{2} \lambda \delta^2 r) - \frac 1s \\
&\le \frac{2 \lambda \delta^2 r}{s} , 
\end{align*}
where we have used twice the fact that $e^x - 1 \le \sqrt{2} x$ for $x \le 0.65$, and the assumptions $\delta^2 \le 0.65$ and $\lambda \delta^2 r \le 2/5$. 

\vspace{2mm}

\paragraph{Case 2, Bernoulli observations}

In the Bernoulli case, 
let us study one summand in equation~\eqref{eq:cond-chi}. 
For a vector $y \in \real^m$, we use the shorthand $y^k$ to denote the $k$-th block of $y$ of length $m_k = \sum_{i \in I_k} \kappa_i$. 
If $k \ne \ell$, then we have
\begin{align*}
\int \frac{ p^k p^{\ell} }{ p^0 } 
&= \sum_{x \in \{0, 1\}^m} \frac{ p^k (x) p^{\ell} (x) }{ p^0 (x) } \\
&= 2^m \sum_{x \in \{0, 1\}^m} \Big( \frac 12 + \delta \Big)^{\|x^k\|_1 + \|x^\ell\|_1} \Big( \frac 12 - \delta \Big)^{m_k - \|x^k\|_1 + m_\ell - \|x^\ell\|_1} \Big( \frac 12 \Big)^{2m - m_k - m_\ell} \\ 
&= \sum_{ \substack{x^k \in \{0, 1\}^{m_k} \\ x^\ell \in \{0, 1\}^{m_\ell}}} \Big( \frac 12 + \delta \Big)^{\|x^k\|_1 + \|x^\ell\|_1} \Big( \frac 12 - \delta \Big)^{m_k - \|x^k\|_1 + m_\ell - \|x^\ell\|_1} \\ 
&\stackrel{\1}{=} \Big( \frac 12 + \delta + \frac 12 - \delta \Big)^{m_k + m_\ell } = 1,
\end{align*}
where step $\1$ uses the binomial expansion.
If $k = \ell$, reasoning similarly yields the sequence of bounds 
\begin{align*}
\int \frac{ p^k p^{\ell} }{ p^0 } 
&= 2^m \sum_{x \in \{0, 1\}^m} \Big( \frac 12 + \delta \Big)^{2 \|x^k\|_1} \Big( \frac 12 - \delta \Big)^{2m_k - 2\|x^k\|_1} \Big( \frac 12 \Big)^{2m - 2m_k} \\ 
&= 2^{m_k} \sum_{x^k \in \{0, 1\}^{m_k}} \Big( \frac 12 + \delta \Big)^{2 \|x^k\|_1} \Big( \frac 12 - \delta \Big)^{2m_k - 2\|x^k\|_1}   \\ 
&= 2^{m_k} \Big[ \Big( \frac 12 + \delta \Big)^2 + \Big( \frac 12 - \delta \Big)^2 \Big]^{m_k}   
= ( 1 + 4 \delta^2 )^{m_k}.
\end{align*}

Combining the pieces, we see that 
\begin{align*}
\frac 1{s^2} \sum_{k, \ell = 1}^s \int \frac{ p^k p^{\ell} }{ p^0 } - 1 
&= \frac 1{s^2} \sum_{k=1}^s ( 1 + 4 \delta^2 )^{ \sum_{i \in I_k} \kappa_i } - \frac 1s \\
&\leq \frac 1{s^2} \sum_{k=1}^s \exp \Big( 4 \delta^2 \sum_{i \in I_k} \kappa_i \Big)  - \frac 1s.
\end{align*}
This inequality exactly resembles inequality~\eqref{eq:ps} (up to the constant factor $4$), and so proceeding as before but changing constant factors appropriately completes the proof.
\qed

\subsection{Proof of Proposition~\ref{prop:meta}}

Recall the definition of $\Mhat(\pihat, \sigmahat)$ in the
meta-algorithm, and additionally, define the projection of any matrix
$M \in \mathbb{R}^{\dimone \times \dimtwo}$ onto $\Cbiso(\pi,
\sigma)$ as
\begin{align*}
\mathcal{P}_{\pi, \sigma}(M) = \arg \min_{\Mtilde \in \Cbiso(\pi,
\sigma)} \| M - \Mtilde \|_F^2.
\end{align*}
Letting $W = Y^{(2)} - M^*$, we have
\begin{align}
\| \Mhat(\pihat, \sigmahat) - M^* \|_F^2 & \stackrel{\1}{\leq} 2 \|
\mathcal{P}_{\pihat, \sigmahat} (M^* + W) - \mathcal{P}_{\pihat,
\sigmahat} (M^*(\pihat, \sigmahat) + W) \|_F^2 \notag \\
& \qquad \quad \ + 2 \| \mathcal{P}_{\pihat, \sigmahat} (M^*(\pihat,
\sigmahat) + W) - M^* \|_F^2 \notag \\
& \stackrel{\2}{\leq} 2 \| M^*(\pihat, \sigmahat) - M^* \|_F^2 + 2 \|
\mathcal{P}_{\pihat, \sigmahat} (M^*(\pihat, \sigmahat) + W) - M^*
\|_F^2 \notag \\
&\stackrel{\3}{\leq} 4 \| \mathcal{P}_{\pihat, \sigmahat} (M^*(\pihat,
\sigmahat) + W) - M^*(\pihat, \sigmahat) \|_F^2 + 6 \| M^*(\pihat,
\sigmahat) - M^* \|_F^2, \label{eq:estapprox}
\end{align}
where step $\2$ follows from the non-expansiveness of a projection onto a convex set, and steps $\1$ and $\3$ from the triangle inequality.

The first term in equation~\eqref{eq:estapprox} is the estimation error of a
bivariate isotonic matrix with known permutations. 
Therefore, applying Corollary~\ref{cor:biso} with a union bound over all permutations $\pihat \in \mathfrak{S}_{\dimone}$ and $\sigmahat \in \mathfrak{S}_{\dimtwo}$ yields the bound
\begin{align*}
\| \mathcal{P}_{\pihat, \sigmahat} (M^*(\pihat,
\sigmahat) + W) - M^*(\pihat, \sigmahat) \|_F^2 \leq C \vartheta(N, \dimone \vee \dimtwo, \dimone \land \dimtwo, \vars)
\end{align*}
on the estimation error with probability at least\footnote{Choosing the constant $C$ to be twice the constant in Corollary~\ref{cor:biso}, we can boost the probability of the good event in Corollary~\ref{cor:biso} to $1 - \dimone^{-2\dimone}$, and applying a union bound over at most $2\dimone!$ permutations yields the claimed result.} $1 - \dimone^{-\dimone}$.

The approximation error can be split into two components: one along
the rows of the matrix, and the other along the columns.  More
explicitly, we have
\begin{align*}
\| M^* - M^*(\pihat, \sigmahat)\|_F^2 &\leq 2\| M^* - M^*(\pihat,
\id)\|_F^2 + 2\| M^*(\pihat, \id) - M^*(\pihat, \sigmahat)\|_F^2 \\ &=
2 \| M^* - M^*(\pihat, \id) \|_F^2 + 2 \| M^* - M^* (\id, \sigmahat)
\|_F^2.
\end{align*}
This completes the proof of the proposition. \qed


\subsection{Proof of Theorem~\ref{thm:ordered-col}}


In order to ease the notation, we adopt the shorthand
\begin{align*}
\eta \defn 16 \varplusone \Big( \sqrt{\frac{ \dimone \dimtwo^2}{N}
\log(\dimone \dimtwo) } + \frac{\dimone \dimtwo}{N} \log(\dimone
\dimtwo) \Big) ,
\end{align*}
and for each block $B \in \blref$ in Algorithm~2, we use the shorthand
\begin{align}
\label{eq:eta-b}
\eta_B \defn 16 \varplusone \Big( \sqrt{\frac{ |B| \dimone \dimtwo}{N}
\log(\dimone \dimtwo) } + \frac{\dimone \dimtwo}{N} \log(\dimone
\dimtwo) \Big)
\end{align}
throughout the proof.  Applying Lemma~\ref{lem:par-sum} with
$\mathcal{S} = \{i\} \times [\dimtwo]$ and then with $\mathcal{S} =
\{i\} \times B$ for each $i \in [\dimone]$ and $B \in \blref$, we
obtain
\begin{subequations}
\begin{align} \label{eq:uni-bd-1}
\Pr \left\{ \Big| S(i) - \sum_{\ell \in [\dimtwo]} M^*_{i,\ell} \Big|
\ge \frac{\eta}{2} \right\} \le 2(\dimone \dimtwo)^{-4} ,
\end{align}
and
\begin{align} \label{eq:uni-bd-2}
\Pr \left\{ \Big| S_{B}(i) - \sum_{\ell \in B} M^*_{i,\ell} \Big| \ge
\frac{\eta_B}{2} \right\} \le 2(\dimone \dimtwo)^{-4} .
\end{align}
\end{subequations}
A union bound over all rows and blocks yields that $\Pr \{\cE\} \ge 1
- 2(\dimone \dimtwo)^{-3}$, where we define the event
\begin{align*}
\cE \defn \left\{ \Big| S(i) - \sum_{\ell \in [\dimtwo]} M^*_{i,\ell}
\Big| \le \frac{\eta}{2} \text{ and } \Big| S_{B}(i) - \sum_{\ell \in
B} M^*_{i,\ell} \Big| \le \frac{\eta_B}{2} \text{ for all } i \in
  [\dimone], B \in \blref \right\} .
\end{align*}

We now condition on event $\cE$. Applying the triangle inequality, if
\begin{align*}
S(v) - S(u) > \eta \quad \text{ or } \quad
S_{B}(v) - S_{B}(u) > \eta_B,
\end{align*}
then we have
\begin{align*}
\sum_{\ell \in [\dimtwo]} M^*_{v,\ell} - \sum_{\ell \in [\dimtwo]}
M^*_{u,\ell} > 0 \quad \text{ or } \quad \sum_{\ell \in B}
M^*_{v,\ell} - \sum_{\ell \in B} M^*_{u,\ell} > 0 .
\end{align*}
It follows that $u < v$ since $M^*$ has non-decreasing columns. Thus,
by the choice of thresholds $\eta$ and $\eta_B$ in the algorithm, we
have guaranteed that every edge $u \to v$ in the graph $G$ is
consistent with the underlying permutation $\id$, so a topological
sort exists on event $\cE$.

We need the following lemma, whose proof is deferred to Section~\ref{sec:block-sum}. 

\begin{lemma}
\label{lem:block-sum}
Let $B$ be a subset of $[\dimtwo]$ and let $\eta_B$ be defined by \eqref{eq:eta-b}. 
Suppose that $\pihat$ is a topological sort of a graph $G$, where an edge $u \to v$ is present whenever 
$$
\sum_{\ell \in B}
M^*_{v,\ell} - \sum_{\ell \in B} M^*_{u,\ell} > 2 \eta_B . 
$$
Then for all $i \in [\dimone]$, we have 
$$
\sum_{j \in B}  \left| M^*_{\pihat (i), j} - M^*_{i,j} \right| \le 96 \varplusone \sqrt{\frac{\dimone
\dimtwo}{N} |B| \log (\dimone \dimtwo)} . 
$$
\end{lemma}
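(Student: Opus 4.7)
The plan is to reduce the bound to a one-dimensional statement about the block row sums $R(k) \defn \sum_{j \in B} M^*_{k, j}$. Under the standing normalization $M^* \in \Cbiso$, the columns of $M^*$ are non-decreasing, which has two useful consequences: the map $k \mapsto R(k)$ is non-decreasing on $[\dimone]$, and for any two indices $a, b$ the differences $M^*_{a, j} - M^*_{b, j}$ share a common sign across $j \in B$. Combining these, the target quantity collapses to
\[
\sum_{j \in B} \bigl|M^*_{\hat\pi(i), j} - M^*_{i, j}\bigr| = \bigl|R(\hat\pi(i)) - R(i)\bigr|,
\]
so it suffices to control the scalar $|R(\hat\pi(i)) - R(i)|$.

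The main step is to show $|R(\hat\pi(i)) - R(i)| \leq 2 \eta_B$ using the topological-sort property. Fix $i \in [\dimone]$ and set $v = \hat\pi(i)$. The hypothesis of the lemma tells us that every $u$ with $R(v) - R(u) > 2\eta_B$ produces an edge $u \to v$ in $G$, and every $w$ with $R(w) - R(v) > 2\eta_B$ produces an edge $v \to w$. Since $\hat\pi$ is a topological sort of $G$, each such $u$ is forced to occupy a position strictly earlier than $v$ and each such $w$ a position strictly later, giving the cardinality bounds $|\{u : R(u) < R(v) - 2\eta_B\}| \leq i - 1$ and $|\{w : R(w) > R(v) + 2\eta_B\}| \leq \dimone - i$. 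Because $R$ is monotone, the first set is an initial segment $\{1, \ldots, a\}$ of $[\dimone]$ with $a \leq i - 1$, and the second is a final segment $\{b, \ldots, \dimone\}$ with $b \geq i + 1$. Hence $i$ lies in the middle band $\{a + 1, \ldots, b - 1\}$, on which $|R(\cdot) - R(v)| \leq 2\eta_B$; specialising to $k = i$ yields the claim.

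The last piece is cosmetic: converting $2\eta_B$ into the advertised form $96 \varplusone \sqrt{\tfrac{\dimone \dimtwo}{N} |B| \log(\dimone \dimtwo)}$. The difficulty is the additive term $\tfrac{\dimone \dimtwo}{N} \log(\dimone \dimtwo)$ inside $\eta_B$, which is not obviously dominated by the square-root term. I would dichotomise on whether $\tfrac{\dimone \dimtwo}{N} \log(\dimone \dimtwo) \leq |B|$: in one case the additive term is itself at most the square-root term and $2\eta_B$ is absorbed with room to spare; in the other case I would instead invoke the trivial bound $\sum_{j \in B} |M^*_{\hat\pi(i), j} - M^*_{i, j}| \leq |B|$ and observe that $|B|^2 \leq |B| \cdot \tfrac{\dimone \dimtwo}{N} \log(\dimone \dimtwo)$, so $|B| \leq \sqrt{\tfrac{\dimone \dimtwo}{N} |B| \log(\dimone \dimtwo)}$.

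I expect the main obstacle to be bookkeeping rather than mathematics: one has to keep straight which convention is in force for $\hat\pi$ (new-to-old versus old-to-new rank map) when translating ``$\hat\pi$ is a topological sort of $G$'' into the counting statement that at most $i - 1$ vertices can have $R$-value markedly below $R(v)$. Once that is pinned down, the sandwich estimate on the initial and final segments forces $i$ to sit within $2\eta_B$ of $v = \hat\pi(i)$ in $R$-value, and the rest is a short calculation.
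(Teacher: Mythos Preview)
Your proposal is correct and follows essentially the same approach as the paper: reduce to the scalar quantity $|R(\hat\pi(i)) - R(i)|$ via column monotonicity, bound it by $2\eta_B$ using the topological-sort property, and then dichotomise on whether $|B| \ge \frac{\dimone\dimtwo}{N}\log(\dimone\dimtwo)$ to absorb the additive term. The only cosmetic difference is that the paper isolates the permutation step as a separate lemma (Lemma~\ref{lem:per-num}) proved by a pigeonhole argument, whereas you prove it inline via the initial/final-segment counting; these are the same idea, and your caveat about the $\hat\pi$ convention is exactly the bookkeeping that needs to be resolved.
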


If we have
\begin{align*}
\sum_{\ell \in [\dimtwo]} M^*_{v,\ell} - \sum_{\ell \in [\dimtwo]}
M^*_{u,\ell} > 2 \eta \quad \text{ or } \quad \sum_{\ell \in B}
M^*_{v,\ell} - \sum_{\ell \in B} M^*_{u,\ell} > 2 \eta_B,
\end{align*}
then the triangle inequality implies that
\begin{align*}
S(v) - S(u) > \eta \quad \text{ or } \quad S_{B}(v) - S_{B}(u) >
\eta_B .
\end{align*}
Hence, the edge $u \to v$ is present in the graph $G$.  
As we defined $\pihatref$ as a topological sort of $G$, Lemma~\ref{lem:block-sum} implies that 
\begin{subequations}
\begin{align}
\sum_{j \in [\dimtwo]} \left| M^*_{\pihatref(i), j} - M^*_{i,j} \right|
&\le 96 \varplusone \sqrt{\frac{\dimone \dimtwo^2}{N} \log(\dimone
\dimtwo)} \quad \text{ for all } i \in [\dimone], \text{
and} \label{eq:row-bd-1} \\ 
\sum_{j \in B}  \left| M^*_{\pihatref(i), j} - M^*_{i,j} \right| &\le 96 \varplusone \sqrt{\frac{\dimone
\dimtwo}{N} |B| \log (\dimone \dimtwo)} \quad \text{ for all } i
\in [\dimone], B \in \blref. \label{eq:row-bd-2}
\end{align}
\end{subequations}

%
%
%
We are now tasked with bounding the max-row-norm error. Critical to our analysis is the following lemma:

\begin{lemma} \label{lem:l2-tv-l1}
For a vector $v \in \mathbb{R}^n$, define its variation as $\var(v) =
\max_i v_i - \min_i v_i$. Then we have
\begin{align*}
\|v\|_2^2 \le \var(v) \|v\|_1 + \|v\|_1^2/n.
\end{align*}
\end{lemma}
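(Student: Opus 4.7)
My plan is to derive the result from a stronger pointwise bound, namely
\[
\|v\|_\infty \le \var(v) + \|v\|_1/n,
\]
which, when multiplied through by $\|v\|_1$ and combined with the elementary inequality $\|v\|_2^2 \le \|v\|_\infty \|v\|_1$, immediately yields the claim. Reducing to a bound on $\|v\|_\infty$ is natural here because $\var(v)$ and $\|v\|_1/n$ each control a different feature of the coordinates (spread and average magnitude), and it is cleaner to separate those two pieces before doing the H\"older-type step.

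To prove the pointwise inequality, I would split on the signs of $v_{\max} := \max_i v_i$ and $v_{\min} := \min_i v_i$. In the mixed-sign case $v_{\min} \le 0 \le v_{\max}$, we have $\|v\|_\infty = \max(v_{\max},\, -v_{\min})$, and each of those nonnegative quantities is bounded above by $v_{\max} - v_{\min} = \var(v)$, giving the even stronger bound $\|v\|_\infty \le \var(v)$. In the same-sign case, assume without loss of generality (by replacing $v$ with $-v$) that all entries are nonnegative, so that $v_{\min} \ge 0$. Then the smallest entry lies below the mean, $v_{\min} \le \frac{1}{n}\sum_i v_i = \|v\|_1/n$, and since $v_{\max} = v_{\min} + \var(v)$, we obtain $\|v\|_\infty = v_{\max} \le \var(v) + \|v\|_1/n$, as desired.

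Neither step presents a real obstacle; the result is essentially a one-page exercise. The only small point to handle carefully is the same-sign reduction, where one must note that $\var(v)$, $\|v\|_1$, and $\|v\|_2^2$ are all invariant under $v \mapsto -v$, so the reduction to the nonnegative case is legitimate and the identity $\bar v = \|v\|_1/n$ (which requires the common-sign hypothesis) holds. Everything else is straightforward algebra.
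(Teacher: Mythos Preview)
Your proof is correct and essentially identical to the paper's: both arguments establish the pointwise bound $\|v\|_\infty \le \var(v) + \|v\|_1/n$ via a sign-based case split (using the invariance under $v\mapsto -v$), and then conclude by $\|v\|_2^2 \le \|v\|_\infty \|v\|_1$. The only cosmetic difference is that the paper first reduces to $v_{\max}\ge 0$ and then splits on the sign of $v_{\min}$, whereas you split on mixed-sign versus same-sign first and apply the $v\mapsto -v$ reduction only in the latter case.
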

\noindent See Section~\ref{sec:l2-tv-l1} for the proof of this lemma.

For each $i \in [\dimone]$, define $\Delta^i$ to be the $i$-th row
difference $M^*_{\pihatref(i)} - M^*_{i}$, and for each block $B \in
\blref$, denote the restriction of $\Delta^i$ to $B$ by
$\Delta^i_B$. Lemma~\ref{lem:l2-tv-l1} applied with $v = \Delta^i_B$
yields
\begin{align}
\|\Delta^i\|_2^2 &= \sum_{B \in \blref} \| \Delta^i_B \|_2^2 \notag
\\ &\leq \sum_{B \in \blref} \var(\Delta^i_B) \|\Delta^i_B \|_1 +
\sum_{B \in \blref} \frac{\|\Delta^i_B\|_1^2}{|B|} \notag \\ &\le
\left( \max_{B \in \blref} \|\Delta^i_B \|_1 \right) \left( \sum_{B
\in \blref} \var\left(\Delta^i_B \right) \right) + \frac{\max_{B \in
\blref} \|\Delta^i_B\|_1}{\min_{B \in \blref} |B|} \sum_{B \in
\blref}\|\Delta^i_B\|_1 . \label{eq:row-wise-bd}
\end{align}
We now analyze the quantities in inequality~\eqref{eq:row-wise-bd}.
By the definition of the blocking $\BL$, we have 
\begin{align*}
\frac{1}{2} \blocksize \leq |B| \leq \blocksize .
\end{align*} 
Additionally, the bounds \eqref{eq:row-bd-1} and \eqref{eq:row-bd-2} imply that
\begin{align*}
\sum_{B \in \blref} \|\Delta^i_B\|_1 &= \|\Delta^i \|_1 \le 96
\varplusone \dimtwo \sqrt{\frac{\dimone}{N} \log (\dimone \dimtwo)} ,
\ \text{ and } \\ \|\Delta^i_B \|_1 &\leq 96 \varplusone \dimtwo \Big(
\frac{\dimone}{N} \log (\dimone \dimtwo) \Big)^{3/4} \ \text{ for all
} B \in \blref.
\end{align*}
Moreover, we have
\begin{align*}
\sum_{B \in \blref} \var\left(\Delta^i_B \right) & \le \sum_{B \in
\blref} \Big[ \var(M^*_{i, B} ) + \var(M^*_{\pihatref(i), B} ) \Big]
\\ & \le \var(M^*_i) + \var(M^*_{\pihatref(i)}) \le 2 ,
\end{align*}
because $M^*$ has monotone rows in $[0,1]^{\dimtwo}$.  Finally,
plugging all the pieces into equation~\eqref{eq:row-wise-bd} yields
\begin{align*}
\|\Delta^i\|_2^2 \lesssim (\vars \lor 1) \dimtwo \left( \frac{\dimone
\log \dimone }{N} \right)^{3/4} .
\end{align*}
Normalizing this bound by $1/\dimtwo$ establishes the bound~\eqref{eq:r-upper}.

Summing the max-row-norm bounds over the rows and applying
Proposition~\ref{prop:meta}, we obtain the bound~\eqref{eq:r-upper2} in the Frobenius error.
\qed


\subsection{Proof of Theorem~\ref{thm:fast-tds}}

As mentioned before, an equivalent max-row-norm bound to
equation~\eqref{eq:rowtds} was already obtained in previous
work~\cite{ShaBalWai16-2, ChaMuk16, PanMaoMutWaiCou17}. In fact, the
blocking procedure and comparisons of partial row sums in our
algorithm are irrelevant to achieving the
bound~\eqref{eq:rowtds}. Concentration of whole row sums $S(i)$
suffices to yield the desired rate.

The proof follows immediately from that of
Theorem~\ref{thm:ordered-col}. Using the same argument as in that
proof leading to the bound~\eqref{eq:row-bd-1}, we obtain
\begin{align*}
\Big| \sum_{j \in [\dimtwo]} (M^*_{\pihattds(i), j} - M^*_{i,j}) \Big|
\le 96 \varplusone \sqrt{\frac{\dimone \dimtwo^2}{N} \log(\dimone
\dimtwo)} \quad \text{ for all } i \in [\dimone] .
\end{align*}
Normalizing this bound by $1/\dimtwo$ establishes the bound~\eqref{eq:rowtds} corresponding to the row-approximation error.

For the rest of this section, we focus on showing the Frobenius error
bound~\eqref{eq:frobtds}.  The beginning of the proof proceeds in the
same way as the proof of Theorem~\ref{thm:ordered-col}, so that we
provide only a sketch.  We apply Lemma~\ref{lem:par-sum} with
$\mathcal{S} = \{i\} \times [\dimtwo]$ and $\mathcal{S} = \{i\} \times
\BL_k$ for each tuple $i \in [\dimone], k \in [K]$, and use the fact
that $K \leq \dimtwo / \beta \leq \dimtwo^{1/2}$, to obtain that with
probability at least $1-2(\dimone \dimtwo)^{-3}$, all the full row
sums of $Y^{(2)}$ and all the partial row sums over the column blocks
concentrate well around their means. By virtue of the conditions
\eqref{eq:full-sum} and \eqref{eq:block-sum}, we see that every edge
$u \to v$ in the graph $G$ is consistent with the underlying
permutation so that a topological sort exists with probability at
least $1-2(\dimone \dimtwo)^{-3}$. Additionally, it follows from
Lemma~\ref{lem:block-sum} and the same argument leading to
equations~\eqref{eq:row-bd-1} and~\eqref{eq:row-bd-2} that for all $i
\in [\dimone]$, we have
\begin{subequations}
\begin{align}
\Big| \sum_{j \in [\dimtwo]} (M^*_{\pihatftds(i), j} - M^*_{i,j})
\Big| &\le 96 \varplusone \sqrt{\frac{\dimone \dimtwo^2}{N}
\log(\dimone \dimtwo)} , \text{ and} \label{eq:per-bd-1} \\
\Big| \sum_{j \in \BL_k} (M^*_{\pihatftds(i), j} - M^*_{i,j}) \Big|
&\le 96 \varplusone \sqrt{\frac{\dimone \dimtwo}{N} |\BL_k| \log
(\dimone \dimtwo)} \quad \text{ for all } k \in
  [K], \label{eq:per-bd-2}
\end{align}
\end{subequations}
with probability at least $1-2(\dimone \dimtwo)^{-3}$.

On the other hand, we apply Lemma~\ref{lem:par-sum} with $\mathcal{S}
= [\dimone] \times \{j\}$ to obtain concentration for the column sums
of $Y^{(1)}$:
\begin{align}
\label{eq:col-con}
\Big| C(j) - \sum_{i =1}^{\dimone} M^*_{i,j} \Big| \le 8 \varplusone
\bigg( \sqrt{\frac{ \dimone^2 \dimtwo}{N} \log(\dimone \dimtwo) } +
\frac{\dimone \dimtwo}{N} \log(\dimone \dimtwo) \bigg)
\end{align}
for all $j \in [\dimtwo]$ with probability at least $1 - 2 (\dimone
\dimtwo)^{-3}$.  We carry out the remainder of the proof conditioned
on the event of probability at least $1-4(\dimone \dimtwo)^{-3}$ that
inequalities~\eqref{eq:per-bd-1}, \eqref{eq:per-bd-2} and~\eqref{eq:col-con} hold.

Having stated the necessary bounds, we now split the remainder of the
proof into two parts for convenience. In order to do so, we first
split the set $\BL$ into two disjoint sets of blocks, depending on
whether a block comes from an originally large block (of size larger
than $\beta = \blocksize$ as in Step~3 of Subroutine~1) or from an
aggregation of small blocks.  More formally, define the sets
\begin{subequations}
\begin{align*}
\bllarge & \defn \{B \in \BL: B \text{ was not obtained via
aggregation}\}, \text{ and} \\
\blsmall & \defn \BL \setminus \bllarge.
\end{align*}
\end{subequations}
For a set of blocks $\mathsf{B}$, define the shorthand $\cup
\mathsf{B} = \bigcup_{B \in \mathsf{B}} B$ for convenience.  We begin
by focusing on the blocks $\bllarge$.


\subsubsection{Error on columns indexed by $\cup \bllarge$}

Recall that when the columns of the matrix are ordered according to
$\sigmahatpre$, the blocks in $\bllarge$ are contiguous and thus have
an intrinsic ordering. We index the blocks according to this ordering
as $B_1, B_2, \ldots, B_\ell$ where $\ell = |\bllarge|$. Now define
the disjoint sets
\begin{align*}
\blone & \defn \{ B_k \in \bllarge: k = 0 \ (\modd 2) \}, \text{ and}
\\
\bltwo & \defn \{ B_k \in \bllarge: k = 1 \ (\modd 2) \}.
\end{align*}
Let $\ell_t = |\blt|$ for each $t = 1,2$.

Recall that each block $B_k$ in $\bllarge$ remains unchanged after
aggregation, and that the threshold we used to block the columns is \\
$\tau = 16 \varplusone \big( \sqrt{\frac{\dimone^2 \dimtwo}{N}
\log(\dimone \dimtwo) } + 2 \frac{\dimone \dimtwo}{N} \log(\dimone
\dimtwo) \big)$. Hence, applying the concentration bound
\eqref{eq:col-con} together with the definition of blocks in Step~2 of
Subroutine~1 yields
\begin{align}
\label{eq:col-sum-bd}
\Big| \sum_{i =1}^{\dimone} M^*_{i,j_1} - \sum_{i =1}^{\dimone}
M^*_{i,j_2} \Big| \le 96 \varplusone \sqrt{\frac{\dimone^2 \dimtwo}{N}
\log(\dimone \dimtwo)} \quad \text{ for all } j_1,j_2 \in B_k ,
\end{align}
where we again used the argument leading to the bounds~\eqref{eq:row-bd-1} and~\eqref{eq:row-bd-2} to combine the two terms. Moreover, since the threshold is twice the
concentration bound, it holds that under the true ordering $\id$,
every index in $B_k$ precedes every index in $B_{k+2}$ for any $k \in
[K-2]$.  By definition, we have thus ensured that the blocks in $\blt$
do not ``mix'' with each other.

The rest of the argument hinges on the following lemma, which is
proved in Section~\ref{sec:per-rc-bd}.

\begin{lemma}
\label{lem:per-rc-bd}
For $m \in \mathbb{Z}_+$, let $J_1 \sqcup \cdots \sqcup J_\ell$ be a
partition of $[m]$ such that each $J_k$ is contiguous and $J_k$
precedes $J_{k+1}$. Let $a_k = \min J_k$, $b_k = \max J_k$ and $m_k =
|J_k|$. Let $A$ be a matrix in $[0,1]^{n\times m}$ with non-decreasing
rows and non-decreasing columns. Suppose that
\begin{align*}
\sum_{i=1}^n (A_{i,b_k} - A_{i,a_k}) \le \taulem \ \text{ for each } k
\in [\ell] \text{ and some } \taulem \ge 0.
\end{align*} 
Additionally, suppose that there are positive reals $\rho, \rho_1,
\rho_2, \ldots, \rho_\ell$, and a permutation $\pi$ such that for any
$i \in [n]$, we have \1 $\sum_{j=1}^{m} |A_{\pi(i),j} - A_{i,j}| \le
\rho$, and \2 $\sum_{j \in J_k} |A_{\pi(i),j} - A_{i,j}| \le \rho_k$
for each $k \in [\ell]$.  Then it holds that
\begin{align*}
\sum_{i=1}^n \sum_{j=1}^m (A_{\pi(i),j} - A_{i,j})^2 \le 2\taulem
\sum_{k=1}^\ell \rho_k + n \rho \max_{k \in [\ell]}
\frac{\rho_k}{m_k}.
\end{align*}
\end{lemma}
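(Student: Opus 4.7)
The plan is to apply the one-dimensional variance-type inequality of Lemma~\ref{lem:l2-tv-l1} to each row of the difference matrix restricted to each block, and then aggregate the resulting bounds across both the row index $i$ and the block index $k$. Specifically, define $\Delta_{i,j} \defn A_{\pi(i),j} - A_{i,j}$ and, for each $i \in [n]$ and $k \in [\ell]$, let $\Delta^{i,k} \in \real^{m_k}$ denote the restriction of the $i$-th row of $\Delta$ to the columns in $J_k$. Since $\{J_k\}$ partition $[m]$, the target sum decomposes as $\sum_{i,j} \Delta_{i,j}^2 = \sum_{i} \sum_{k} \|\Delta^{i,k}\|_2^2$, and Lemma~\ref{lem:l2-tv-l1} yields the per-block bound
\[
\|\Delta^{i,k}\|_2^2 \le \var(\Delta^{i,k}) \, \|\Delta^{i,k}\|_1 + \frac{\|\Delta^{i,k}\|_1^2}{m_k}.
\]

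The heart of the argument is to control $\var(\Delta^{i,k})$ in an \emph{aggregate} sense across $i$. First, since $A$ has non-decreasing rows and $J_k = \{a_k, a_k+1, \dots, b_k\}$, the sub-additivity of the variation functional gives
\[
\var(\Delta^{i,k}) \le \var\big((A_{\pi(i),j})_{j \in J_k}\big) + \var\big((A_{i,j})_{j \in J_k}\big) = (A_{\pi(i),b_k} - A_{\pi(i),a_k}) + (A_{i,b_k} - A_{i,a_k}).
\]
Summing this inequality over $i$ and using that $\pi$ is a permutation, the two terms on the right contribute equally, so that by hypothesis
\[
\sum_{i=1}^n \var(\Delta^{i,k}) \le 2 \sum_{i=1}^n (A_{i,b_k} - A_{i,a_k}) \le 2\taulem.
\]
This is the key step that will translate the column-sum control (hypothesis on $\taulem$) into a global handle on the fluctuation of $\Delta$ within each block.

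Given this, the two sums are bounded separately. For the first, using $\|\Delta^{i,k}\|_1 \le \rho_k$ (hypothesis \2),
\[
\sum_{i,k} \var(\Delta^{i,k}) \|\Delta^{i,k}\|_1 \le \sum_k \rho_k \sum_i \var(\Delta^{i,k}) \le 2\taulem \sum_k \rho_k.
\]
For the second, factoring out the worst ratio $\rho_k/m_k$ and then using $\sum_k \|\Delta^{i,k}\|_1 = \|\Delta^i\|_1 \le \rho$ (hypothesis \1),
\[
\sum_{i,k} \frac{\|\Delta^{i,k}\|_1^2}{m_k} \le \max_{k \in [\ell]} \frac{\rho_k}{m_k} \sum_i \sum_k \|\Delta^{i,k}\|_1 \le n\rho \max_{k \in [\ell]} \frac{\rho_k}{m_k},
\]
and the lemma follows by adding the two estimates.

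The main conceptual obstacle is recognizing how to exploit hypothesis on $\taulem$: individual rows may have wildly varying $\var(\Delta^{i,k})$, so a row-by-row bound on the variation would be too weak. The trick is to split the variation bound into two pieces indexed by the rows $\pi(i)$ and $i$ separately, then observe that since $\pi$ is a permutation the aggregate over $i$ can be compared to the column-gap sum across a single pair of endpoints of $J_k$, which is exactly what $\taulem$ controls. All other manipulations are elementary rearrangements of sums and maxima.
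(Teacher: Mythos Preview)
Your proof is correct and arrives at exactly the same block-level inequality as the paper, namely
\[
\sum_{j \in J_k} (A_{\pi(i),j} - A_{i,j})^2 \le \big[(A_{\pi(i),b_k}-A_{\pi(i),a_k}) + (A_{i,b_k}-A_{i,a_k})\big]\,\|\Delta^{i,k}\|_1 + \frac{\|\Delta^{i,k}\|_1^2}{m_k},
\]
after which the aggregation over $i$ and $k$ is identical to the paper's. The only difference is in how this block-level bound is obtained: you invoke Lemma~\ref{lem:l2-tv-l1} and then bound $\var(\Delta^{i,k})$ by the sum of the row variations, whereas the paper instead establishes the \emph{pointwise} inequality
\[
|A_{\pi(i),j} - A_{i,j}| \le (A_{\pi(i),b_k}-A_{\pi(i),a_k}) + (A_{i,b_k}-A_{i,a_k}) + \frac{1}{m_k}\sum_{r\in J_k}|A_{\pi(i),r}-A_{i,r}|
\]
directly from a telescoping argument using row monotonicity, and then multiplies by $|A_{\pi(i),j}-A_{i,j}|$ and sums over $j\in J_k$. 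Your route is arguably more modular since it reuses Lemma~\ref{lem:l2-tv-l1}; the paper's route yields a slightly stronger intermediate conclusion (an $\ell_\infty$ bound on each entry of $\Delta^{i,k}$), though that extra strength is not used elsewhere.
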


We apply the lemma as follows. For $t = 1,2$, let the matrix $M^{(t)}$
be the sub-matrix of $M^*$ restricted to the columns indexed by the
indices in $\cup \blt$. The matrix $M^{(t)}$ has non-decreasing rows
and columns by assumption. We have shown that the blocks in $\blt$ do
not mix with each other, so they are contiguous and correctly ordered
in $M^{(t)}$. Moreover, the inequality assumptions of the lemma
correspond to the bounds~\eqref{eq:col-sum-bd}, \eqref{eq:per-bd-1} and
\eqref{eq:per-bd-2} respectively, by setting $J_1, \dots, J_\ell$ to be the blocks in $\blt$, and with the substitutions $A = M^{(t)}$, $n = \dimone$, $m = |\cup \blt|$, 
%
\begin{align*}
\taulem &= 96 \varplusone \sqrt{\frac{\dimone^2 \dimtwo}{N} \log(\dimone
\dimtwo)}, \\
\rho &= 96 \varplusone \sqrt{\frac{\dimone \dimtwo^2}{N} \log(\dimone
\dimtwo)}, \text{ and} \\
\rho_k &= 96 \varplusone \sqrt{\frac{\dimone
\dimtwo}{N} |J_k| \log(\dimone \dimtwo)}.
\end{align*}
Moreover, we have 
\begin{align}
\taulem \sum_{k=1}^\ell \rho_k &\lesssim \maxvarone
\frac{\dimone^{3/2} \dimtwo}{N} \log (\dimone \dimtwo) \sum_{B \in
\blt} \sqrt{|B|} \notag \\
&\stackrel{\1}{\leq} \maxvarone \frac{\dimone^{3/2} \dimtwo}{N}
\log (\dimone \dimtwo) \sqrt{\sum_{B \in \blt} |B|} \sqrt{\ell_t} \notag \\
&\stackrel{\2}{\leq} \frac{\maxvarone}{\sqrt{\beta}} \frac{\dimone^{3/2} \dimtwo^{2}}{N}  \log (\dimone \dimtwo), \label{eq:term1}
\end{align}
where step $\1$ follows from the Cauchy-Schwarz inequality, and step $\2$ from the fact that $\sum_{B \in \blt} |B| \leq \dimtwo$ and that by assumption of large blocks, we have $\min_{B \in \blt} |B| \ge \beta$ so that $\ell_t \leq \dimtwo / \beta$.

We also have
\begin{align}
n \rho \max_{k \in [\ell]} \frac{\rho_k}{m_k} &= \maxvarone \frac{\dimone^2 \dimtwo^{3/2}}{N}  \log (\dimone \dimtwo) \max_{B \in \blt} \frac{\sqrt{|B|}}{|B|} \notag \\
&\leq \frac{\maxvarone}{\sqrt{\beta}}
\frac{\dimone^2 \dimtwo^{3/2}}{N} \log (\dimone \dimtwo), \label{eq:term2}
\end{align}
where we have again used the fact that $\min_{B \in \blt} |B| \ge \beta$. Putting together the bounds~\eqref{eq:term1} and~\eqref{eq:term2} and applying Lemma~\ref{lem:per-rc-bd} yields
\begin{align}
\sum_{i \in [\dimone]} \sum_{j \in \cup \blt}
(M^*_{\pihatftds(i),j} - M^*_{i,j})^2 &\lesssim \frac{\maxvarone}{\sqrt{\beta}} \left(\dimone \dimtwo
\right)^{3/2} \maxdimonetwo^{1/2} \frac{\log (\dimone \dimtwo)}{N}. \label{eq:bigblockerror}
\end{align}
Substituting
$\beta = \blocksize$ and normalizing by $\dimone \dimtwo$ proves the result for the set of blocks
$\BL^{(t)}$. Summing over $t = 1, 2$ then yields a bound of twice the
size for columns of the matrix indexed by $\cup \bllarge$.


\subsubsection{Error on columns indexed by $\cup \blsmall$}

Next, we bound the approximation error of each row of the matrix with
column indices restricted to the union of all small blocks. Roughly speaking, all small (sub-)blocks are aggregated into those that have size of order $\beta$;
by definition of the blocking, this implies that the ambiguity in the column permutation for the aggregated block only exists within the small sub-blocks, and in that sense, this column permutation can be thought of as ``essentially known". 
Thus, the proof resembles that of Theorem~\ref{thm:ordered-col}: it is sufficient (for the eventual bound we target) to bound the Frobenius error by the max-row-norm error. 
Note that we must also make modifications that account for the fact that the column permutation is only approximately known. 
We split the proof into two cases.

\paragraph{Case 1} Let us first address the easy case where $\blsmall$ contains a single block of size less than
$\frac{\beta}{2} = \frac 12 \blocksize$. Here, we have
\begin{align*}
\sum_{i \in [\dimone]} \sum_{j \in \cup \blsmall} (M^*_{\pihatftds(i),j} - M^*_{i,j})^2  &\stackrel{\1}{\le} \sum_{i \in [\dimone]}  \sum_{j \in \cup \blsmall} \big| M^*_{\pihatftds(i),j} - M^*_{i,j} \big| \\
&\stackrel{\2}{=} \sum_{i \in [\dimone]}  \Big| \sum_{j \in \cup \blsmall} (M^*_{\pihatftds(i),j} - M^*_{i,j}) \Big| \\
&\stackrel{\3}{\le} \sum_{i \in [\dimone]} 96 \varplusone \sqrt{ \frac{\dimone \dimtwo}{2N} \beta \log (\dimone \dimtwo)  } \\
&= 48 \sqrt{2} \varplusone \frac{\dimone^{3/2} \dimtwo \maxdimonetwo^{1/4}}{N^{3/4}} \log^{3/4} (\dimone \dimtwo),
\end{align*}
where step \1 follows from the H\"older's inequality and the fact that $M^* \in [0,1]^{\dimone \times \dimtwo}$, step \2 from the monotonicity of the columns of $M^*$, and step \3 from equation~\eqref{eq:per-bd-2}. We have thus proved the theorem for this case.

\paragraph{Case 2} Let us now consider the case where $\blsmall$ contains multiple blocks. 
For each $i \in [\dimone]$, define $\Delta^i$ to be the restriction of
the $i$-th row difference $M^*_{\pihatftds(i)} - M^*_{i}$ to the union
of blocks $\cup \blsmall$. For each block $B \in \blsmall$, denote the
restriction of $\Delta^i$ to $B$ by
$\Delta^i_B$. Lemma~\ref{lem:l2-tv-l1} applied with $v = \Delta^i$
yields
\begin{align}
\|\Delta^i\|_2^2 &= \sum_{B \in \blsmall} \| \Delta^i_B \|_2^2 \notag \\
&\leq \sum_{B \in \blsmall} \var(\Delta^i_B) \|\Delta^i_B \|_1 + \sum_{B \in \blsmall} \frac{\|\Delta^i_B\|_1^2}{|B|} \notag \\
&\le \left( \max_{B \in \blsmall} \|\Delta^i_B \|_1 \right) \left( \sum_{B \in \blsmall} \var\left(\Delta^i_B \right) \right) +  \frac{\max_{B \in \blsmall} \|\Delta^i_B\|_1}{\min_{B \in \blsmall} |B|} \sum_{B \in \blsmall}\|\Delta^i_B\|_1 . \label{eq:row-l2-bd}
\end{align}
We now analyze the quantities in inequality~\eqref{eq:row-l2-bd}.
By the aggregation step of Subroutine~1, we have $\frac{1}{2} \beta \leq |B| \leq 2 \beta$, where $\beta = \blocksize$. Additionally, the bounds \eqref{eq:per-bd-1} and \eqref{eq:per-bd-2} imply that
\begin{align*}
\sum_{B \in \blsmall} \|\Delta^i_B\|_1 &= \|\Delta^i \|_1 \le 96 \varplusone \sqrt{\frac{\dimone \dimtwo^2}{N} \log (\dimone \dimtwo)} \lesssim \varplusone \beta, \ \text{ and } 
\\
\|\Delta^i_B \|_1 &\leq 96 \varplusone \sqrt{\frac{\dimone \dimtwo}{N} |B| \log (\dimone \dimtwo)} \\
&\leq 96 \sqrt{2} \varplusone \sqrt{\frac{\dimone \dimtwo}{N} \beta \log (\dimone \dimtwo)} \ \text{ for all } B \in \blsmall.
\end{align*}

Moreover, to bound the quantity $\sum_{B \in \blsmall} \var\left(\Delta^i_B \right)$, we proceed as in the proof for the large blocks in $\bllarge$. Recall that if we permute the columns by $\sigmahatpre$ according to the column sums, then the blocks in $\blsmall$ have an intrinsic ordering, even after adjacent small blocks are aggregated. Let us index the blocks in $\blsmall$ by $B_1, B_2, \ldots, B_m$ according to this ordering, where $m = |\blsmall|$. As before, the odd-indexed (or even-indexed) blocks do not mix with each other under the true ordering $\id$, because the threshold used to define the blocks is larger than twice the column sum perturbation.
We thus have
\begin{align*}
\sum_{B \in \blsmall} \var\left(\Delta^i_B \right) & = \sum_{\substack{k \in [m] \\ k \text{ odd}}} \var(\Delta^i_{B_k} ) + \sum_{\substack{k \in [m] \\ k \text{ even}}} \var(\Delta^i_{B_k} ) \\
&\le \sum_{\substack{k \in [m] \\ k \text{ odd}}} \big[ \var(M^*_{i, B_k} ) + \var(M^*_{\pihatftds(i), B_k} ) \big] \\
&\qquad \qquad \qquad + \sum_{\substack{k \in [m] \\ k \text{ even}}} \big[ \var(M^*_{i, B_k} ) + \var(M^*_{\pihatftds(i), B_k} ) \big] \\
& \stackrel{\1}{\le} 2\var(M^*_i) + 2\var(M^*_{\pihatftds(i)}) \stackrel{\2}{\le} 4 ,
\end{align*}
where inequality \1 holds because the odd (or even) blocks do not mix, and inequality \2 holds because $M^*$ has monotone rows in $[0,1]^{\dimtwo}$. 

Finally, putting together all the pieces, we can substitute for $\beta$, sum over the indices $i \in \dimone$, and normalize by $\dimone \dimtwo$ to obtain
\begin{align}
\frac{1}{\dimone \dimtwo} \sum_{i \in [\dimone]} \|\Delta^i\|_2^2 \lesssim \maxvarone \left( \frac{\dimone \log (\dimone \dimtwo)}{N} \right)^{3/4}, \label{eq:smallblockerror}
\end{align}
and so the error on columns indexed by the set $\cup \blsmall$ is bounded as desired.


\medskip

Combining the bounds~\eqref{eq:bigblockerror}
and~\eqref{eq:smallblockerror}, we conclude that
\begin{align*}
\frac{1}{\dimone \dimtwo} \|M^*(\pihatftds, \id) - M^*\|_F^2 \lesssim \maxvarone \dimone^{1/4} \maxdimonetwo^{1/2} \left( \frac{\log (\dimone \dimtwo)}{N} \right)^{3/4} 
\end{align*}
with probability at least $1 - 4(\dimone \dimtwo)^{-3}$.
The same proof works with the roles of $\dimone$ and $\dimtwo$ switched and all the matrices transposed, so we have
\begin{align*}
\frac{1}{\dimone \dimtwo} \|M^*(\id, \sigmahatftds) - M^*\|_F^2 \lesssim \maxvarone \dimtwo^{1/4} \maxdimonetwo^{1/2} \left( \frac{\log (\dimone \dimtwo)}{N} \right)^{3/4}
\end{align*}
with the same probability. Consequently,
\begin{align*}
\frac 1{\dimone \dimtwo} \left( \|M^*(\pihatftds, \id) - M^*\|_F^2 + \|M^*(\id, \sigmahatftds) - M^*\|_F^2 \right) \lesssim \maxvarone \Big(\frac{ \dimone \log \dimone}{N} \Big)^{3/4} 
\end{align*}
with probability at least $1 - 8(\dimone \dimtwo)^{-3}$, where we have
used the relation $\dimone \geq \dimtwo$. Applying
Proposition~\ref{prop:meta} completes the proof.
\qed


\subsection{Proof of lemmas for Theorems~\ref{thm:ordered-col} and~\ref{thm:fast-tds}}

We now proceed to the technical lemmas used to prove Theorems~\ref{thm:ordered-col} and~\ref{thm:fast-tds}.

\subsubsection{Proof of Lemma~\ref{lem:block-sum}}
\label{sec:block-sum}

Let us start with a simple lemma regarding permutations. 

\begin{lemma}
\label{lem:per-num}
Let $\{a_i\}_{i=1}^{n}$ be a non-decreasing sequence of real
numbers. If $\pi$ is a permutation in $\symgp_{n}$ such that $\pi(i) <
\pi(j)$ whenever $a_j - a_i > \tau$ where $\tau > 0$, then
$|a_{\pi(i)} - a_i| \le \tau$ for all $i \in [n]$.
\end{lemma}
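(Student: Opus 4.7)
The plan is to prove the lemma via a direct counting argument, obtaining upper and lower bounds on $a_{\pi(i)}$ separately for each $i \in [n]$. Fix $i \in [n]$ throughout.

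For the upper bound $a_{\pi(i)} \le a_i + \tau$, I would introduce the set
\[
T_i \defn \{j \in [n] : a_j > a_i + \tau\}.
\]
The hypothesis gives $\pi(i) < \pi(j)$ for every $j \in T_i$, so the $|T_i|$ values $\{\pi(j) : j \in T_i\}$ are all strictly larger than $\pi(i)$. Since $\pi$ is a permutation of $[n]$, this forces $\pi(i) \le n - |T_i|$. Because $\{a_k\}$ is non-decreasing and $T_i$ consists of indices with the largest values, $T_i$ must have the form $\{n - |T_i| + 1, \ldots, n\}$, so in particular $a_{n - |T_i|} \le a_i + \tau$. Combining with monotonicity yields $a_{\pi(i)} \le a_{n - |T_i|} \le a_i + \tau$.

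For the lower bound $a_{\pi(i)} \ge a_i - \tau$, I would run the symmetric argument with
\[
T'_i \defn \{j \in [n] : a_j < a_i - \tau\}.
\]
For every $j \in T'_i$ we have $a_i - a_j > \tau$, so by the hypothesis applied with the roles of $i$ and $j$ exchanged, $\pi(j) < \pi(i)$. This gives $|T'_i|$ distinct values strictly below $\pi(i)$, so $\pi(i) \ge |T'_i| + 1$. Again by monotonicity, $T'_i$ is an initial segment $\{1, \ldots, |T'_i|\}$, so $a_{|T'_i| + 1} \ge a_i - \tau$ and therefore $a_{\pi(i)} \ge a_{|T'_i| + 1} \ge a_i - \tau$. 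The two bounds together yield $|a_{\pi(i)} - a_i| \le \tau$.

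There is no real obstacle here; the argument is a one-step counting/pigeonhole observation combined with monotonicity of $\{a_i\}$. The only minor care needed is to handle the edge cases $T_i = \emptyset$ (where the bound $\pi(i) \le n$ is trivial but still gives $a_{\pi(i)} \le a_n$ together with the observation that if no index has value exceeding $a_i + \tau$, then $a_n \le a_i + \tau$) and $T'_i = \emptyset$ analogously. Off-by-one checks on the indices $n - |T_i|$ and $|T'_i| + 1$ complete the verification.
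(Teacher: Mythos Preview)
Your proof is correct. It differs in organization from the paper's argument: the paper proceeds by contradiction, supposing $a_j - a_{\pi(j)} > \tau$ for some $j$, then using pigeonhole on the initial segment $\{1,\ldots,\pi(j)\}$ to locate an index $i \le \pi(j)$ with $\pi(i) > \pi(j)$, which contradicts the hypothesis since $a_j - a_i \ge a_j - a_{\pi(j)} > \tau$. Your version is direct: you use the hypothesis to sandwich $\pi(i)$ between $|T'_i|+1$ and $n-|T_i|$, and then monotonicity of $\{a_k\}$ together with the fact that $T_i$ and $T'_i$ are terminal and initial segments gives the desired bounds on $a_{\pi(i)}$. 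Both arguments are the same pigeonhole observation at heart; the paper's is a couple of lines shorter, while yours avoids the contradiction and makes the role of the level sets explicit.
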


\paragraph{Proof}
Suppose that $a_j - a_{\pi(j)} > \tau$ for some index $j \in
[n]$. Since $\pi$ is a bijection, there must exist an index $i \le
\pi(j)$ such that $\pi(i) > \pi(j)$. However, we then have $a_j - a_i
\ge a_j - a_{\pi(j)} > \tau$, which contradicts the assumption. A
similar argument shows that $a_{\pi(j)} - a_j > \tau$ also leads to a
contradiction. Therefore, we obtain that $|a_{\pi(j)} - a_j| \le \tau$
for every $j \in [n]$. 
\qed

\smallskip

We split the proof of Lemma~\ref{lem:block-sum} into two cases.
\paragraph{Case 1} First, suppose that $|B| \ge \frac{\dimone \dimtwo}{N}
\log(\dimone \dimtwo)$. In view of the condition 
\begin{align*}
\sum_{\ell \in B}
M^*_{v,\ell} - \sum_{\ell \in B} M^*_{u,\ell} > 2 \eta_B 
\end{align*} 
and the definition of a topological sort, Lemma~\ref{lem:per-num} with $a_i =
\sum_{\ell \in B} M^*_{i,\ell}$, $\pi = \pihat$ and $\tau=2\eta_B$ yields
\begin{align*}
\Big| \sum_{\ell \in B} (M^*_{\pihat(i),\ell} - M^*_{i,\ell} )
\Big| &\le 2 \eta_B \le 96 \varplusone \sqrt{\frac{\dimone \dimtwo}{N}
|B| \log(\dimone \dimtwo)},
\end{align*}
for all $i \in [\dimone]$.
\paragraph{Case 2} Otherwise,
we have $|B| \le \frac{\dimone \dimtwo}{N} \log(\dimone \dimtwo)$. It
then follows that
\begin{align*}
\Big| \sum_{\ell  \in B} (M^*_{\pihat(i),\ell} - M^*_{i,\ell} )
\Big| \le 2 |B| \le 2 \sqrt{\frac{\dimone \dimtwo}{N} |B|
\log(\dimone \dimtwo)},
\end{align*}
where we have used the fact that $M \in [0,1]^{\dimone \times
\dimtwo}$.

Since the columns of $M^*$ are all non-decreasing, we have
\begin{align*}
\sum_{j \in B} \left| M^*_{\pihat(i), j} - M^*_{i,j} \right| = \Big| \sum_{j \in B} (M^*_{\pihat(i), j} - M^*_{i,j}) \Big| ,
\end{align*}
so the proof is complete. 
\qed

\subsubsection{Proof of Lemma~\ref{lem:l2-tv-l1}}
\label{sec:l2-tv-l1}

Let $a = \min_{i\in [n]} v_i$ and $b = \max_{i\in [n]} v_i = a+
\var(v)$.  Since the quantities in the inequality remain the same if
we replace $v$ by $-v$, we assume without loss of generality that $b
\ge 0$. If $a \le 0$, then $\|v\|_\infty \le b-a = \var(v)$. If $a >
0$, then $a \le \|v\|_1/n$ and $\|v\|_\infty = b \le \|v\|_1/n +
\var(v)$. Hence, in any case we have $\|v\|_2^2 \le \|v\|_\infty
\|v\|_1 \le [\|v\|_1/n + \var(v)] \|v\|_1$.
\qed

\subsubsection{Proof of Lemma~\ref{lem:per-rc-bd}}
\label{sec:per-rc-bd}

Since $A$ has increasing rows, for any $i,i_2 \in [n]$ with $i \le
i_2$ and any $j, j_2 \in J_k$, we have
\begin{align*}
A_{i_2,j} - A_{i,j} &= (A_{i_2,j} - A_{i_2,a_k}) + (A_{i_2,a_k} - A_{i,b_k}) + (A_{i,b_k} - A_{i,j}) \\
&\le (A_{i_2,b_k} - A_{i_2,a_k}) + (A_{i_2,j_2} - A_{i,j_2}) + (A_{i,b_k} - A_{i,a_k}) .
\end{align*}
Choosing $j_2 = \arg \min_{r \in J_k} (A_{i_2,r} - A_{i,r})$, we obtain
\begin{align*}
A_{i_2,j} - A_{i,j} \le (A_{i_2,b_k} - A_{i_2,a_k}) + (A_{i,b_k} - A_{i,a_k}) + \frac 1{m_k} \sum_{r \in J_k} (A_{i_2,r} - A_{i,r}) .
\end{align*}
Together with the assumption on $\pi$, this implies that
\begin{align*}
|A_{\pi(i),j} - A_{i,j}| &\le \underbrace{A_{\pi(i),b_k} - A_{\pi(i),a_k}}_{x_{i,k}} + \underbrace{A_{i,b_k} - A_{i,a_k}}_{y_{i,k}} + \frac 1{m_k} \underbrace{\sum_{r \in J_k} |A_{\pi(i),r} - A_{i,r}|}_{z_{i,k}}.
\end{align*}
Hence, it follows that
\begin{align*}
\sum_{i=1}^n \sum_{j=1}^m (A_{i,j} - A_{\pi(i),j})^2 
&= \sum_{i=1}^n \sum_{k=1}^\ell \sum_{j \in J_k} (A_{i,j} - A_{\pi(i),j})^2  \\
&\le \sum_{i=1}^n \sum_{k=1}^\ell \sum_{j \in J_k} |A_{i,j} - A_{\pi(i),j}| ( x_{i,k} + y_{i,k} + z_{i,k}/m_k ) \\
&= \sum_{i=1}^n \sum_{k=1}^\ell z_{i,k} ( x_{i,k} + y_{i,k} + z_{i,k}/m_k ) .
\end{align*}
According to the assumptions, we have
\begin{enumerate}
\item $\sum_{k=1}^\ell x_{i,k} \le 1$ and $\sum_{i=1}^n x_{i,k} \le \taulem$ for any $i \in [n], k \in [\ell]$;
\item $\sum_{k=1}^\ell y_{i,k} \le 1$ and $\sum_{i=1}^n y_{i,k} \le \taulem$ for any $i \in [n], k \in [\ell]$;
\item $z_{i,k} \le \rho_k$ and $\sum_{k=1}^\ell z_{i,k} \le \rho$ for any $i \in [n], k \in [\ell]$.
\end{enumerate}
Consequently, the following bounds hold:
\begin{enumerate}
\item $\sum_{i=1}^n \sum_{k=1}^\ell z_{i,k} x_{i,k} \le \sum_{i=1}^n \sum_{k=1}^\ell \rho_k x_{i,k} \le \taulem \sum_{k=1}^\ell \rho_k$;
\item $\sum_{i=1}^n \sum_{k=1}^\ell z_{i,k} y_{i,k} \le \sum_{i=1}^n \sum_{k=1}^\ell \rho_k y_{i,k} \le \taulem \sum_{k=1}^\ell \rho_k$;
\item $\sum_{i=1}^n \sum_{k=1}^\ell z_{i,k}^2/m_k \le \sum_{i=1}^n \sum_{k = 1}^\ell z_{i,k} \cdot \max_{k \in [\ell]} (\rho_k/m_k) \le n \rho \max_{k \in [\ell]} (\rho_k/m_k)$.
\end{enumerate}
Combining these inequalities yields the claim.
\qed

\section{Poissonization reduction} \label{app:poi}

In this appendix, we show that Poissonization only affects the rates of estimation up to a constant factor. Note that we may assume that
$N \ge 4 \log(\dimone \dimtwo)$, since otherwise, all the bounds in
the theorems hold trivially.

Let us first show that an estimator designed for a Poisson number of samples may be employed for estimation with a fixed number of samples. Assume that $N$ is fixed, and we have an estimator $\Mhat_{\Poi}(N)$, which is designed under $N' = \Poi(N)$ observations $\{y_{\ell}\}_{\ell = 1}^{N'}$. 
Now, given exactly $N$ observations $\{y_{\ell} \}_{\ell = 1}^{N}$ from the model~\eqref{eq:model}, choose an integer $\widetilde{N} = \Poi(N/2)$, and output the estimator
\begin{align*}
\Mhat(N) = 
\begin{cases}
\Mhat_{\Poi} (N/2) & \text{ if } \widetilde{N} \leq N , \\
0 & \text{ otherwise.}
\end{cases}
\end{align*}

Recalling the assumption $N \geq 4 \log (\dimone \dimtwo)$, we have
\begin{align*}
\Pr \{\widetilde{N} \geq N \} \leq e^{-N/2} \leq (\dimone \dimtwo)^{-2}.
\end{align*}
Thus, the error of the estimator $\Mhat(N)$, which always uses at most $N$ samples, is bounded by $\frac{1}{\dimone \dimtwo} \| \Mhat_{\Poi}(N/2) - M^* \|_F^2$ with probability greater than $1 - (\dimone \dimtwo)^{-2}$, and moreover, we have
\begin{align*}
\EE \left[ \frac{1}{\dimone \dimtwo} \| \Mhat(N) - M^* \|_F^2 \right] \leq \EE \left[ \frac{1}{\dimone \dimtwo} \| \Mhat_{\Poi}(N/2) - M^* \|_F^2 \right] + (\dimone \dimtwo)^{-2}.
\end{align*}

We now show the reverse, that an estimator $\Mhat(N)$ designed using exactly $N$ samples may be used to estimate $M^*$ under a Poissonized observation model. 
Given $\widetilde{N} = \Poi(2N)$ samples, define the estimator
\begin{align*}
\Mhat_{\Poi}(2N) =
\begin{cases}
\Mhat(N) & \text{ if } \widetilde{N} \geq N, \\
0 & \text{ otherwise,}
\end{cases} 
\end{align*}
where in the former case, $\Mhat(N)$ is computed by discarding $\widetilde{N} - N$ samples at random.

Again, using the fact that $N \geq 4 \log(\dimone \dimtwo)$ yields 
\begin{align*}
\Pr \{ \widetilde{N} \ge N \} \leq e^{-N} \leq (\dimone \dimtwo)^{-4},
\end{align*}
and so once again, the error of the estimator $\Mhat_{\Poi}(2N)$ is bounded by $\frac{1}{\dimone \dimtwo} \| \Mhat(N) - M^* \|_F^2$ with probability greater than $1 - (\dimone \dimtwo)^{-4}$. A similar guarantee also holds in expectation.

\section{Consequences for cone testing}

Our proof of Theorem~\ref{thm:rowlb} has some consequences for compound hypothesis testing on cones, which may be of independent interest.
We begin by setting up the general framework of cone testing and the notion of the minimax testing radius associated with a compound decision problem~\cite{IngSus12}; our exposition is borrowed from Wei et al.~\cite{WeiGunWai17}.

We are interested in compound testing problems in which the null and alternative lie in nested $d$-dimensional convex cones $C_1$ and $C_2$, respectively, where $C_1 \subset C_2$. 
In order to specify such a problem precisely, we need more notation. 
For a given $\epsilon > 0$, we define the $\epsilon$-fattening of the cone $C_1$ as
\[
\mathbb{B}_2(C_1; \epsilon) \defn \{ \theta \in \real: \inf_{x \in C_1} \| \theta - x \|_2 \leq \epsilon \}.
\]
Given a noisy observation $y$ of a vector $\theta$, we are interested in distinguishing the two hypotheses
\begin{align*}
H_0 : \theta \in C_1 \quad \text{ and } \quad H_1 : \theta \in C_2 \setminus \mathbb{B}_2(C_1; \epsilon). 
\end{align*}
The parameter $\epsilon$ serves to titrate the difficulty of the testing problem. Letting $\psi: \real^d \to \{0, 1\}$ be any (measurable)
test function, we measure its performance in terms of the uniform error
\begin{align*}
\mathcal{E}(\psi; C_1, C_2, \epsilon) \defn \sup_{\theta \in C_1} \E_{\theta}[\psi(y)] + \sup_{\theta \in C_2 \setminus \mathbb{B}_2(C_1; \epsilon) } \E_{\theta}[1 - \psi(y)],
\end{align*}
which gives the worst-case error over both the null and alternative.
For a given error level $\rho \in (0, 1)$, the smallest setting of $\epsilon$ for which there exists a test $\psi$ with uniform error bounded by $\rho$ is called the minimax testing radius at level $\rho$. Often, we prove a lower bound on this minimax radius by considering two carefully constructed simple hypotheses, which induce the distributions $\mathbb{P}_0$ and $\mathbb{P}_1$ on the observations given the null and alternative respectively. For any such choice, we have the lower bound
\begin{align*}
\inf_{\psi} \mathcal{E}(\psi; C_1, C_2, \epsilon) \geq 1 - \TV(\mathbb{P}_0 \,\|\, \mathbb{P}_1).
\end{align*}

With this setup in place, let us also define some cones of interest. The first is  the positive orthant in $d$ dimensions, given by $\mathcal{R}^d_{\geq 0}$. The second is the $d$-dimensional monotone cone $\mathcal{M}^d \defn \left\{ \theta \in \real^d: \theta_1 \leq \theta_2 \leq \cdots \leq \theta_d \right\}$; also define the set
\begin{align*}
\mathcal{M}_{\mathsf{diff}} \defn \left\{ u - v: u, v \in \mathcal{M}^d \right\}.
\end{align*}
We are now ready to state our results for cone testing; for simplicity, we specialize to the standard Gaussian noise model with Poissonized observations. Recall our notation $\Gdist(U)$ defined in Section~\ref{sec:pf-rowlb}.
\begin{proposition} \label{prop:orthcone}
Suppose that $1/d \leq \lambda \leq 1$, and let $L = \{ 0 \}$ and $C = \mathcal{R}^d_{\geq 0}$. 
When each entry of the parameter is independently observed $\Poi(\lambda)$ times and each observation is corrupted by independent standard Gaussian noise, we have
\begin{align*}
\inf_{\psi} \; \mathcal{E}(\psi, L, C, \epsilon ) \geq \frac 12 \qquad \text{ whenever } \qquad \epsilon^2 \leq \frac{1}{16} \sqrt{ \frac{d}{\lambda} }.
\end{align*}
\end{proposition}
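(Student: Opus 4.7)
}

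The plan is to reduce the compound testing problem to a simple-versus-simple problem by constructing a prior $\mathbb{P}_1$ supported on the alternative $C \setminus \mathbb{B}_2(L;\epsilon) = \mathbb{R}^d_{\geq 0} \cap \{\theta : \|\theta\|_2 > \epsilon\}$, and then controlling $\mathrm{TV}(\mathbb{G}(0) \,\|\, \mathbb{E}_{\theta \sim \mathbb{P}_1} \mathbb{G}(\theta))$ via Pinsker's inequality and the $\chi^2$-divergence bound of Lemma~\ref{lem:chi-sq-bd}. This mirrors the construction used in the proof of Theorem~\ref{thm:rowlb}(a), but specialized to a single vector rather than a pair of rows.

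Concretely, set $s = \lceil \sqrt{\lambda d}\, \rceil$ and $\delta = 1/4$, and assume for simplicity that $s$ divides $d$ (else round and absorb losses into the constant $1/16$). Partition the coordinates $[d]$ into $d/s$ contiguous blocks of size $s$. Draw $\theta \sim \mathbb{P}_1$ by, independently within each block, placing a bump of height $\delta$ at a uniformly chosen coordinate and setting the remaining coordinates to zero. Every such $\theta$ lies in $\mathbb{R}^d_{\geq 0}$ and satisfies $\|\theta\|_2^2 = (d/s)\,\delta^2 \geq \tfrac{1}{16}\sqrt{d/\lambda}$, so the support is contained in the alternative whenever $\epsilon^2 \leq \tfrac{1}{16}\sqrt{d/\lambda}$.

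To bound the TV distance, I would use Pinsker's inequality together with the convexity bound $D(\mathbb{E}_{\theta}\mathbb{G}(\theta) \,\|\, \mathbb{G}(0)) \leq \chi^2(\mathbb{E}_{\theta}\mathbb{G}(\theta) \,\|\, \mathbb{G}(0))$. Since $\theta$ is drawn independently across blocks and $\mathbb{G}(0)$ factorizes, the $\chi^2$-divergence decomposes as a sum over blocks:
\begin{align*}
\chi^2\!\left(\mathbb{E}_{\theta}\mathbb{G}(\theta) \,\big\|\, \mathbb{G}(0)\right)
\;=\; \frac{d}{s}\cdot \chi^2\!\left(\mathbb{E}_{w}\mathbb{G}(w) \,\big\|\, \mathbb{G}(0)\right),
\end{align*}
where $w$ is a single-block bump. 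Applying Lemma~\ref{lem:chi-sq-bd} with $r = 1$ (after shifting by the zero vector in place of the all-halves vector, which does not affect the computation since the lemma only uses the differences $\theta^k - \theta^0$) yields a per-block bound of $2\lambda \delta^2/s$. The hypotheses of the lemma require $\lambda \delta^2 \leq 2/5$, which holds since $\lambda \leq 1$ and $\delta = 1/4$. Summing gives $\chi^2 \leq 2 \lambda d \delta^2 / s^2 \leq 1/8$ by our choice of $s$, whence $\mathrm{TV} \leq \tfrac{1}{2}\sqrt{1/4} = 1/4 \cdot \sqrt{2} \leq 1/2$, so $\mathcal{E}(\psi, L, C, \epsilon) \geq 1 - \mathrm{TV} \geq 1/2$.

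The main (very mild) obstacles are cosmetic: handling divisibility of $d$ by $s$ when $\sqrt{\lambda d}$ is not an integer, and verifying that the constant $1/16$ in the claim is robust to these rounding losses. The substantive work, namely the $\chi^2$ computation under Poissonized Gaussian observations of a mixture of sparse bumps, is already encapsulated in Lemma~\ref{lem:chi-sq-bd}, so no new probabilistic estimates are needed.
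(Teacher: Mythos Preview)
Your construction and overall strategy are exactly those of the paper: the same sparse-bump mixture with $\delta=1/4$ and block size $s\asymp\sqrt{\lambda d}$, reduced to the per-block $\chi^2$ bound of Lemma~\ref{lem:chi-sq-bd}. There is, however, one technical slip. The $\chi^2$-divergence does \emph{not} tensorize additively over independent blocks; for product measures one has $1+\chi^2(\otimes_k P_k\,\|\,\otimes_k Q_k)=\prod_k\bigl(1+\chi^2(P_k\,\|\,Q_k)\bigr)$, so your displayed identity $\chi^2=\tfrac{d}{s}\cdot\chi^2(\text{block})$ is false, and even as an inequality it points the wrong way. The fix is to swap the order of the two bounds, exactly as the paper does in its proof of Theorem~\ref{thm:rowlb}(a): first use that the \emph{KL} divergence tensorizes, $D\bigl(\mathbb{E}_\theta\mathbb{G}(\theta)\,\|\,\mathbb{G}(0)\bigr)=\tfrac{d}{s}\,D\bigl(\mathbb{E}_w\mathbb{G}(w)\,\|\,\mathbb{G}(0)\bigr)$, and only then bound the per-block KL by the per-block $\chi^2$ via $D\le\chi^2$. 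With that correction your arithmetic goes through unchanged and matches the paper's proof.
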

\begin{proof}
Consider the equivalent of our mixture $\mathbb{M} \defn \mathbb{M}_{\set}(\minradius)$ used to prove Theorem~\ref{thm:rowlb}(a) in $d$ dimensions, and consider the vector $v - u$ for $[u; v] \sim \mathbb{M}$. In Section~\ref{sec:pf-rowlb}, we proved---note that for $1/d \leq \lambda \leq 1$, we have $\minradius^2 = \frac 1{16} \sqrt{d / \lambda}$---the bound 
\begin{align*}
\TV( \Gdist(v - u) \,\|\, \Gdist(0) ) \leq 1/2.
\end{align*}
This immediately yields the claimed lower bound.
\end{proof}
As discussed before, Proposition~\ref{prop:orthcone} is an extension of Proposition 1 of Wei et al.~\cite{WeiGunWai17} to the missing data setting. Our second proposition covers a cone testing problem in which applying the general theory of Wei et al.~\cite{WeiGunWai17} does not lead to sharp bounds even in the fully observed setting.
\begin{proposition} \label{prop:moncone}
Suppose that $1/d \leq \lambda \leq d$, and let $L = \{ 0 \}$ and $C = \mathcal{M}_{\mathsf{diff}} \cap \real^d_{\geq 0}$. There is an absolute constant $c > 0$ such that when each entry of the parameter is independently observed $\Poi(\lambda)$ times and each observation is corrupted by independent standard Gaussian noise, we have
\begin{align*}
\inf_{\psi} \; \mathcal{E}(\psi, L, C , \epsilon) \geq \frac 12 \qquad \text{ whenever } \qquad \epsilon^2 \leq c \frac{ d^{1/4} }{ \lambda^{3/4} } . 
\end{align*}
\end{proposition}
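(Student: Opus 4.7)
My approach is to transfer the mixture construction from the proof of Theorem~\ref{thm:rowlb}(b) almost verbatim, observing that its output is precisely a random element of the cone $C = \mathcal{M}_{\mathsf{diff}} \cap \real^d_{\geq 0}$. For any $[u; v] \sim \mathbb{M}_{\set}(\minradius)$ produced by the construction in Section~\ref{sec:conjsupp} (with $\dimtwo$ replaced by $d$), the vector $\eta \defn v - u$ lies in $\mathcal{M}^d - \mathcal{M}^d = \mathcal{M}_{\mathsf{diff}}$ since $u$ and $v$ are both non-decreasing, and is entry-wise nonnegative since $u \preceq v$. Thus $\eta$ is supported on $C$. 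Letting $\mathbb{Q}$ denote the observation distribution under the induced mixture on $\eta$ and $\mathbb{P}$ the observation distribution at $\theta = 0$, the usual reduction yields
\[
\inf_{\psi} \mathcal{E}(\psi, L, C, \|\eta\|_2) \geq 1 - \TV(\mathbb{Q} \,\|\, \mathbb{P}),
\]
so that it suffices to verify $\TV(\mathbb{Q} \,\|\, \mathbb{P}) \leq 1/2$ while arranging $\|\eta\|_2^2 \gtrsim d^{1/4}/\lambda^{3/4}$.

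\paragraph{Key computation and parameter choice.} I would bound the total variation through the chi-squared divergence, exploiting that the mixture distribution on $\eta$ is a \emph{product} over the $t/2$ independent odd blocks of the construction (the even blocks are deterministic and common to $\mathbb{Q}$ and $\mathbb{P}$, so they do not contribute). Tensorization of $\chi^2$ across independent coordinates, together with Lemma~\ref{lem:chi-sq-bd} applied to a single block with bump height $\delta t^{-1}$, width $r$, and $s$ possible positions, gives
\[
\chi^2(\mathbb{Q} \,\|\, \mathbb{P}) + 1 = \bigl(1 + \chi^2_{\mathrm{blk}}\bigr)^{t/2} \leq \Bigl(1 + \frac{2 \lambda \delta^2 r}{s t^2}\Bigr)^{t/2} \leq \exp\!\Bigl(\frac{\lambda \delta^2 r^2}{d}\Bigr),
\]
where the last step uses $t = d/(rs)$. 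I would then adopt the same tuning as in Section~\ref{sec:conjsupp}: take $\delta$ a small absolute constant, $r = \lfloor \sqrt{d/\lambda} \rfloor$, and $s = \lceil (\lambda d)^{1/4} \rceil$. The hypothesis $1/d \leq \lambda \leq d$ ensures that $r, s, t \geq 1$, and pins the exponent $\lambda \delta^2 r^2/d$ to an absolute constant smaller than $\log(3/2)$; hence $\chi^2(\mathbb{Q} \,\|\, \mathbb{P}) \leq 1/2$, giving $\TV \leq \sqrt{\chi^2/2} \leq 1/2$ by Pinsker's inequality. Meanwhile, the per-realization squared norm $\|\eta\|_2^2 = \delta^2 r^2 s/(2d) \gtrsim d^{1/4}/\lambda^{3/4}$ is deterministic under this construction, supplying the claimed lower bound on $\epsilon^2$.

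\paragraph{Main obstacle.} The computation itself is essentially a re-use of Lemma~\ref{lem:chi-sq-bd} and the block tensorization already implicit in Section~\ref{sec:conjsupp}, so the only non-routine step is the conceptual one: recognizing that the mixture distribution over $\eta$ factorizes as a product across odd blocks (because the random positions $J_k$ are drawn independently), which is what allows $\chi^2$ to tensorize cleanly. The change of viewpoint from testing $[u; v]$ against $[v; u]$ (as in Theorem~\ref{thm:rowlb}(b)) to testing $\eta$ against $0$ actually sharpens matters: it avoids the factor-of-two triangle-inequality loss in the earlier proof, letting the same parameter scaling carry through to the cone-testing rate without modification. The remaining minor book-keeping concerns integrality of $r$ and $s$ and verifying that the construction genuinely produces $\eta \in C$ at the boundary of the assumed parameter range, both of which are handled by the floor/ceiling rounding.
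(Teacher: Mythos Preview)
Your proposal is correct and follows essentially the same route as the paper, which simply observes that the mixture built in Section~\ref{sec:conjsupp} already yields a random element of $C=\mathcal{M}_{\mathsf{diff}}\cap\real^d_{\ge 0}$ and that the $\chi^2$ bound established there (via Lemma~\ref{lem:chi-sq-bd} and block independence) gives $\TV\le 1/2$ with $\minradius^2\asymp d^{1/4}/\lambda^{3/4}$. One cosmetic remark: the inequality $\TV\le\sqrt{\chi^2/2}$ is Pinsker combined with $\KL\le\chi^2$, not Pinsker alone.
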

\begin{proof}
The proof of this proposition follows as a corollary of our proof of Theorem~\ref{thm:rowlb}(b), since in proving the theorem, we established a lower bound on the minimax testing radius of this exact problem, with $\minradius^2 = c \, d \left( \lambda d \right)^{-3/4}$ valid for all $1/d \leq \lambda \leq d$.
\end{proof}

\begin{remark}
Note that we have set the probability of error $\rho$ to be equal to $1/2$ in stating the propositions above. Our technique also accommodates a generalization to arbitrary $\rho$ by a simple modification of the scalar $\delta$ in the respective packing constructions.
\end{remark}

\bibliographystyle{alpha}
\bibliography{fast_sst}

\end{document}